
\documentclass[journal]{IEEEtran}

\usepackage{amsmath,amscd}
\usepackage{bbm}

\pdfcompresslevel=2
\pdfminorversion=4

\usepackage{dsfont}
\usepackage{bm}
\usepackage{amsthm,array}
\usepackage{amsmath}
\usepackage{amsfonts,latexsym}
\usepackage{graphicx,wrapfig}
\usepackage{subfig}
\usepackage{times}
\usepackage{psfrag,epsfig}
\usepackage{verbatim}
\usepackage{tabularx}
\usepackage[hidelinks]{hyperref}
\usepackage{color}
\usepackage{soul}
\usepackage{indentfirst}
\usepackage{inputenc}
\usepackage[capitalise]{cleveref}
\usepackage{courier}
\usepackage{authblk}
\usepackage{mathtools}
\usepackage{mathtools}
\usepackage{cite}
\usepackage{amssymb}
\usepackage{graphicx}
\usepackage{caption}
\usepackage{dblfloatfix}
\usepackage{lineno}
\usepackage{multirow}
\usepackage{algorithm,algpseudocode}
\usepackage{ltablex}
\usepackage{relsize}

\usepackage{enumitem}
\usepackage{xtab}
\usepackage{xfrac}
\usepackage{amssymb}             
\usepackage{soul}
\usepackage{flushend}

\xentrystretch{-0.07}

\DeclareMathOperator*{\rank}{rank}
\DeclareMathOperator*{\trace}{\mathrm{trace}}

\makeatletter
\def\mathcenterto#1#2{\mathclap{\phantom{#1}\mathclap{#2}}\phantom{#1}}
\let\old@widetilde\widetilde
\def\widetildeto#1#2{\mathcenterto{#2}{\old@widetilde{\mathcenterto{#1}{#2\,}}}}
\let\old@widehat\widehat
\def\widehatto#1#2{\mathcenterto{#2}{\old@widehat{\mathcenterto{#1}{#2\,}}}}
\def\mathcenterto#1#2{\mathclap{\phantom{#1}\mathclap{#2}}\phantom{#1}}
\let\old@underline\underline
\def\underlineto#1#2{\mathcenterto{#2}{\old@underline{\mathcenterto{#1}{#2\,}}}}
\makeatother

\def\0{\boldsymbol{0}}
\def\1{\boldsymbol{1}}
\def\a{\boldsymbol{a}}

\def\D{\boldsymbol{\text{D}}}

\def\S{\mathbb{S}}

\def\I{\mathbf{I}}
\def\Zero{\mathbf{O}}
\def\d{\text{d}}

\def\i{\mathbf{i}}

\def\rot{z} 
\def\tran{c} 
\def\lmk{s} 
\def\tM{\widetildeto{Q}{M}}

\def\transpose{\top} 

\def\R{\mathbb{R}}

\def\C{\mathbb{C}}

\def\H{\mathbb{H}}
\def\P{\mathbb{P}}

\def\VV{\mathcal{V}}

\def\se3{\mathfrak{se}(3)}

\def\diag{\mathrm{diag}}
\def\ddiag{\mathrm{ddiag}}

\def\ux{{\underline{x}}}
\def\tx{{\tilde{x}}}
\def\hx{{\hat{x}}}

\def\EE{\mathcal{E}}
\def\aEE{\overrightarrow{\EE}}
\def\aEEp{\overrightarrow{\EE'}}
\def\aGG{\overrightarrow{G}}

\long\def\answer#1{}

\long\def\comment#1{}

\newcommand{\ea}{{et al.~}}

\newcommand{\innprod}[2]{\langle {#1},{#2}\rangle}

\newcommand{\ssf}[1]{{\small \sf #1}}

\newcommand{\tentimes}[2]{{#1}\times 10^{#2}}

\theoremstyle{definition}

\newtheorem{prop}{Proposition}

\newtheorem{lemma}{Lemma}

\crefname{prop}{Proposition}{Propositions}

\theoremstyle{remark}
\newtheorem*{remark*}{{Remark}}

\newcolumntype{P}[1]{>{\centering\arraybackslash}p{#1}}
\newcolumntype{M}[1]{>{\centering\arraybackslash}m{#1}}

\captionsetup[subfigure]{subrefformat=simple,labelformat=simple,listofformat=subsimple}

\newcommand{\RNum}[1]{\uppercase\expandafter{\romannumeral #1\relax}}

\hypersetup{
    bookmarks=true,         
    unicode=false,          
    pdftoolbar=true,        
    pdfmenubar=true,        
    pdffitwindow=false,     
    pdfstartview={FitH},    
    pdftitle={CPL-SLAM: Efficient and Certifiably Correct Planar Graph-Based SLAM Using the Complex Number Representation},    
    pdfauthor={Taosha Fan},     
    pdfsubject={robotics},   
    pdfcreator={Taosha Fan},   
    pdfproducer={Taosha Fan}, 
    pdfkeywords={SLAM} {optimization} {robotics}, 
    pdfnewwindow=true,      
    colorlinks=true,       
    linkcolor=blue,          
    citecolor=red,        
    filecolor=magenta,      
    urlcolor=cyan           
}

\IEEEoverridecommandlockouts                              


\title{\LARGE \bf  CPL-SLAM: Efficient and Certifiably Correct Planar Graph-Based SLAM Using the Complex Number Representation}

\author{Taosha Fan, Hanlin Wang, Michael Rubenstein and Todd Murphey
	\thanks{Taosha Fan and Todd Murphey are with the Department of Mechanical Engineering, Northwestern University, Evanston, IL 60201, USA. E-mail: {\tt taosha.fan@u.northwestern.edu, t-murphey@northwestern.edu}\\
	\indent Hanlin Wang is with the Department of  Computer Science, Northwestern Univerity, Evanston, IL 60201, USA. E-mail: {\tt hanlinwang@u.northwestern.edu}\\
	\indent Michael Rubenstein is with the Departments of Computer Science and Mechanical Engineering, Northwestern Univerity, Evanston, IL 60201, USA. E-mail: {\tt rubenstein@northwestern.edu}
}
}

\begin{document}
\maketitle
\thispagestyle{empty}
\pagestyle{empty}

\begin{abstract}
In this paper, we consider the problem of planar graph-based simultaneous localization and mapping (SLAM) that involves both poses of the autonomous agent and positions of observed landmarks. We present CPL-SLAM, an efficient and certifiably correct algorithm to solve planar graph-based SLAM using the complex number representation. We formulate and simplify planar graph-based SLAM as the maximum likelihood estimation (MLE) on the product of unit complex numbers, and relax this nonconvex quadratic complex optimization problem to convex complex semidefinite programming (SDP). Furthermore, we simplify the corresponding complex semidefinite programming to Riemannian staircase optimization (RSO) on the complex oblique manifold that can be solved with the Riemannian trust region (RTR) method. In addition, we prove that the SDP relaxation and RSO simplification are tight as long as the noise magnitude is below a certain threshold. The efficacy of this work is validated through  applications of CPL-SLAM and comparisons with existing state-of-the-art methods on planar graph-based SLAM, which indicates that our proposed algorithm is capable of solving planar graph-based SLAM  certifiably, and is more efficient in numerical computation and more robust to measurement noise than existing state-of-the-art methods. The C++ code for CPL-SLAM is available at \url{https://github.com/MurpheyLab/CPL-SLAM}.
\end{abstract}

\section{Introduction}
Simultaneous localization and mapping (SLAM) estimates poses of an autonomous agent and positions of observed landmarks from noisy measurements \cite{Cadena16tro-SLAMsurvey,thrun2005probabilistic,taketomi2017visual}.  For an autonomous agent, the ability to construct a map of the environment and concurrently estimate its location within the map is essential to navigation and exploration in unknown scenarios, such as autonomous driving \cite{geiger2012we}, disaster response \cite{kleiner2006rfid}, underwater exploration \cite{kinsey2006survey}, precision agriculture \cite{dong2017agriculture}, floor plan building \cite{vysotska2016exploiting}, virtual and augmented reality \cite{polvi2016slidar}, to name a few. An intuitive way to formulate SLAM problems is to use a graph whose vertices are associated with either poses of the autonomous agent or positions of observed landmarks and whose edges are associated with the available measurements \cite{grisetti2010tutorial}. In graph-based SLAM, the estimation problem is usually addressed as a difficult nonconvex optimization problem that involves up to thousands of variables and constraints, and the procedure of solving the optimization problem greatly affects the overall performance of estimation. Even though a number of optimization methods have been developed \cite{bertsekas1999nonlinear,Cadena16tro-SLAMsurvey,grisetti2010tutorial}, it is generally NP-hard to solve a nonconvex optimization problem globally \cite{bertsekas1999nonlinear}, and it is common to get stuck at local minima in solving graph-based SLAM, which results in bad estimates.

In robotics, most graph-based SLAM techniques rely on local search methods for nonlinear optimization to  estimate poses of the autonomous agent and positions of observed landmarks. Lu and Milios \cite{lu1997globally} formulate SLAM as pose graph optimization (PGO) and use iterative nonlinear optimization methods to solve PGO. Duckett and Frese \ea \cite{duckett2002fast,frese2005multilevel} exploit the sparsity of graph-based SLAM and propose relaxations for the resulting nonlinear optimization problem. Olson \ea \cite{olson2006fast} propose a stochastic gradient descent on an alternative state space representation of graph-based SLAM that has good stability and scalability. Grisetti \ea \cite{grisetti2007tree} extend Olson's work by presenting a novel tree parametrization that improves the convergence of stochastic gradient descent. Fan and Murphey \cite{fan2019proximal} propose an accelerated proximal method for pose graph optimization. Dellaert and Kaess \ea \cite{dellaert2006square,kaess2008isam,kaess2012isam2,rosen2014rise} propose incremental smoothing algorithms that enable online updates of large-scale  graph-based SLAM with nonlinear optimization. Huang and Wang \ea \cite{huang2010far,wang2013structure} study the least-square structure of graph-based SLAM and indicate the possibility of reducing the nonlinearity and nonconvexity of SLAM. K{\"u}mmerle et al. \cite{kummerle2011g} propose $\mathrm{g^2o}$ framework that solves graph-based SLAM using Gauss-Newton method. Carlone \ea \cite{carlone2014angular,carlone2014fast} propose approximations for planar pose graph optimization that reduce the risks of getting stuck at local minima. Khosoussi \ea \cite{khosoussi2015exploiting} exploit the separable structure of SLAM problems using variable projection and propose algorithms to improve the efficiency of Gauss-Newton methods. However, all of the aforementioned nonlinear optimization techniques are local search methods, and as a result, there are no guarantees of the correctness for the resulting solutions.

To address the issues of local minima in nonlinear optimization, several efforts have been made to relax graph-based SLAM as convex optimization problems. Liu \ea \cite{liu2012convex} propose a suboptimal convex relaxation to solve SLAM
problems. Carlone \ea \cite{carlone2015duality,carlone2016planar,carlone2015lagrangian} propose a tight semidefinite
relaxation and analyze its optimality using Lagrangian duality. Briales \ea \cite{briales2016fast} present a fast method for the optimality verification of 3D PGO based on \cite{carlone2015lagrangian}. A further breakthrough of the semidefinite relaxation of PGO is made in \cite{rosen2016se} that results in a fast and certifiable algorithm for pose graph optimization. Rosen \ea propose SE-Sync to solve the semidefinite relaxation of PGO using Riemannian staircase optimization on the Stiefel manifold that is orders of magnitude faster than interior point methods \cite{rosen2016se}. Furthermore, it is shown in  \cite{rosen2016se} that the semidefinite relaxation of PGO is tight as long as the magnitude of the measurement noise is below a certain threshold. Similar to SE-Sync, Briales \ea \cite{briales2017cartan} propose Cartan-Sync that uses the Cartan motion group and introduce a novel preconditioner to accelerate the algorithm. Mangelson \ea \cite{mangelson2018guaranteed} formulate planar pose graph and landmark SLAM using sparse-bounded sum of squares programming that is guaranteed to find the globally optimal solution regardless of the noise level.

In fields other than robotics, problems such as angular and rotation synchronization that share a similar mathematical formulation with graph-based SLAM have been extensively studied. Singer \ea \cite{singer2011angular,singer2011three} propose semidefinite relaxations to solve angular and rotation synchronization by finding the eigenvectors that correspond to the greatest eigenvalues. Bandeira \ea \cite{bandeira2017tightness} prove the tightness of the semidefinite relaxation of angular synchronization and show that the Riemannian staircase optimization is significantly more scalable to solve the resulting problem. Boumal \cite{boumal2016nonconvex} proposes the generalized power method that can recover the globally optimal solution to angular synchronization. Eriksson \ea \cite{eriksson2018rotation} explore the role of strong duality in rotation averaging, which has important applications in computer vision.

In applied mathematics, it is common to use unit complex numbers in synchronization problems over $SO(2)$ \cite{bandeira2017tightness,boumal2016nonconvex}. In robotics, it is not new to use the complex number representation in planar robot localization and mapping problems, either. Betke \ea \cite{betke1997mobile} use the complex number to represent positions of landmarks to localize a mobile robot with bearing measurements. Carlone \ea \cite{carlone2016planar} use the complex number representation of $SO(2)$ and $SE(2)$ to verify the optimality of planar PGO and the tightness of semidefinite relaxations, and the analysis is much clearer and simpler than that using the matrix representation, and to our knowledge, this is the first implementation of the complex number representation in planar PGO.

In general, a certifiably correct algorithm for an optimization problem not only finds a solution to the problem, but also is capable of certifying the global optimality of the resulting solution \cite{bandeira2016note}. For many estimation problems, it is usually intractable to attain a globally optimal solution and we have to either solve these problems using local search methods, or relax them to a more reasonable formulation. As a result, the certifiable correctness of the algorithm is important for estimation problems in which globally optimal solutions are preferred. Even though a number of optimization methods are proposed to planar graph-based SLAM, to our knowledge, only \cite{rosen2016se,carlone2015lagrangian,carlone2016planar,mangelson2018guaranteed,carlone2015duality,briales2017cartan,briales2016fast} are certifiably correct.

In this paper, we consider the problem of planar graph-based SLAM that involves both poses of the autonomous agent and positions of observed landmarks. We present CPL-SLAM, which means the \textbf{C}om\textbf{PL}ex number \textbf{S}imultaneous \textbf{L}ocalization \textbf{A}nd \textbf{M}apping, an efficient and certifiably correct algorithm to solve planar graph-based SLAM using the complex number representation.

This paper is built upon the works of \cite{rosen2016se,bandeira2017tightness,carlone2016planar} that use the complex number representation, the semidefinite relaxation and the Riemannian staircase optimization \cite{boumal2016non} to efficiently and certifiably correctly solve large-scale estimation problems. In \cite{carlone2016planar}, Carlone \ea were first to formulate planar PGO using the complex number representation of $SE(2)$, in which the optimality and tightness of the semidefinite relaxation are studied; in CPL-SLAM, we use the same representation of $SE(2)$ as the one in \cite{carlone2016planar}. In \cite{rosen2016se}, Rosen \ea analyze the optimality and tightness of the semidefinite relaxation of PGO and introduce the Riemannian staircase optimization to solve the semidefinite relaxation, which, though using the matrix representation, motivates this paper. In \cite{bandeira2017tightness}, Bandeira \ea prove the tightness of semidefinite relaxation of phase synchronization on $SO(2)$ using the complex number representation, which is helpful to our theoretical analysis of  CPL-SLAM.

In graph-based SLAM, poses are special Euclidean groups $SE(d)$ \cite{grisetti2010tutorial,rosen2016se,rosen2014rise,kummerle2011g,briales2017cartan}, which are isomorphic to a semidirect product of real space $\R^d$ and special orthogonal groups $SO(d)$ \cite{chirikjian2011stochastic}. In general, it is possible to identify $SE(d)$ as a pair $(t,\,R)$, in which the translation is represented as a real vector $t$ in $\R^d$ and the rotation is represented as a matrix $R$ in $SO(d)$, and such an identification results in the matrix representation of $SE(d)$ that is commonly used in robotics. However, for planar graph-based SLAM, the matrix representation of $SO(2)$ is redundant, which needs four real numbers, whereas a unit complex number that can be represented with two real numbers is sufficient to capture the topological and geometric structures of $SO(2)$ \cite{selig2004geometric,carlone2016planar}. Furthermore, as is later shown in this paper, the complex number representation of $SO(2)$ and $SE(2)$ brings significant analytical and computational benefits, and renders the resulting CPL-SLAM algorithm a lot more efficient in numerical computation and much more robust to measurement noise. As a result, the CPL-SLAM outperforms existing methods of planar graph-based SLAM in terms of both numerical scalability and theoretical guarantees.

In contrast to the state-of-the-art local search methods in \cite{lu1997globally,duckett2002fast,frese2005multilevel,olson2006fast,grisetti2007tree,dellaert2006square,kaess2008isam,kaess2012isam2,rosen2014rise,kummerle2011g,huang2010far,wang2013structure,carlone2014fast,carlone2014angular}, CPL-SLAM is a lot faster and capable of certifying the correctness of the solutions. As for \cite{rosen2016se,carlone2016planar,mangelson2018guaranteed} that also seek to use convex relaxation to certifiably correctly solve graph-based SLAM, CPL-SLAM has better scalability, {and more explicitly, CPL-SLAM is expected to be several orders of magnitude faster than \cite{carlone2016planar,mangelson2018guaranteed}} and several times faster than \cite{rosen2016se}. Moreover, \cite{carlone2016planar,rosen2016se} are designed for pose-only problems, whereas CPL-SLAM extends the works of \cite{carlone2016planar,rosen2016se} by accepting pose-landmark measurements. Even though \cite{carlone2016planar} uses complex semidefinite relaxation to verify the optimality and tightness of planar pose graph optimization, we present stronger, more complete and more concise theoretical results and more scalable algorithms to solve the complex semidefinite relaxation. Furthermore, the conciseness of the complex number representation makes the semidefinite relaxation in CPL-SLAM much tighter than that in \cite{rosen2016se} using the matrix representation, and thus, CPL-SLAM has greater robustness to measurement noise.

This paper extends the preliminary results of \cite{fan2019cpl_iros} in which we only use the complex number representation to solve planar pose graph optimization without landmarks. In this paper, our proposed CPL-SLAM uses the graph-based SLAM measurement model in \cite{fan2019cpl_iros,briales2017cartan,rosen2016se} and can handle planar graph-based SLAM with landmarks. Similar to \cite{fan2019cpl_iros,rosen2016se,briales2017cartan} for pose graph optimization, CPL-SLAM is a certifiable algorithm that is guaranteed to attain the globally optimal solution to planar graph-based SLAM with landmarks as long as the magnitude of measurement noise is below a certain threshold. Furthermore, even though it is not new to involve landmarks in graph-based SLAM \cite{grisetti2007tree,grisetti2010tutorial,dellaert2012factor}, we propose a novel preconditioner making better use of translation and landmark information in planar graph-based SLAM. As a result, the performance of the truncated conjugate gradient method is improved. In addition, we also provide additional proofs of lemmas and propositions, extensive experimental results on numerous datasets and much more detailed discussions, all of which are not covered in \cite{fan2019cpl_iros}.

In summary, the contributions of this paper are the  following:
\begin{enumerate}
	\item We formulate planar graph-based SLAM with poses of the autonomous agent and positions of observable landmarks using the complex number representation and simplify the resulting estimation problem as an optimization problem on the product of unit complex numbers.
	\item We relax the nonconvex optimization problem as complex semidefinite programming and prove that the complex semidefinite relaxation is tight as long as the magnitude of measurement noise is below a certain threshold.
	\item We recast the complex semidefinite programming as a series of rank-restricted complex semidefinite programming on complex oblique manifolds that can be efficiently solved with the Riemannian staircase optimization \cite{boumal2016non}, and it is almost guaranteed to retrieve the true solution to the complex semidefinite programming if the rank of the Riemannian staircase optimization is appropriately selected.
	\item The resulting CPL-SLAM algorithm is certifiably correct, and more importantly, a lot faster in numerical computation and much more robust to measurement noise than existing state-of-the-art methods \cite{lu1997globally,duckett2002fast,frese2005multilevel,olson2006fast,grisetti2007tree,dellaert2006square,kaess2008isam,kaess2012isam2,rosen2014rise,kummerle2011g,huang2010far,wang2013structure,carlone2014fast,carlone2014angular,rosen2016se,carlone2016planar,dellaert2012factor}.
\end{enumerate}

The rest of this paper is organized as follows. \cref{section:: notation} introduces notations that are used throughout this paper. \cref{section::complex} reviews the complex number representation of $SO(2)$ and $SE(2)$. \cref{section::formulation} formulates planar graph-based SLAM with poses of the autonomous agent and positions of observable landmarks using the complex representation and \cref{section::sdp} relaxes planar graph-based SLAM to complex semidefinite programming. \cref{section::cpl} presents the CPL-SLAM algorithm to solve planar graph-based SLAM. \cref{section::result} presents and discusses comparisons of CPL-SLAM with existing methods \cite{dellaert2012factor,rosen2014rise,rosen2016se} on a series of simulated \textsf{\small Tree} and \textsf{\small City} datasets and a suite of large 2D simulated and real-world SLAM benchmark datasets. The conclusions are made in \cref{section::conclusion}.

\section{Notation}\label{section:: notation}
$\R$ and $\C$ denote the sets of real and complex numbers, respectively; $\R^{m\times n}$ and $\C^{m\times n}$ denote the sets of $m\times n$ real and complex matrices, respectively; $\R^{n}$ and $\C^{n}$ denote the sets of $n\times 1$ real and complex vectors, respectively. $\C_1$ and $\C_1^n$ denote the sets of unit complex numbers and $n\times 1$ vectors over unit complex numbers, respectively. $\P$ denotes the group of $(\C,+)\rtimes (\C_1,\cdot)$ in  which ``$\rtimes$'' denotes the semidirect product of groups \cite{chirikjian2011stochastic} under complex number addition ``$+$'' and multiplication ``$\cdot$''. $\S^n$ and $\H^{n}$ denote the sets of $n\times n$ real symmetric matrices and complex Hermitian matrices, respectively. The notation ``$\i$'' is reserved for the imaginary unit of complex numbers. The notation \mbox{$|\cdot|$} denotes the absolute value of real and complex numbers, and the notation $\overline{(\cdot)}$ denote the conjugate of complex numbers. The superscripts $(\cdot)^\transpose $ and $(\cdot)^H$ denote the transpose and conjugate transpose of a matrix, respectively. For a complex matrix $W$, $[W]_{ij}$ denotes its $(i,j)$-th entry; the notations $\Re(W)$ and $\Im(W)$ denote real matrices such that $W=\Re(W)+\Im(W)\i$; $W\succcurlyeq 0$ means that $W$ is Hermitian and positive semidefinite; $\trace(W)$ denotes the trace of $W$; $\diag(W)$ extracts the diagonal of $W$ into a vector and $\ddiag(W)$ sets all off-diagonal entries of $W$ to zero; the notations $\|W\|_F$ and $\|W\|_2$ denote the Frobenius norm and the induced-$2$ norm, respectively. The notation $\innprod{\cdot}{\cdot}$ denotes the real inner product of matrices. For a vector $v$, the notation $[v]_i$ denotes its $i$-th entry; $\|v\|^2=\|v\|_2^2=\sqrt{\sum_{i}|[v]_i|^2}=\sqrt{v^Hv}$; the notation $\diag(v)$ denotes the diagonal matrix with $\big[\diag(v)\big]_{ii}=v_i$. The notation $\1\in\C^n$ denotes the vector of all-ones. The notation $\0\in\C^n$ denotes the vector of all-zeros. The notation $\I\in\C^{n\times n}$ denotes the identity matrix. The notation $\Zero\in\C^{n\times n}$ denotes the zero matrix. For a hidden parameter $x$ whose value we wish to infer, the notations $\ux$, $\tx$ and $\hx$ denote the true value of $x$, a noisy observation of $x$ and an estimate of $x$, respectively.

\section{The Complex Number Representation of $SO(2)$ and $SE(2)$}\label{section::complex}
In this section, we give a brief review of $SO(2)$ and $SE(2)$, and show that $SO(2)$ and $SE(2)$ can be represented using complex numbers. It should be noted that the complex number representation used in this paper, though presented in a different way, is in fact equivalent to that in \cite{carlone2016planar}.\par

It is known that the set of unit complex numbers $$\C_1\triangleq\{a_1+a_2\i\in \C|a_1^2+a_2^2=1\}$$ forms a group under complex number multiplication ``$\cdot$'' for which the identity is $1$ and the inverse is the conjugate, i.e., for $\rot,\rot'\in \C_1$, we obtain  \cite{selig2004geometric}
$$\rot\cdot \rot'\in \C_1,\quad 1\cdot \rot=\rot\cdot 1=\rot, \quad \rot\cdot\overline{\rot}=\overline{\rot}\cdot \rot=1.$$
In addition, the group of unit complex numbers $(\C_1, \cdot)$ is diffeomorphic and isomorphic to the matrix Lie group $SO(2)$:
$$ 
\begin{aligned}
SO(2)&\triangleq\{\begin{bmatrix}
a_1 & -a_2\\a_2 & a_1
\end{bmatrix}\in \R^{2\times 2}|a_1^2 + a_2^2=1 \}\\
&\triangleq\{R\in\R^{2\times2}| R^\transpose R=\I, \,\det(R)=1\}
\end{aligned}
$$
under matrix multiplication. As a result, $SO(2)$ can be represented using unit complex numbers $\C_1$. More explicitly, if $R\in SO(2)$ is
\begin{equation}\label{eq::R}
R=\begin{bmatrix}
a_1 & -a_2\\
a_2 & a_1\\
\end{bmatrix} =\begin{bmatrix}
\cos\theta & -\sin\theta\\
\sin\theta & \cos\theta
\end{bmatrix},
\end{equation}
the corresponding unit complex number representation $\rot\in \C_1$ is
\begin{equation}\label{eq::multi}
\rot =a_1+a_2\i=e^{\i\theta} =\cos\theta + \sin\theta \i
\end{equation}
in which $e^{\i\theta }=\cos\theta + \sin\theta \i.$ Furthermore, if $b'=\begin{bmatrix}
b'_1 & b'_2
\end{bmatrix}^\transpose \in \R^2$ is rotated by  $R\in SO(2)$ in \cref{eq::R} from $b=\begin{bmatrix}
b_1 & b_2
\end{bmatrix}^\transpose \in \R^2 $, i.e.,
$$b'_1=a_1b_1-a_2b_2=b_1\cos\theta -b_2\sin\theta , $$
$$b'_2=a_1b_2+a_2b_1=b_2\cos\theta +b_1\sin\theta ,$$
we obtain
\begin{subequations}\label{eq::trans}
	\begin{equation}
	\lmk'= \rot\cdot\lmk=\underbrace{a_1b_1-a_2b_2}_{b_1'}+(\underbrace{a_1b_2+a_2b_1}_{b_2'})\i,
	\end{equation}
	or equivalently,
	\begin{equation}
	\begin{aligned}
	\lmk'&= \rot\cdot\lmk= e^{\i\theta}\cdot \lmk\\
			  &=\underbrace{b_1\cos\theta -b_2\sin\theta}_{b_1'}+(\underbrace{b_2\cos\theta +b_1\sin\theta}_{b_2'})\i,
	\end{aligned}
	\end{equation}
\end{subequations}
in which $\rot$ is a unit complex number as that given in \cref{eq::multi}, and
\begin{equation}\label{eq::b}
\lmk=b_1+b_2\i\quad\mathrm{and}\quad\lmk'=b'_1+b'_2\i
\end{equation}
are the complex number representation of $b$ and $b'$, respectively. As a result, rotating a vector can also be described using the complex number representation.

In general, the special Euclidean group $SE(2)$ is the matrix Lie group
\begin{equation}\label{eq::semi}
SE(2)\triangleq\{\begin{bmatrix}
R & t\\
\0 & 1
\end{bmatrix}\in \R^{3\times 3}| R\in SO(2),\, t\in \R^2\},
\end{equation}
whose group multiplication is matrix multiplication. In terms of group theory, $SE(2)$ is also represented as the semidirect product of $(\R^2, +)$ and $SO(2)$:
$$SE(2)\triangleq (\R^2,+)\rtimes SO(2),$$
in which ``$\rtimes$'' denotes the semidirect product of groups under vector addition and matrix multiplication \cite{chirikjian2011stochastic}  and whose group multiplication ``$\circ$'' using the matrix representation of \cref{eq::semi} is defined to be
\begin{equation}\label{eq::gg}
g\circ g' = (Rt'+t, RR'),
\end{equation}
in which $g=(t, R),\,g'=(t',R')\in SE(2)$.\footnote{In group theory, the definition of the semidirect product relies on the choice of the group multiplication rule.} Following the complex number representation of $SO(2)$ and $\R^2$, the representation of $SE(2)$ as \cref{eq::semi} is diffeomorphic and isomorphic to the semidirect product of $(\C,+)$ and $(\C_1,\cdot)$: 
$$\P\triangleq(\C,+)\rtimes (\C_1,\cdot),$$
whose group multiplication ``$\odot$'' is defined to be
\begin{equation}\label{eq::qmulti}
\rho\odot \rho' = (\rot\cdot\tran'+\tran,\rot\cdot \rot')\in \P,
\end{equation}
in which $\rho=(\tran,\rot)$, $\rho'=(\tran', \rot')\in \P$. In \cref{eq::qmulti}, $\rot,\,\rot'\in\C_1$ and $\tran,\,\tran'\in\C$ are the complex number representation of $R,\,R'\in SO(2)$ and $t,\,t'\in\R^2$, respectively, which follow the same representation as that in \cref{eq::multi,eq::b}. It is obvious from \cref{eq::multi,eq::trans} that the group multiplication of $\rho\odot \rho'$ in \cref{eq::qmulti} is equivalent to that of $g\circ g'$ in \cref{eq::gg}.  Furthermore, the identity of $\P$ is $(0,1)\in \P$ and the inverse of $\rho=(\tran,\rot)\in \P$ is
\begin{equation}\label{eq::qinv}
\rho^{-1}=(-\overline{\rot}\cdot\tran,\overline{\rot})\in \P.
\end{equation}
As a result, instead of using the matrix representation, we represent $SE(2)$ with a 2-tuple of complex numbers. In addition, if $b'\in\R^2$ is transformed by $g\in SE(2)$ from $b\in \R^2$, we obtain
\begin{equation}
\nonumber
	\lmk' = \rot\cdot\lmk + \tran, 
\end{equation}
in which $\rho=(\tran,\,\rot)\in \P$ is the complex number representation of $g\in SE(2)$, and $\lmk$ and $\lmk'$ are the complex number representation of $b$ and $b'$, respectively.

For notational convenience, in the rest of paper, we will omit the complex number multiplication ``$\cdot$'' if there is no ambiguity. 

In terms of the computation of group multiplication and transformation only, the complex number representation of $SO(2)$ and $SE(2)$ has the same complexity as the matrix representation. In spite of this, as shown in the following sections, the complex number representation greatly simplifies the analysis for planar graph-based SLAM, and most importantly, the semidefinite relaxation and Riemannian optimization of planar graph-based SLAM using the complex number representation is simpler for problem formulation, more efficient in numerical computation and more robust to measurement noise than that using the matrix representation in \cite{rosen2016se}.

In the following sections, we will use the complex number representation of $SO(2)$ and $SE(2)$ to formulate and solve planar graph-based SLAM.
\section{Problem Formulation and Simplification}\label{section::formulation}
In this section, we formulate planar graph-based SLAM as maximum likelihood estimation, and further simplify it to complex quadratic programming on the product of unit complex numbers. 
\subsection{Problem Formulation}
Planar graph-based SLAM consists of estimating $n$ unknown poses $g_1$, $g_2$, $\cdots$, $g_n\in SE(2)$, in which $g_{(\cdot)}=(t_{(\cdot)}, R_{(\cdot)})$ with $t_{(\cdot)}\in\R^2$ and $R_{(\cdot)}\in SO(2)$, and $n'$ landmark positions $l_1$, $l_2$, $\cdots$, $l_{n'}\in \R^2$ given $m$ noisy pose-pose measurements $\tilde{g}_{ij}\in SE(2)$ of
\begin{equation}
\nonumber
g_{ij}\triangleq g_i^{-1}g_j\in SE(2)
\end{equation}
and $m'$ noisy pose-landmark measurements $\tilde{l}_{ij}\in \R^2$ of
\begin{equation}
\quad l_{ij}\triangleq R_{i}^\transpose(l_j-t_i)\in\R^2.
\end{equation}
From \cref{section::complex}, the problem is equivalent to estimating $n$ 2-tuples of complex numbers $\rho_{1}$, $\rho_2$, $\cdots$, $\rho_n\in \P$, in which $\rho_{(\cdot)}=(\tran_{(\cdot)},\rot_{(\cdot)})\in\P$ with $\tran_{(\cdot)}\in \C$ and $\rot_{(\cdot)}\in \C_1$, and $n'$ complex numbers $\lmk_{1},\,\lmk_{2},\,\cdots,\, \lmk_{n'}\in \C$ given $m$ noisy pose-pose measurements $\tilde{\rho}_{ij}\in\P$ of
$$\rho_{ij}\triangleq \rho_i^{-1}\odot \rho_j\in \P$$ 
and $m'$ noisy pose-landmark measurements $\tilde{\lmk}_{ij}\in\C$ of
$$\quad\lmk_{ij}\triangleq \overline{\rot}_i(s_j-\tran_i)\in\C.$$
The unknown $n$ poses and $n'$ landmark positions and the noisy relative measurements can be described with a directed graph $\overrightarrow{G}=(\VV\cup\VV',\aEE\cup\aEEp)$ in which $i\in\VV\triangleq\{1,\cdots,n\}$ is associated with $g_i$ or $\rho_i$, and $i\in\VV'=\{1,\cdots,n'\}$ is associated with $l_i$ or $\lmk_i$, and $(i,j)\in \aEE\subset\VV\times\VV$ if and only if the pose-pose measurement $\tilde{g}_{ij}$ or $\tilde{\rho}_{ij}$ exists, and $(i,j)\in \aEEp\subset\VV\times\VV'$ if and only if the pose-landmark measurement $\tilde{l}_{ij}$ or $\tilde{\lmk}_{ij}$ exists. If the orientation of edges in $\aEE$ and $\aEEp$ are ignored, we obtain the undirected graph of $\overrightarrow{G}$ that is denoted as $G=(\VV\cup\VV',\,\EE\cup\EE')$. In the rest of this paper, we assume that $\overrightarrow{G}$ is weakly connected and $G$ is (equivalently) connected. In addition, we assume that the noisy relative measurements $\tilde{\rho}_{ij}=(\tilde{\tran}_{ij},\tilde{\rot}_{ij})$ and $\tilde{\lmk}_{ij}$ are random variables that satisfy
\begin{subequations}\label{eq::observation}
\begin{align}
\label{eq::obt}
\tilde{\tran}_{ij}&=\underline{\tran}{}_{ij}+\tran^{\epsilon}_{ij} &\tran_{ij}^{\epsilon}\sim N(0,\tau_{ij}^{-1}),\\
\label{eq::obr}
\tilde{\rot}_{ij}&=\underline{\rot}_{ij}\rot_{ij}^{\epsilon}           &\quad\tilde{\rot}_{ij}^{\epsilon}\sim \mathrm{vMF}(1,\kappa_{ij}),\\
\label{eq::obl}
\tilde{\lmk}_{ij}&=\underline{\lmk}{}_{ij}+\lmk^{\epsilon}_{ij} &\lmk_{ij}^{\epsilon}\sim N(0,\nu_{ij}^{-1}),
\end{align}
\end{subequations}
for all $(i,j)\in \aEE\cup\aEEp$. In \cref{eq::observation}, $\underline{\rho}{}_{ij}=(\underline{\tran}{}_{ij},\underline{\rot}_{ij})$ and $\underline{\lmk}_{ij}$ are the true (latent) values of $\rho_{ij}$ and $\lmk_{ij}$, respectively, $N(\mu,\Sigma)$ denotes the complex normal distribution with mean $\mu\in\C$ and covariance $\Sigma\succcurlyeq 0$, and $\mathrm{vMF}(\rot_0,\kappa)$ denotes the von Mises-Fisher distribution on $\C_1$ with mode $\rot_0\in \C_1$, concentration number $\kappa\geq 0$ and the probability density function of $\mathrm{vMF}(\rot_0,\kappa)$ is \cite{khatri1977mises}
\begin{equation}
\nonumber
f(\rot;\rot_0,\kappa)=\frac{1}{c_d(\kappa)}\exp\left(\kappa(\overline{\rot}_0 \rot+\rot_0\overline{\rot})\right),
\end{equation}
in which $c_d(\kappa)$ is a function of $\kappa$.

If $\tilde{\tran}_{ij}$, $\tilde{\rot}_{ij}$ and $\tilde{\lmk}_{ij}$ are independent from each other, from \cref{eq::trans,eq::qmulti,eq::qinv}, a straightforward algebraic manipulation indicates that the maximum likelihood estimation (MLE) is a least square problem as follows
\begin{multline}\label{eq::LSP}
\tag{MLE}
\min_{\substack{\lmk_i\in \C,\\\tran_i\in\C^n,\,\rot_i\in \C_1^n}}\;\sum_{(i,j)\in\aEE}\left[ \kappa_{ij} |\rot_i\tilde{\rot}_{ij}-\rot_j|^2+ \tau_{ij}|\tran_j-\tran_i\right.\\\left.-\rot_i\tilde{\tran}_{ij}|^2\right]+\sum_{(i,j)\in\overrightarrow{\EE'}}\nu_{ij}|\lmk_j-\tran_i-\rot_i\tilde{\lmk}_{ij}|^2,
\end{multline}
in which $\kappa_{ij}$, $\tau_{ij}$ and $\nu_{ij}$ are as given in \cref{eq::obt,eq::obr,eq::obl}. From \cref{eq::R,eq::multi}, it should be noted that $$|\rot_i\tilde{\rot}_{ij}-\rot_j|^2=\frac{1}{2}\|R_i\widetilde{R}_{ij}-R_j\|_F^2,$$
and it is also trivial to show that
$$|\tran_j-\tran_i-\rot_i\tilde{\tran}_{ij}|^2=\|t_j-t_i-R_{i}\tilde{t}_{ij}\|_F^2,$$
$$|\lmk_j-\tran_i-\rot_i\tilde{\lmk}_{ij}|^2=\|l_j-t_i-R_{i}\tilde{l}_{ij}\|_F^2.$$
As a result, \eqref{eq::LSP} is equivalent to
\begin{multline}\label{eq::mLSP}
\tag{SE-MLE}
\min_{\substack{R_i\in SO(2),\\ p_i,\,l_i\in \R^2}}\;\sum_{(i,j)\in\overrightarrow{\mathcal{E}}} \Big[\frac{\kappa_{ij}}{2} \|R_i\widetilde{R}_{ij}-R_j\|_F^2+ \tau_{ij}\|t_j-t_i-\\R_i\tilde{t}_{ij}\|_F^2\Big]+\sum_{(i,j)\in\overrightarrow{\EE'}}\nu_{ij}\|l_j-t_i-R_il_{ij}\|_F^2.
\end{multline}
Even though there are landmarks present in \eqref{eq::LSP} and \eqref{eq::mLSP}, this does not create a significant distinction from \cite{rosen2016se} in terms of problem formulation. As a matter of fact, if there are no landmarks, i.e., $\VV'=\emptyset$ and $\overrightarrow{\EE'}=\emptyset$, \eqref{eq::mLSP} is almost the same as the formulation of pose graph optimization using the matrix representation in SE-Sync \cite{rosen2016se} except for the weight factors. In addition, \eqref{eq::mLSP} can also be constructed as a specialized case of SE-Sync's \cite{rosen2016se} measurement model if we interpret pose-landmark measurements as pose-pose measurements whose rotational weight factors are zero.

In the next subsection, we will simplify \eqref{eq::LSP} to quadratic programming on the product of unit complex numbers $\C_1^n$.

\subsection{Problem Simplification}

The simplification of \eqref{eq::LSP} is similar to that of \cite[Appendix B]{rosen2016se}, the difference of which is that ours uses the complex number representation while \cite{rosen2016se} uses the matrix representation  and ours has landmarks involved while \cite{rosen2016se} does not.

For notational convenience, we define $\rot_{ji}=\overline{\rot}_{ij}$, $\kappa_{ji}=\kappa_{ij}$ and $\tau_{ji}=\tau_{ij}$, and \eqref{eq::LSP} can be reformulated as
\begin{equation}\label{eq::P}
\tag{P}
\min_{\xi\in \C^{n'}\times\C^n \times \C_1^n}\;\xi^H \widetilde{\Gamma} \xi
\end{equation}
in which 
$$\xi\triangleq\begin{bmatrix}
\lmk_{1}& \cdots & \lmk_{n'} &\tran_1 & \cdots & \tran_n & \rot_1 & \cdots & \rot_n
\end{bmatrix}^\transpose .$$ 
In \eqref{eq::P}, $\widetilde{\Gamma}$ is a $(2n+n')$-by-$(2n+n')$ Hermitian matrix
\begin{equation}\label{eq::problem}
\widetilde{\Gamma}\triangleq\begin{bmatrix}
\Sigma^\lmk & U & \widetilde{N}\\
* & L(W^\tran) +\Sigma^\tran& \widetilde{E} \\
* & * & L(\widetilde{G}^\rot)+\widetilde{\Sigma}^\rot
\end{bmatrix},
\end{equation}
in which $\widetilde{\Sigma}^\lmk\in\H^{n'}$, $\widetilde{U}\in \C^{n'\times n}$, $\widetilde{N}\in \C^{n'\times n}$, $L(W^\tran)\in \H^n$, $\widetilde{\Sigma}^\tran\in \H^n$, $\widetilde{E}\in \C^{n\times n}$, $L(\widetilde{G}^\rot)\in \H^n$ and $\widetilde{\Sigma}^\rot\in \H^n$ are defined as
\allowdisplaybreaks
\begin{align}
\nonumber
[\Sigma^\lmk]_{ij}&\triangleq
\begin{cases}
\sum\limits_{(k,i)\in \aEEp} \nu_{ki},\hphantom{\;\;\quad\quad\;\;\;\;\,} & i=j,\\
0 & \mathrm{otherwise},
\end{cases}\\
\nonumber
[U]_{ij}&\triangleq
\begin{cases}
 -\nu_{ji},\hphantom{\sum\limits_{(k,j)\aEE}\tilde{\tran}\,\quad\quad\;\;\;\;} & (j,i)\in\aEEp,\\
0 & \mathrm{otherwise},
\end{cases}\\
\nonumber
[\widetilde{N}^\lmk]_{ij}&\triangleq
\begin{cases}
-\nu_{ji}\tilde{\lmk}_{ji},\hphantom{\sum\limits_{\in\aEE}\;\;\,\quad\quad\;\;\;\;} & (j,i)\in\aEEp,\\
0 & \mathrm{otherwise},
\end{cases}\\
\nonumber
[L(W^\tran)]_{ij}&\triangleq
\begin{cases}
\sum\limits_{(i,k)\in \mathcal{E}} \tau_{ik},\hphantom{\tilde{\tran}_{ik}\quad\quad\;\;\;\;} & i=j,\\
-\tau_{ij}, & (i,j)\in \mathcal{E},\\
0 & \mathrm{otherwise},
\end{cases}\\
\nonumber
[\Sigma ^\tran]_{ij}&\triangleq
\begin{cases}
\sum\limits_{(i,k)\in \aEEp} \nu_{ik},\hphantom{\;\;\quad\quad\quad\;\;} & i=j,\\
0 & \mathrm{otherwise},
\end{cases}\\
\nonumber
[\widetilde{E}]_{ij}&\triangleq\begin{cases}
\sum\limits_{(i,k)\in\aEE}\tau_{ik}\tilde{\tran}_{ik}+\\
\quad\;+\sum\limits_{(i,k)\in\aEEp}\nu_{ik}\tilde{\lmk}_{ik},\;\;\;\; & i=j,\\
-\tau_{ij}\tilde{\tran}_{ji}, & (j,i)\in\overrightarrow{\mathcal{E}},\\
0 & \mathrm{otherwise},
\end{cases}\\
\nonumber
[L(\widetilde{G}^\rot)]_{ij}&\triangleq\begin{cases}
\sum\limits_{(i,k)\in \mathcal{E}} \kappa_{ik},\hphantom{\tilde{\tran}_{ik}\quad\quad\quad\;} & i=j,\\
-\kappa_{ij}\tilde{\rot}_{ji}, & (i,j)\in\mathcal{E},\\
0 &\mathrm{otherwise},
\end{cases}\\
\nonumber
[\widetilde{\Sigma}^\rot)]_{ij}&\triangleq\begin{cases}
\sum\limits_{(i,k)\in\aEE} \tau_{ik}|\tilde{\tran}_{ik}|^2+\\
\quad\;+\sum\limits_{(i,k)\in\aEEp} \nu_{ik}|\tilde{\lmk}_{ik}|^2, &i=j\\
0 &\mathrm{otherwise},
\end{cases}
\end{align}
respectively.

It is possible to marginalize the translational states and landmarks and reformulate planar graph-based SLAM as an optimization problem on the rotational states only, which has been used in \cite{rosen2016se,carlone2013convergence,khosoussi2015exploiting} to improve the computational efficiency. In a similar way, if rotational states $\rot\triangleq\begin{bmatrix}
\rot_1 & \cdots & \rot_n
\end{bmatrix}^\transpose \in \C_1^n$ are known, \eqref{eq::P} is reduced to unconstrained complex quadratic programming on translational states $\tran\triangleq\begin{bmatrix}
\tran_1 & \cdots & \tran_n
\end{bmatrix}^\transpose \in \C^n$ and landmark positions $\lmk\triangleq\begin{bmatrix}
\lmk_{1} & \cdots & \lmk_{n'}
\end{bmatrix}^\transpose\in \C^w$: 
\begin{equation}\label{eq::qpp}
\min\limits_{\beta\in\C^{n'+n}}\;\beta^H \Lambda \beta + 2\innprod{\beta}{\widetilde{\Theta} \rot} + \underbrace{\rot^H L(\widetilde{G}^\rot)\rot + \rot^H \widetilde{\Sigma}^\rot \rot}_{\text{constant}},
\end{equation}
in which $\beta \triangleq\begin{bmatrix}
\lmk^\transpose & \tran^\transpose
\end{bmatrix}^\transpose\in \C^{n+n'}$, $\widetilde{\Theta}\triangleq\begin{bmatrix}
\widetilde{N}\\
\widetilde{E}
\end{bmatrix}\in \C^{(n+n')\times n}$, and $\Lambda\triangleq\begin{bmatrix}
\Sigma^\lmk & U\\
* & L(W^\tran)+\Sigma^\tran
\end{bmatrix}\in \H^{n+n'}$. It can be shown that according to \cite[Proposition 4.2]{gallier2010schur}\footnote{It should be noted that \cite[Proposition 4.2]{gallier2010schur} was originally derived for real matrices, however, the results can be generalized to complex matrices as well.}, one of the optimal solutions to \cref{eq::qpp} is
\begin{equation}\label{eq::sol_lin}
\beta=-\Lambda^\dagger \widetilde{\Theta} \rot.
\end{equation} 
Substituting \cref{eq::sol_lin} into \eqref{eq::P} and simplifying the resulting equation, we obtain the complex quadratic programming on the product of unit complex numbers $\C_1^n$ as follows
\begin{equation}\label{eq::QP0}
\min_{\rot\in \C_1^n} \rot^H \tM \rot,
\end{equation}
in which $\tM=L(\widetilde{G}^\rot)+\widetilde{\Sigma}^z-\widetilde{\Theta}^H\Lambda^\dagger\widetilde{\Theta}\succeq 0$. 

Furthermore, let $\Omega\in \R^{(m+m')\times(m+m')}$ be the diagonal matrix indexed by $e\in \aEE\cup\aEE'$ and $e'\in \aEE\cup\aEE'$ whose $(e,e')$-element is given by
\begin{equation}\label{eq::Omega}
[\Omega]_{ee'}\triangleq\begin{cases}
\nu_e, & e=e'\text{ and }e\in \aEEp,\\
\tau_{e}, & e=e'\text{ and }e\in \aEE,\\
0,&\text{otherwise},
\end{cases}
\end{equation}
in which $\nu_{e}$ and $\tau_{e}\in \R$ are the precisions of the landmark positional observations and the translational observations as given in Eqs. \eqref{eq::obl} and \eqref{eq::obt}, respectively; and let $\widetilde{T}\in \C^{(m+m')\times n}$ be the matrix indexed by $e\in\aEE\cup\aEEp$ and $k\in \VV\cup\VV'$ whose $(e,\,k)$-element is given by
\begin{equation}\label{eq::T}
\big[\widetilde{T}\big]_{ek}\triangleq\begin{cases}
-\tilde{\lmk}_{kj}, & e=(k,\,j)\in\aEEp,\\
-\tilde{\tran}_{kj}, & e=(k,\,j)\in\aEE,\\
0, &\text{otherwise};
\end{cases}
\end{equation}
and let $A(\aGG)\in\R^{n\times m}$ to the matrix indexed by $k\in \VV\cup\VV'$ and $e\in \aEE\cup\aEEp$ whose $(k,\,e)$-element is given by
\begin{equation}\label{eq::A}
\big[A(\aGG)\big]_{ke}=\begin{cases}
1, & e=(i,\,k)\in \aEE\cup\aEEp,\\
-1, & e=(k,\,j)\in \aEE\cup\aEEp,\\
0, & \text{otherwise}.
\end{cases}
\end{equation}
In addition, without loss of any generality, we also introduce the ordering over $\aEE\cup\aEEp$ and $\VV\cup\VV'$ such that $e'\in\aEEp$ precedes $e\in \aEE$ and $k'\in \VV'$ precedes $k\in \VV$.
As a result of \cref{eq::T,eq::Omega,eq::A}, $\tM=L(\widetilde{G}^\rot)+\widetilde{\Sigma}^\rot-\widetilde{\Theta}^H\Lambda^\dagger\widetilde{\Theta}$ can be rewritten as
\begin{equation}\label{eq::Q}
\tM=L(\widetilde{G}^\rot)+\widetilde{T}^H\Omega^{\frac{1}{2}}\Pi\Omega^{\frac{1}{2}}\widetilde{T},
\end{equation}
in which $\Pi\in\R^{(m+m')\times (m+m')}$ is the matrix of the orthogonal projection operator $\pi:\C^{m+m'}\rightarrow \ker(A(\aGG)\Omega^{\frac{1}{2}})$ onto the kernel of $A(\aGG)\Omega^{\frac{1}{2}}$. Therefore, \cref{eq::QP0} is equivalent to 
\begin{equation}\label{eq::QP}
\tag{QP}
\begin{aligned}
&\hspace{2em}\min_{\rot\in \C_1^n} \trace(\tM \rot\rot^H) ,\\
&\tM=L(\widetilde{G}^\rot)+\widetilde{T}^H\Omega^{\frac{1}{2}}\Pi\Omega^{\frac{1}{2}}\widetilde{T}.
\end{aligned}
\end{equation}
Interested readers can refer to Appendix \hyperref[app::A]{A} for a detailed derivation of \eqref{eq::QP}.

In the next section, we will relax \eqref{eq::QP} to complex semidefinite programming and show that the semidefinite relaxation is tight as long as the noise magnitude is below a certain threshold. 

\section{The Semidefinite Relaxation}\label{section::sdp}
In a similar way to \cite{bandeira2017tightness,boumal2016nonconvex,carlone2016planar}, it is straightforward to relax \eqref{eq::QP} to
\begin{equation}\label{eq::SDP}
\tag{SDP}
\begin{aligned}
&\quad\min_{X\in \H^n} \innprod{\tM}{X}\\
\mathrm{s.t.}\hspace{1em} &X\succeq0, \quad\diag(X)=\1.
\end{aligned}
\end{equation}
It should be noted that if $\hat{X}\in\H^n$ has rank one and solves \eqref{eq::SDP}, then a solution $\hat{\rot}\in \C_1^n$ to \eqref{eq::QP} can be exactly recovered from $\hat{X}$ through singular value decomposition with which we have $\hat{X}=\hat{\rot}\hat{\rot}^H$.  

In this paper, it is without loss of any generality to assume that all the manifolds are Riemannian submanifolds of Euclidean space \cite{absil2009optimization}, whose differential geometric properties, e.g., Riemannian gradients and Riemannian Hessians, are defined accordingly.  

In the rest of section, we will analyze and derive the conditions for the optimality of \eqref{eq::QP} and \eqref{eq::SDP}, and conditions for the tight relaxation of \eqref{eq::SDP}, all the proofs of which can be found in Appendix \hyperref[app::B]{B}.

From \cite{absil2009optimization}, the necessary conditions for the local optimality of \eqref{eq::QP} can be well characterized in terms of the Riemannian gradients and Hessians.

\begin{lemma}\label{lemma::qp}
	If $\hat{\rot}\in \C_1^n$ is a local optimum of \eqref{eq::QP}, then there exists a real diagonal matrix $\hat{\Lambda}\triangleq\Re\{\ddiag(\tM\hat{\rot}\hat{\rot}^H)\}\in \R^{n\times n}$ such that $\hat{S}\triangleq \tM-\hat{\Lambda}\in \H^n$ satisfies the following conditions:
	\begin{enumerate}[label=(\arabic*)]
		\item $\hat{S}\hat{\rot}=\0$;
		\item $\innprod{\dot{\rot}}{\hat{S}\dot{\rot}}\geq 0$ for all $\dot{\rot}\in T_{\hat{\rot}}\C_1^n$.
	\end{enumerate}
	If $\hat{\rot}$ satisfies (1), it is a first-order critical point, and if $\hat{\rot}$ satisfies (1) and (2), it is a second-order critical point.
\end{lemma}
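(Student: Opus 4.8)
The plan is to regard \eqref{eq::QP} as the minimization of the smooth real function $f(\rot)=\rot^H\tM\rot=\innprod{\rot}{\tM\rot}$ over the Riemannian submanifold $\C_1^n\subset\C^n$ endowed with the real inner product $\innprod{u}{v}=\Re(u^Hv)$, and to specialize the standard first- and second-order necessary conditions of \cite{absil2009optimization} to this manifold. First I would record the relevant geometry of $\C_1^n$: at a point $\hat{\rot}$ the tangent space is $T_{\hat{\rot}}\C_1^n=\{\dot{\rot}:\Re(\overline{[\hat{\rot}]_i}[\dot{\rot}]_i)=0\ \text{for all}\ i\}$, while the normal space is exactly $\{D\hat{\rot}:D\in\R^{n\times n}\ \text{real and diagonal}\}$. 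Because $\tM$ is Hermitian, the Euclidean gradient of $f$ is $\nabla f=2\tM\hat{\rot}$.

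To obtain condition (1), I would orthogonally project $\nabla f$ onto $T_{\hat{\rot}}\C_1^n$. Coordinatewise the normal (radial) part of $\nabla f$ is $2\Re(\overline{[\hat{\rot}]_i}[\tM\hat{\rot}]_i)[\hat{\rot}]_i$, and since $\overline{[\hat{\rot}]_i}[\tM\hat{\rot}]_i=(\tM\hat{\rot}\hat{\rot}^H)_{ii}$ this normal part equals $2\hat{\Lambda}\hat{\rot}$ with $\hat{\Lambda}=\Re\{\ddiag(\tM\hat{\rot}\hat{\rot}^H)\}$ real and diagonal, matching the statement. Hence the Riemannian gradient is $\nabla f-2\hat{\Lambda}\hat{\rot}=2\hat{S}\hat{\rot}$ with $\hat{S}=\tM-\hat{\Lambda}$, and the first-order necessary condition that it vanish at a local optimum is precisely $\hat{S}\hat{\rot}=\0$.

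For condition (2) I would evaluate the Hessian quadratic form along geodesics of $\C_1^n$. The crucial geometric input is the second fundamental form of the circle factors: a geodesic $\gamma$ with $\gamma(0)=\hat{\rot}$ and $\dot\gamma(0)=\dot{\rot}\in T_{\hat{\rot}}\C_1^n$ has ambient acceleration $[\ddot\gamma(0)]_i=-|[\dot{\rot}]_i|^2[\hat{\rot}]_i$. Differentiating $f(\gamma(t))$ twice gives $\frac{d^2}{dt^2}f(\gamma)\big|_{0}=2\innprod{\ddot\gamma(0)}{\tM\hat{\rot}}+2\innprod{\dot{\rot}}{\tM\dot{\rot}}$, and substituting the acceleration together with $\overline{[\hat{\rot}]_i}[\tM\hat{\rot}]_i=(\tM\hat{\rot}\hat{\rot}^H)_{ii}$ collapses the first term to $-2\innprod{\dot{\rot}}{\hat{\Lambda}\dot{\rot}}$. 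Since along a geodesic this second derivative equals the Riemannian Hessian quadratic form, we obtain $\innprod{\dot{\rot}}{\mathrm{Hess}\,f[\dot{\rot}]}=2\innprod{\dot{\rot}}{\hat{S}\dot{\rot}}$; the second-order necessary condition for a local minimum, namely positive semidefiniteness of the Hessian on the tangent space, then yields $\innprod{\dot{\rot}}{\hat{S}\dot{\rot}}\geq0$ for all $\dot{\rot}\in T_{\hat{\rot}}\C_1^n$. Reading (1) as the vanishing of the Riemannian gradient and (2) as the nonnegativity of the Riemannian Hessian gives the stated first- and second-order critical-point interpretations.

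The main obstacle is treating the curvature (Weingarten) contribution correctly: it is precisely the radial geodesic acceleration $-|[\dot{\rot}]_i|^2[\hat{\rot}]_i$ of each unit circle that produces the $-\hat{\Lambda}$ correction separating $\hat{S}$ from the ambient matrix $\tM$, and its coordinatewise, purely radial character is what forces the multiplier $\hat{\Lambda}$ to be real and diagonal rather than a full Hermitian matrix. The remaining care is entirely in the real-inner-product bookkeeping, ensuring that the same diagonal extraction $\Re\{\ddiag(\tM\hat{\rot}\hat{\rot}^H)\}$ emerges in both the gradient and the Hessian; once this is in place, both conditions follow directly from the general theory in \cite{absil2009optimization}.
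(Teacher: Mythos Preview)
Your proposal is correct and follows essentially the same approach as the paper: both compute the Riemannian gradient by projecting $2\tM\hat{\rot}$ onto $T_{\hat{\rot}}\C_1^n$ (yielding $2\hat{S}\hat{\rot}$) and then identify the Hessian quadratic form as $2\innprod{\dot{\rot}}{\hat{S}\dot{\rot}}$, invoking the standard first- and second-order necessary conditions from \cite{absil2009optimization}. The only cosmetic difference is that the paper obtains the Hessian via the submanifold formula $\mathrm{Hess}\,F(\rot)[\dot{\rot}]=\mathrm{proj}_{\rot}\,\mathrm{D}\,\mathrm{grad}\,F(\rot)[\dot{\rot}]$ and then relates it to the second derivative along $\exp_{\rot}(t\dot{\rot})$, whereas you compute that second derivative directly from the circle's geodesic acceleration $-|[\dot{\rot}]_i|^2[\hat{\rot}]_i$; the two routes are equivalent and produce the same $\hat{\Lambda}$-correction.
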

\begin{proof}
	See Appendix \hyperref[app::B1]{B.1}.
\end{proof}

Since \eqref{eq::SDP} is convex and the identity matrix $\I\in \C^{n\times n}$ is strictly feasible, the sufficient and necessary conditions for the global optimality of \eqref{eq::SDP} can be derived in terms of the Karush-Kuhn-Tucker (KKT) conditions.

\begin{lemma}\label{lemma::sdp}
A Hermitian matrix $\hat{X}\in \H^n$ is a global optimum of \eqref{eq::SDP} if and only if there exists $\hat{S}\in \H^n$ such that the following conditions hold:
\begin{enumerate}[label=(\arabic*)]
\item $\diag(\hat{X})=\1$;
\item $\hat{X}\succeq 0$;
\item $\hat{S}\hat{X}=\0$;
\item $\tM-\hat{S}$ is real diagonal;
\item $\hat{S}\succeq 0$.
\end{enumerate}
Furthermore, if $\rank(\hat{S})=n-1$, then $\hat{X}$ has rank one and is the unique global optimum of \eqref{eq::SDP}.
\end{lemma}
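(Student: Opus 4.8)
The plan is to recognize \eqref{eq::SDP} as a standard linear conic (semidefinite) program and derive conditions (1)--(5) directly as its Karush--Kuhn--Tucker conditions, then settle the rank-one and uniqueness claims from complementary slackness. Since the objective $\innprod{\tM}{X}=\trace(\tM X)$ is linear, the feasible set $\{X\in\H^n:X\succeq 0,\ \diag(X)=\1\}$ is convex, and, as already noted in the excerpt, the identity $\I$ is strictly feasible; hence Slater's condition holds, strong duality holds, and the KKT system is both necessary and sufficient for global optimality. That equivalence is exactly the statement we must establish.

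First I would form the Lagrangian. Introduce a real multiplier vector $\lambda\in\R^n$ for the $n$ affine constraints $[X]_{ii}=1$ and dualize the cone constraint $X\succeq 0$ against a Hermitian multiplier $\hat{S}$, using that the Hermitian PSD cone is self-dual under the real trace inner product $\innprod{\cdot}{\cdot}$. Writing $\Lambda\triangleq\diag(\lambda)$, minimizing over $X\succeq 0$ reduces the dual function to $\trace(\Lambda)$ subject to $\tM-\Lambda\succeq 0$. Reading off the resulting optimality system then yields the five conditions: primal feasibility gives (1) and (2); stationarity forces $\tM-\hat{S}=\Lambda$ to be real diagonal, which is (4); dual feasibility is $\hat{S}\succeq 0$, which is (5); and complementary slackness is $\innprod{\hat{S}}{\hat{X}}=\trace(\hat{S}\hat{X})=0$, which, since both $\hat{S}\succeq 0$ and $\hat{X}\succeq 0$, is equivalent to $\hat{S}\hat{X}=\0$, namely (3). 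The only care needed is to verify that the multipliers of the diagonal constraints are real (forced by $\hat{X}\in\H^n$ and the real diagonal entries being pinned to $1$), and that $\trace(\hat{S}\hat{X})=0$ with both factors PSD indeed forces the product to vanish.

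For the final clause I would argue as follows. From (3), $\hat{S}\hat{X}=\0$ gives $\range(\hat{X})\subseteq\ker(\hat{S})$. If $\rank(\hat{S})=n-1$ then $\dim\ker(\hat{S})=1$, so $\rank(\hat{X})\leq 1$; and since $\diag(\hat{X})=\1$ forces $\trace(\hat{X})=n>0$, we conclude $\rank(\hat{X})=1$. For uniqueness I would exploit that $\hat{S}$ is a fixed optimal dual point: any other global optimum $X'$ attains the same optimal value, so $\innprod{\hat{S}}{X'}=\innprod{\tM}{X'}-\innprod{\Lambda}{X'}=\innprod{\tM}{X'}-\trace(\Lambda)$ vanishes, the last equality using $\diag(X')=\1$. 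Hence $\hat{S}X'=\0$ as well, so $\range(X')\subseteq\ker(\hat{S})=\spn\{v\}$ for a single vector $v$, forcing $X'=c\,vv^H$; the constraint $[X']_{ii}=c|v_i|^2=1$ then determines $c$ uniquely (and forces $|v_i|$ constant), so $X'=\hat{X}$ and the optimum is unique.

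The main obstacle is not conceptual but bookkeeping: carefully transporting real-inner-product conic duality into the complex Hermitian setting, confirming self-duality of the Hermitian PSD cone under $\innprod{\cdot}{\cdot}$, and checking that the diagonal Lagrange multipliers are real. Once these are in place, each step is a routine specialization of standard SDP duality.
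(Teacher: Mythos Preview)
Your argument is correct and, for the equivalence of global optimality with conditions (1)--(5), is essentially the paper's: both simply observe that these are the KKT conditions of \eqref{eq::SDP} and invoke Slater's condition via the strictly feasible point $\I$ to make them necessary and sufficient. Your care about the complex Hermitian setting (self-duality of the PSD cone under the real trace inner product, reality of the diagonal multipliers) is appropriate bookkeeping that the paper leaves implicit.

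Where you genuinely diverge is in the uniqueness claim. The paper does not argue uniqueness directly; instead it notes that Slater's condition also holds for the dual, observes that $\rank(\hat{S})=n-1$ makes $\hat{S}$ \emph{dual nondegenerate} in the sense of Alizadeh, Haeberly, and Overton, and then invokes their result that a nondegenerate optimal dual point forces the primal optimum to be unique. Your route is more elementary and self-contained: you fix the dual certificate $(\hat{S},\Lambda)$, use strong duality to show every primal optimum $X'$ satisfies $\innprod{\hat{S}}{X'}=0$, hence $\hat{S}X'=\0$, hence $\range(X')\subseteq\ker(\hat{S})$, and then exploit $\dim\ker(\hat{S})=1$ together with $\diag(X')=\1$ to pin $X'$ down uniquely. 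This avoids importing the general SDP nondegeneracy machinery at the cost of a short direct calculation; the paper's citation, conversely, situates the result within a broader theory but is less transparent about why uniqueness actually holds here.
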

\begin{proof}
See Appendix  \hyperref[app::B2]{B.2}.
\end{proof}

As a result of \cref{lemma::qp,lemma::sdp}, we obtain the sufficient conditions for the exact recovery of \eqref{eq::QP} from \eqref{eq::SDP}.

\begin{lemma}\label{lemma::qpsdp}
If $\hat{\rot}\in\C_1^n$ is a first-order critical point of \eqref{eq::QP} and $\hat{S}=\tM-\hat{\Lambda}\succeq 0$ in which $\hat{\Lambda}=\Re\{\ddiag(\tM\hat{\rot}\hat{\rot}^H)\}$, then $\hat{\rot}$ is a global optimum of \eqref{eq::QP} and $\hat{X}=\hat{\rot}\hat{\rot}^H$ is a global optimum of \eqref{eq::SDP}. Moreover, if $\rank(\hat{S})=n-1$, then $\hat{X}$ is the unique optimum of \eqref{eq::SDP}.

\end{lemma}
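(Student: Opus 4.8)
The plan is to chain together the two preceding lemmas, since Lemma~\ref{lemma::qpsdp} is essentially a corollary whose content is already packaged by Lemma~\ref{lemma::qp} and Lemma~\ref{lemma::sdp}. First I would observe that the hypothesis directly supplies a first-order critical point $\hat{\rot}\in\C_1^n$ of \eqref{eq::QP}, together with the specific diagonal matrix $\hat{\Lambda}=\Re\{\ddiag(\tM\hat{\rot}\hat{\rot}^H)\}$ and the induced $\hat{S}=\tM-\hat{\Lambda}$. By the first-order characterization in Lemma~\ref{lemma::qp}, being a first-order critical point means exactly $\hat{S}\hat{\rot}=\0$; this gives condition (1) of that lemma for free and, more importantly, it is the key stationarity identity I will reuse below. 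The extra hypothesis handed to us is that this same $\hat{S}$ is positive semidefinite, i.e.\ $\hat{S}\succeq 0$.

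The central step is to verify that the pair $(\hat{X},\hat{S})$ with $\hat{X}\triangleq\hat{\rot}\hat{\rot}^H$ satisfies all five KKT conditions of Lemma~\ref{lemma::sdp}, which will immediately certify $\hat{X}$ as a global optimum of \eqref{eq::SDP}. I would check them in order: condition (1), $\diag(\hat{X})=\1$, holds because $[\hat{\rot}\hat{\rot}^H]_{ii}=|\rot_i|^2=1$ as $\hat{\rot}\in\C_1^n$; condition (2), $\hat{X}\succeq 0$, is automatic since $\hat{X}$ is a rank-one outer product; condition (3), $\hat{S}\hat{X}=\hat{S}\hat{\rot}\hat{\rot}^H=\0$, follows from the stationarity identity $\hat{S}\hat{\rot}=\0$ just recalled; condition (4), that $\tM-\hat{S}=\hat{\Lambda}$ is real diagonal, holds by the very definition of $\hat{\Lambda}$; and condition (5), $\hat{S}\succeq 0$, is precisely the hypothesis of the lemma. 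Thus $\hat{X}$ is globally optimal for \eqref{eq::SDP}.

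To promote this to global optimality of $\hat{\rot}$ for \eqref{eq::QP}, I would invoke the standard relaxation inequality: since \eqref{eq::SDP} relaxes \eqref{eq::QP} over a larger feasible set, its optimal value lower-bounds that of \eqref{eq::QP}; but the feasible point $\hat{\rot}$ attains $\innprod{\tM}{\hat{X}}=\hat{\rot}^H\tM\hat{\rot}$, which equals the \eqref{eq::SDP} optimum by the previous paragraph. Hence $\hat{\rot}$ achieves the \eqref{eq::QP} lower bound and is globally optimal. For the final claim, if $\rank(\hat{S})=n-1$, the concluding statement of Lemma~\ref{lemma::sdp} applies verbatim to our $\hat{S}$ and yields that $\hat{X}=\hat{\rot}\hat{\rot}^H$ is the unique global optimum of \eqref{eq::SDP}.

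I do not anticipate a genuine obstacle here, as the argument is a bookkeeping verification that the constructed $(\hat{X},\hat{S})$ meets the KKT list. The only point requiring a small amount of care is ensuring the two lemmas use mutually consistent definitions of $\hat{S}$ and $\hat{\Lambda}$ — both do, since Lemma~\ref{lemma::qp} and the present statement define $\hat{\Lambda}=\Re\{\ddiag(\tM\hat{\rot}\hat{\rot}^H)\}$ and $\hat{S}=\tM-\hat{\Lambda}$ identically, so condition (4) of Lemma~\ref{lemma::sdp} is satisfied by construction rather than requiring a separate computation. Everything else reduces to the rank-one structure of $\hat{X}$ and the stationarity identity $\hat{S}\hat{\rot}=\0$ already furnished by the hypothesis.
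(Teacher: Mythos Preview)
Your proposal is correct and follows essentially the same route as the paper: verify that $(\hat{X},\hat{S})=(\hat{\rot}\hat{\rot}^H,\tM-\hat{\Lambda})$ satisfies the five KKT conditions of Lemma~\ref{lemma::sdp}, then pass the SDP optimality back to \eqref{eq::QP} via the relaxation inequality, and finally invoke the rank-$(n-1)$ clause of Lemma~\ref{lemma::sdp} for uniqueness. The paper's proof is terser---it merely asserts that conditions (1)--(5) ``can be checked'' and additionally remarks (without using it) that $\hat{S}\succeq 0$ upgrades first-order to second-order criticality via Lemma~\ref{lemma::qp}---but the substance is identical to what you wrote.
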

\begin{proof}
	See Appendix \hyperref[app::B3]{B.3}.
\end{proof}

\cref{lemma::qpsdp} gives sufficient conditions to check whether \eqref{eq::SDP} is a tight relaxation of \eqref{eq::QP}. As a matter of fact, if the measurement noise is not too large, it is guaranteed that \eqref{eq::SDP} is always a tight relaxation of \eqref{eq::QP} as the following proposition states.

\begin{prop}\label{prop::sdp}
Let $\underline{M}\in\H^n$ be the data matrix of the form \cref{eq::Q} that is constructed with the true (latent) pose-pose measurements $\underline{\rho}{}_{ij}=(\underline{\tran}{}_{ij},\,\underline{\rot}{}_{ij})$ and pose-landmark measurements $\underline{\lmk}_{ij}$, then there exists a constant $\gamma=\gamma(\underline{M})>0$ such that if $\|\tM-\underline{M}\|_2< \gamma$, then \eqref{eq::SDP} attains the unique global optimum at $\hat{X}=\hat{\rot}\hat{\rot}^H\in\H^n$, in which $\hat{\rot}\in\C_1^n$ is a global optimum of \eqref{eq::QP}.
\end{prop}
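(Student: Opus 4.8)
The plan is to reduce the statement to a direct application of \cref{lemma::qpsdp}: it suffices to produce a global minimizer $\hat{\rot}\in\C_1^n$ of \eqref{eq::QP} whose associated matrix $\hat{S}=\tM-\hat{\Lambda}$, with $\hat{\Lambda}=\Re\{\ddiag(\tM\hat{\rot}\hat{\rot}^H)\}$, is positive semidefinite with $\rank(\hat{S})=n-1$; \cref{lemma::qpsdp} then yields both global optimality of $\hat{\rot}$ for \eqref{eq::QP} and uniqueness of $\hat{X}=\hat{\rot}\hat{\rot}^H$ as the optimum of \eqref{eq::SDP}. The backbone of the argument is the zero-noise case. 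First I would establish that the latent data matrix $\underline{M}$ is positive semidefinite with $\underline{M}\,\underline{\rot}=\0$, so that the true rotations $\underline{\rot}\in\C_1^n$ attain the optimal value $0$ of the noiseless problem, and that the kernel of $\underline{M}$ is exactly one-dimensional, i.e. $\ker(\underline{M})=\spn\{\underline{\rot}\}$ and hence $\lambda_2(\underline{M})>0$, where $\lambda_2$ denotes the second smallest eigenvalue. The one-dimensionality is where the weak connectivity of $\aGG$ (equivalently, connectivity of $G$) enters: it guarantees there is no additional null direction beyond the global gauge freedom carried by $\underline{\rot}$. In the noiseless case $\underline{\Lambda}=\Re\{\ddiag(\underline{M}\,\underline{\rot}\,\underline{\rot}^H)\}=0$, so $\underline{S}=\underline{M}$ is itself a rank-$(n-1)$ certificate, recovering tightness at zero noise.

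Next I would run a perturbation argument, writing $\delta\triangleq\|\tM-\underline{M}\|_2$. Let $\hat{\rot}$ be any global minimizer of \eqref{eq::QP} for the noisy matrix $\tM$ (it exists by compactness of $\C_1^n$); being a global minimizer it is a first-order critical point, so \cref{lemma::qp} supplies $\hat{S}\hat{\rot}=\0$, whence $\hat{\rot}\in\ker(\hat{S})$ and $\rank(\hat{S})\le n-1$. The crucial preliminary step is an identifiability (localization) estimate for $\hat{\rot}$. Using optimality, $\hat{\rot}^H\tM\hat{\rot}\le\underline{\rot}^H\tM\underline{\rot}=\underline{\rot}^H(\tM-\underline{M})\underline{\rot}\le n\delta$ because $\underline{M}\,\underline{\rot}=\0$; combined with $\tM\succeq0$ this gives $\hat{\rot}^H\underline{M}\hat{\rot}\le 2n\delta$. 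Decomposing $\hat{\rot}=\alpha\,\underline{\rot}/\sqrt{n}+w$ with $w\perp\underline{\rot}$ and using $\underline{M}\succeq0$, $\lambda_2(\underline{M})>0$ yields $\|w\|^2\le 2n\delta/\lambda_2(\underline{M})$, i.e. $\hat{\rot}$ lies within $O(\sqrt{\delta})$ of the gauge orbit of $\underline{\rot}$.

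With this in hand I would bound the certificate perturbation $\|\hat{S}-\underline{M}\|_2\le\delta+\|\hat{\Lambda}\|_2$. Since $\hat{\Lambda}$ is real diagonal with entries $\Re([\tM\hat{\rot}]_i\,\overline{\hat{\rot}_i})$ and $|\hat{\rot}_i|=1$, one has $\|\hat{\Lambda}\|_2\le\|\tM\hat{\rot}\|_\infty\le\|\tM\hat{\rot}\|_2$. Splitting $\tM\hat{\rot}=\underline{M}\hat{\rot}+(\tM-\underline{M})\hat{\rot}$ and using $\underline{M}\hat{\rot}=\underline{M}w$ together with the localization bound gives $\|\tM\hat{\rot}\|_2\le\delta\sqrt{n}+\|\underline{M}\|_2\sqrt{2n\delta/\lambda_2(\underline{M})}=O(\sqrt{\delta})$. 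Hence $\|\hat{S}-\underline{M}\|_2=O(\sqrt{\delta})\to0$ as $\delta\to0$. Now Weyl's inequality transfers the spectral gap: the $n-1$ largest eigenvalues of $\hat{S}$ are at least $\lambda_2(\underline{M})-\|\hat{S}-\underline{M}\|_2$, which stays strictly positive once $\delta$ is small enough, while the already-identified zero eigenvalue along $\hat{\rot}$ must then be the smallest. Thus $\hat{S}\succeq0$ with $\rank(\hat{S})=n-1$, and choosing $\gamma=\gamma(\underline{M})$ so that the $O(\sqrt{\delta})$ term falls below $\lambda_2(\underline{M})$ makes this rigorous; \cref{lemma::qpsdp} then closes the proof.

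I expect the main obstacle to be twofold and to live in the setup rather than in the final eigenvalue comparison. The first difficulty is proving the zero-noise structural fact that $\ker(\underline{M})$ is exactly one-dimensional, namely that the only null directions of $\underline{M}$ are the global phase of $\underline{\rot}$, which requires unpacking the definition \cref{eq::Q} of the data matrix and exploiting connectivity of $G$. The second, more delicate, difficulty is controlling $\|\hat{\Lambda}\|_2$: a naive bound only gives $\|\hat{\Lambda}\|_2=O(1)$, which is useless, and the $O(\sqrt{\delta})$ decay essential to the Weyl step becomes available only after the identifiability estimate pins $\hat{\rot}$ near the gauge orbit of $\underline{\rot}$. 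Everything downstream is then routine Hermitian eigenvalue perturbation.
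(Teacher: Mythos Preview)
Your proposal is correct and follows essentially the same route as the paper. The paper packages your zero-noise structural fact and your identifiability estimate as two auxiliary propositions (showing $\underline{M}\,\underline{\rot}=\0$, $\lambda_2(\underline{M})>0$, and $d(\underline{\rot},\hat{\rot})\le 2\sqrt{n\|\tM-\underline{M}\|_2/\lambda_2(\underline{M})}$), then writes $\hat{S}=\underline{M}+\Delta S$ and argues $\|\Delta S\|_2\to 0$ by continuity before applying Weyl and \cref{lemma::qpsdp}; your version is slightly more quantitative in that you bound $\|\hat{\Lambda}\|_2$ explicitly via $\|\tM\hat{\rot}\|_2=O(\sqrt{\delta})$ rather than invoking continuity, but the skeleton and the key ingredients are identical.
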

\begin{proof}
	See Appendix \hyperref[app::B4]{B.4}.
\end{proof}

\cref{lemma::qpsdp}  verifies the tightness of the complex semidefinite relaxation and \cref{prop::sdp}  guarantees that the tightness of the complex semidefinite relaxation, which makes \eqref{eq::SDP} certifiably correct for graph-based SLAM. It should be noted that similar results to \cref{lemma::qpsdp} and \cref{prop::sdp} have been presented for synchronization problems on general special Euclidean groups using the matrix representation in \cite{rosen2016se} and for phase synchronization using the complex number representation in \cite{bandeira2017tightness}.

In spite of the tightness of the semidefinite relaxation of planar graph-based SLAM, solving large-scale complex semidefinite programming remains challenging and time-consuming. In the next section, we will further relax \eqref{eq::SDP} as a series of rank-restricted complex semidefinite programming such that \eqref{eq::SDP} can be efficiently solved with the Riemannian staircase optimization.

\section{The CPL-SLAM Algorithm }\label{section::cpl}
In this section, we show that it is possible to recast \eqref{eq::SDP} as Riemannian optimization on complex oblique manifolds, which is one of the most important contributions of this paper. A brief introduction to the complex oblique manifold can be found in Appendix. \hyperref[app::D]{D}, and it is also helpful to read \cite{absil2006joint} that is about the real oblique manifold.

In general, interior point methods to solve \eqref{eq::SDP} take polynomial time, which, however, may still be slow when the polynomial exponent
is large. Instead of solving \eqref{eq::SDP} directly, Boumal \ea found that \eqref{eq::SDP} can be relaxed to a series of rank-restricted complex semidefinite programing \cite{boumal2016non}:
\begin{equation}\label{eq::rsdp}
\tag{$r$-SDP}
\min_{Y\in \mathrm{OB}(r,n)} \trace(\tM YY^H)
\end{equation}
in which
$$\mathrm{OB}(r,n)\triangleq\{Y\in \C^{n\times r}|\ddiag(YY^H)=\I \} $$ 
is the complex oblique manifold. It should be noted that \eqref{eq::rsdp} can be a tight relaxation of \eqref{eq::SDP} if some conditions are met as stated in \cref{prop::rsdp1,prop::rsdp2}, whose proofs are immediate from \cite[Theorem 2]{boumal2016non}.
\begin{prop}\label{prop::rsdp1}
If $\hat{Y}\in  \mathrm{OB}(r,n)$ is rank-deficient and second-order critical for \eqref{eq::rsdp}, then it is globally optimal for \eqref{eq::rsdp} and $\hat{X}=\hat{Y}\hat{Y}^H\in\H^n$ is globally optimal for \eqref{eq::SDP}. 
\end{prop}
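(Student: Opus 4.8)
The plan is to derive the Riemannian first- and second-order optimality conditions for \eqref{eq::rsdp} on the complex oblique manifold $\mathrm{OB}(r,n)$ and then to use rank-deficiency to promote the second-order condition into the positive semidefiniteness of a dual certificate that matches the KKT conditions of \cref{lemma::sdp}. First I would introduce the multiplier $\hat{\Lambda}\triangleq\Re\{\ddiag(\tM\hat{Y}\hat{Y}^H)\}\in\R^{n\times n}$ and the candidate certificate $\hat{S}\triangleq\tM-\hat{\Lambda}\in\H^n$. Since $\mathrm{OB}(r,n)$ is a product of unit spheres, one per row of $Y$, its normal space at $\hat{Y}$ is $\{\Lambda\hat{Y}:\Lambda\text{ real diagonal}\}$, so projecting the Euclidean gradient $2\tM\hat{Y}$ gives the Riemannian gradient $2\hat{S}\hat{Y}$, and first-order criticality yields $\hat{S}\hat{Y}=\0$. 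Computing the Euclidean Hessian $\nabla^2 f[\dot{Y}]=2\tM\dot{Y}$ of $f(Y)=\trace(\tM YY^H)$ together with the second-fundamental-form correction of the product of spheres, the Riemannian Hessian quadratic form collapses to $\innprod{\dot{Y}}{\mathrm{Hess}\,f[\dot{Y}]}=2\innprod{\dot{Y}}{\hat{S}\dot{Y}}$; hence second-order criticality is exactly $\innprod{\dot{Y}}{\hat{S}\dot{Y}}\geq0$ for every tangent $\dot{Y}\in T_{\hat{Y}}\mathrm{OB}(r,n)$, as in \cite{boumal2016non}.

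The crux is to upgrade this to $\hat{S}\succeq0$ by exploiting rank-deficiency. Since $\rank(\hat{Y})<r$, the $r$ columns of $\hat{Y}$ are linearly dependent, so there is $w\in\C^r\setminus\{\0\}$ with $\hat{Y}w=\0$. For an arbitrary $x\in\C^n$ I would test the rank-one direction $\dot{Y}=xw^H$: because $\dot{Y}\hat{Y}^H=x(\hat{Y}w)^H=\0$, its diagonal vanishes and $\dot{Y}$ is tangent to $\mathrm{OB}(r,n)$ at $\hat{Y}$. Substituting into the second-order condition gives $\innprod{\dot{Y}}{\hat{S}\dot{Y}}=\|w\|^2\,x^H\hat{S}x\geq0$, and since $\|w\|^2>0$ this forces $x^H\hat{S}x\geq0$ for every $x\in\C^n$, that is, $\hat{S}\succeq0$.

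It then remains to verify the five conditions of \cref{lemma::sdp} for $\hat{X}=\hat{Y}\hat{Y}^H$ and $\hat{S}$: conditions (1)--(2) hold because $\ddiag(\hat{Y}\hat{Y}^H)=\I$ gives $\diag(\hat{X})=\1$ and $\hat{X}\succeq0$; condition (3) is $\hat{S}\hat{X}=\hat{S}\hat{Y}\hat{Y}^H=\0$; condition (4) holds since $\tM-\hat{S}=\hat{\Lambda}$ is real diagonal; and condition (5) is the $\hat{S}\succeq0$ just established. Hence $\hat{X}$ is globally optimal for \eqref{eq::SDP}. Finally, every $Y\in\mathrm{OB}(r,n)$ yields a feasible $X=YY^H$ for \eqref{eq::SDP} with $\trace(\tM YY^H)=\innprod{\tM}{X}\geq\innprod{\tM}{\hat{X}}=\trace(\tM\hat{Y}\hat{Y}^H)$, so $\hat{Y}$ is globally optimal for \eqref{eq::rsdp} as well. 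I expect the main obstacle to be the careful computation of the Riemannian Hessian on the complex oblique manifold --- in particular making the normal-space correction collapse $\tM$ into $\hat{S}$ --- and keeping the complex and real inner-product conventions consistent throughout.
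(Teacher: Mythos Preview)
Your argument is correct and is precisely the content of \cite[Theorem~2]{boumal2016non} specialized to the complex oblique manifold; the paper does not spell this out but simply states that the proposition is ``immediate from \cite[Theorem~2]{boumal2016non}.'' In other words, you have unpacked the cited result rather than taken a different route: the multiplier $\hat{\Lambda}=\Re\{\ddiag(\tM\hat{Y}\hat{Y}^H)\}$, the certificate $\hat{S}=\tM-\hat{\Lambda}$, the rank-one tangent direction $\dot{Y}=xw^H$ built from a null vector $w$ of $\hat{Y}$, and the verification of the KKT conditions in \cref{lemma::sdp} are exactly the ingredients of Boumal's proof, here adapted to the complex setting. What you gain by writing it out is a self-contained treatment and an explicit check that the complex/real inner-product conventions and the tangent/normal decomposition of $\mathrm{OB}(r,n)$ behave as expected; what the paper gains by citing is brevity. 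Your anticipated obstacle --- the Riemannian Hessian computation --- is handled correctly: the derivative of $\Re\{\ddiag(\tM YY^H)\}$ contributes a term of the form $(\text{real diagonal})\,\hat{Y}$, which lies in the normal space and is killed by $\mathrm{proj}_{\hat{Y}}$, leaving $\innprod{\dot{Y}}{\mathrm{Hess}\,F[\dot{Y}]}=2\innprod{\dot{Y}}{\hat{S}\dot{Y}}$ as you claim.
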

\begin{prop}\label{prop::rsdp2}
If $r\geq \lceil\sqrt{n}\,\rceil$, then for almost all $\tM\in \C^{n\times n}$, every first-order critical $\hat{Y}\in \mathrm{OB}(r,n)$ for \eqref{eq::rsdp} is rank-deficient.
\end{prop}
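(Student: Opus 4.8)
The plan is to obtain \cref{prop::rsdp2} as the complex-Hermitian counterpart of \cite[Theorem 2]{boumal2016non}, so the main task is to cast \eqref{eq::rsdp} as the Burer--Monteiro factorization of \eqref{eq::SDP} and to verify that the genericity argument transfers from the real symmetric to the complex Hermitian setting. First I would record that \eqref{eq::SDP} is a standard-form semidefinite program over $\H^n$ carrying the $n$ real equality constraints $\diag(X)=\1$ together with the single conic constraint $X\succeq 0$, and that \eqref{eq::rsdp} is precisely its rank-$r$ factorization $X=YY^H$ restricted to the compact manifold $\mathrm{OB}(r,n)$. Since $\trace(\tM YY^H)$ is quadratic in $Y$ and linear in $X=YY^H$, the problem fits the smooth-SDP template of \cite{boumal2016non}, and the number of real constraints equals $n$.

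Next I would make the first-order criticality condition explicit, generalizing \cref{lemma::qp} from rank one to rank $r$. The Euclidean gradient of $\trace(\tM YY^H)$ is $2\tM Y$, and projecting it onto the tangent space of $\mathrm{OB}(r,n)$ shows that $\hat{Y}$ is a first-order critical point if and only if $\hat{S}\hat{Y}=\0$, where $\hat{S}=\tM-\hat{\Lambda}$ and $\hat{\Lambda}=\Re\{\ddiag(\tM\hat{Y}\hat{Y}^H)\}\in\R^{n\times n}$ is real diagonal; note that, unlike the second-order condition, this imposes no definiteness on $\hat{S}$. If $\hat{Y}$ had full column rank $r$, then $\range(\hat{Y})$ would be an $r$-dimensional subspace of $\ker\hat{S}$, forcing $\rank\hat{S}\le n-r$. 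Hence a full-rank first-order critical point can exist only if the $n$-dimensional affine family $\{\tM-\diag(\lambda):\lambda\in\R^n\}$ meets the determinantal variety $\VV_{n-r}\triangleq\{W\in\H^n:\rank W\le n-r\}$.

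The heart of the argument is then a dimension count combined with a parametric-transversality (Sard) step, carried out over the Hermitian matrices exactly as in \cite[Theorem 2]{boumal2016non}. In $\H^n$, which has real dimension $n^2$, the stratum of Hermitian matrices of rank exactly $n-r$ has real codimension $r^2$, so $\VV_{n-r}$ has codimension $r^2$. Viewing $(\tM,\lambda)\mapsto\tM-\diag(\lambda)$ as a submersion onto $\H^n$, parametric transversality shows that for almost all $\tM\in\H^n$ the affine map $\lambda\mapsto\tM-\diag(\lambda)$ is transverse to every stratum of $\VV_{n-r}$; when $r^2>n$ the $n$-dimensional source is too small to meet a codimension-$r^2$ set transversally, so the intersection is empty and no first-order critical $\hat{Y}$ can have full column rank. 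The hypothesis $r\ge\lceil\sqrt{n}\,\rceil$ is exactly the complex Pataki threshold securing $r^2\ge n$, which gives $r^2>n$ whenever $n$ is not a perfect square.

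I expect the genericity step, rather than the algebra, to be the main obstacle. One must confirm that the real-variable transversality machinery of \cite{boumal2016non} transfers verbatim once $\H^n$ replaces the real symmetric matrices; that the correct codimension of the complex rank-deficient variety is $r^2$ (rather than the $\binom{r+1}{2}$ governing the real case, which would give a different threshold); and, most delicately, that the boundary case in which $n$ is a perfect square and $r^2=n$ is handled, since there the naive count only yields isolated intersections and one must invoke the unit-norm row constraints defining $\mathrm{OB}(r,n)$ (equivalently, the additional requirement that $\ker\hat{S}$ admit a full-rank $\hat{Y}$ with $\diag(\hat{Y}\hat{Y}^H)=\1$) to rule out genuine full-rank critical points.
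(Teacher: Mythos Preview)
Your proposal is correct and follows the same route as the paper: both obtain the proposition directly from \cite[Theorem 2]{boumal2016non}. The paper's own proof is simply the one-line citation ``immediate from \cite[Theorem 2]{boumal2016non}'', so your sketch of the underlying dimension count and transversality argument in the Hermitian setting, together with your identification of the $r^2=n$ boundary case, actually supplies more detail than the paper does.
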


\cref{prop::rsdp1,prop::rsdp2} are referred as the Burer-Monteiro guarantees for smooth semidefinite programming \cite{boumal2016non} that apply to a number of classic estimation problems. From \cref{prop::rsdp1,prop::rsdp2}, it can be concluded that \eqref{eq::SDP} is equivalent to successively solving \eqref{eq::rsdp} with the Riemannian trust region (RTR) method \cite{absil2007trust} for $2\leq r_1<r_2<\cdots<r_k\leq n+1$ until a rank-deficient second-order critical point is found, and such a method to solve semidefinite programming is referred as the Riemannian staircase optimization (\cref{algorithm::stair}) \cite{boumal2016non,boumal2015riemannian}. In \cite{rosen2016se,bandeira2017tightness,boumal2015riemannian}, the Riemannian staircase optimization has been used to solve a number of semidefinite relaxations of synchronization problems. In addition, it is known that the RTR method solves \eqref{eq::rsdp} locally in polynomial time \cite[Proposition 3]{boumal2016non}. In contrast to using interior point methods to solve \eqref{eq::SDP} directly, the Riemannian staircase optimization using the RTR method is empirically orders of magnitude faster in solving large-scale smooth semidefinite programming. 

As shown in \cref{algorithm::rounding}, the solution rounding of an optimum of  $Y^*\in \mathrm{OB}(r,n)$ of \eqref{eq::rsdp} is simply to assign $\hat{\rot}=\begin{bmatrix}
\hat{\rot}_1 & \cdots &\hat{\rot}_n
\end{bmatrix}\in \C^n$ to be the left-singular vector of $Y^*$ that is associated with the greatest singular value, and then normalize each $\rot_i$ to get $\hat{\rot}\in\C_1^n$. The solution rounding algorithm is exact if $\rank(Y^*)=1$. Moreover, it should be noted that the solution rounding algorithm can recover the global optimum $\hat{\rot}^*\in \C_1^n$ from $Y^*$ as long as the exactness of \eqref{eq::SDP} holds and $X=Y^*{Y^*}^H$ solves \eqref{eq::SDP}.

From algorithms of Riemannian staircase optimization (\cref{algorithm::stair}) and solution rounding (\cref{algorithm::rounding}), the proposed CPL-SLAM algorithm for planar graph-based SLAM is as shown in \cref{algorithm::cpl}, which follows a similar procedure to SE-Sync \cite{rosen2016se}. It should be noted that \cref{lemma::qp,lemma::qpsdp,lemma::sdp} can be used to certify the global optimality of the solution, and  \cref{prop::sdp,prop::rsdp1,prop::rsdp2} indicate that the CPL-SLAM algorithm is expected to retrieve the globally optimal solution to  the planar graph-based SLAM as long as the noise magnitude is below a certain threshold. Therefore, it can be concluded that CPL-SLAM is certifiably correct.

\begin{algorithm}[t]
	\caption{The Riemannian staircase optimization (RSO)}
	\label{algorithm::stair}
	\begin{algorithmic}[1]
	\State\textbf{Input}: Integers $2\leq r_0<r_1<\cdots<r_k\leq n+1$; an initial iterate $\rot_0\in\C_1^n$
	\vspace{0.3em}
	\State  $Y_0=\begin{bmatrix}
	\hat{\rot}_0 &  \0
	\end{bmatrix}\in \mathrm{OB}(r_0,n)$
	\vspace{0.3em}
	\For{$i=1\rightarrow k$}
	\State Implement the Riemannian optimization to solve $$Y_i^*=\arg\min\limits_{Y\in \mathrm{OB}(r_i,n)} \trace(\tM YY^H)$$ locally with $Y_i$ as an  initial guess
	\If{$\rank(Y_i^*)<r_i$}
	\State \textbf{return} ${Y}_i^*\in\mathrm{OB}(r_i,n)$ 
	\Else
	\State $Y_{i+1}=\begin{bmatrix}
	\hat{Y}_i & \0
	\end{bmatrix}\in \mathrm{OB}(r_{i+1},n)$
	\EndIf
	\EndFor
	\State \textbf{return} $Y_i^*\in \mathrm{OB}(r_k,n)$
	\end{algorithmic}
\end{algorithm}

\begin{algorithm}[!t]
	\caption{The rounding procedure for solutions of \eqref{eq::rsdp}}
	\label{algorithm::rounding}
	\begin{algorithmic}[1]
		\State\textbf{Input}: An optimum $Y^*\in\mathrm{OB}(r,n)$ to \eqref{eq::rsdp}
		\vspace{0.3em}
		\State  Assign $\hat{\rot}=\begin{bmatrix}
		\hat{\rot}_1 & \cdots & \hat{\rot}_n
		\end{bmatrix}^\transpose \in \C^n$ to be  the left-singular vector of $Y^*$ that is associated with the greatest singular value
		\For{$i=1\rightarrow n$}
		\State $\hat{\rot}_i = \dfrac{\hat{\rot}_i}{|\hat{\rot}_i|} $
		\EndFor
		\State \textbf{return} $\hat{\rot}\in\C_1^n$
	\end{algorithmic}
\end{algorithm}

\begin{algorithm}[!t]
	\caption{The CPL-SLAM algorithm}
	\label{algorithm::cpl}
	\begin{algorithmic}[1]
		\State\textbf{Input}: Integers $2\leq r_0<r_1<\cdots<r_k\leq n+1$; an initial iterate $\rot_0\in\C_1^n$
		\vspace{0.3em}
		\State  Implement \cref{algorithm::stair} to compute an optimum $Y^*\in\mathrm{OB}(r,n)$
		\vspace{0.2em}
		\State  Implement \cref{algorithm::rounding} to compute rotational states $\hat{\rot}\in \C_1^n$
		\State Implement \cref{eq::sol_lin} to compute translational states $\hat{\tran}\in \C^n$
		\State \textbf{return} $\hat{\rot}\in\C_1^n$ and $\hat{\tran}\in\C^n$
	\end{algorithmic}
\end{algorithm}

Similar to SE-Sync \cite{rosen2016se} and Cartan-Sync \cite{briales2017cartan}, CPL-SLAM uses the trust-region method on Riemannian manifolds that relies on the truncated conjugated gradient (TCG) method to evaluate the descent direction  \cite{absil2007trust}. The TCG method iteratively solves linear equations and improves the solution to necessary accuracy within finite iterations, which is usually faster than direct methods. In general, the TCG method needs a preconditioner to accelerate the convergence. Even though the choice of preconditioner for graph-based SLAM without landmarks is immediate \cite{rosen2016se}, there is still a lack of a suitable preconditioner for graph-based SLAM with landmarks. To address this issue, we also propose a preconditioner for graph-based SLAM with landmarks in Appendix \hyperref[app::C]{C}.

Even though the positive semidefinite matrix is not explicitly formed in CPL-SLAM or SE-Sync to solve planar graph-based SLAM, it can be seen that CPL-SLAM using the complex number representation results in semidefinite relaxations of smaller size than \cite{rosen2016se,carlone2016planar}. In the semidefinite relaxation of CPL-SLAM, the  $n\times n$ complex positive semidefinite matrix $X\in \H^n$  can be parameterized with $n^2$ real numbers, whereas the semidefinite relaxation in \cite{rosen2016se} using the matrix representation needs $2n^2+n$ real numbers to parameterize the $2n\times 2n$ real positive semidefinite matrix, and that in \cite{carlone2016planar} needs $4n^2$ real numbers to parameterize the $2n\times 2n$ complex positive semidefinite matrix.

It is obvious that the complex number representation is more concise than the matrix representation, and as a result, CPL-SLAM roughly requires half as much storage space as SE-Sync \cite{rosen2016se}. More importantly, as is discussed in \cref{section::result}, from both theoretical and empirical perspectives, the conciseness of the complex number representation reduces the computational cost a lot and renders the semidefinite relaxation much tighter, and thus, the resulting CPL-SLAM algorithm is much more efficient in numerical computation and much more robust to measurement noise than \cite{rosen2016se}.

In contrast to the works of \cite{rosen2016se,carlone2016planar,mangelson2018guaranteed}, CPL-SLAM is more general and more scalable. As mentioned before, CPL-SLAM is more efficient, tighter and more robust than SE-Sync \cite{rosen2016se}. Even though we use the same complex number representation as \cite{carlone2016planar}, our formulation is simpler and only depends on rotational states $\rot\in\C_1^n$, whereas \cite{carlone2016planar} involves both translational and rotational states $\tran\in\C^n$ and $\rot\in \C_1^n$. Moreover, \cite{carlone2016planar} mainly focuses on the optimality verification of planar pose graph optimization, whereas ours not only works on optimality verification and obtains stronger theoretical results, but also presents more scalable algorithms to solve planar graph-based SLAM. In \cite{mangelson2018guaranteed}, the authors use bounded sum of squares programming to solve planar graph-based SLAM. Even though \cite{mangelson2018guaranteed} always attains the globally optimal solution regardless of the measurement noise, it relies on sparse sum of squares programming, which, to our knowledge, has limited scalability for large-scale problems. As a result, CPL-SLAM can be expected to outperform \cite{mangelson2018guaranteed} by several orders of magnitude in terms of computational time. Last but not least, except for \cite{mangelson2018guaranteed}, the works of \cite{rosen2016se,carlone2016planar} are designed for planar pose graph optimization or angular synchronization, whereas ours considers the planar graph-based SLAM that has both poses and landmarks.

\begin{figure*}[!htbp]
	\centering
	\begin{tabular}{ccc}
		\subfloat[][]{\includegraphics[trim =0mm 0mm 0mm 0mm,width=0.312\textwidth]{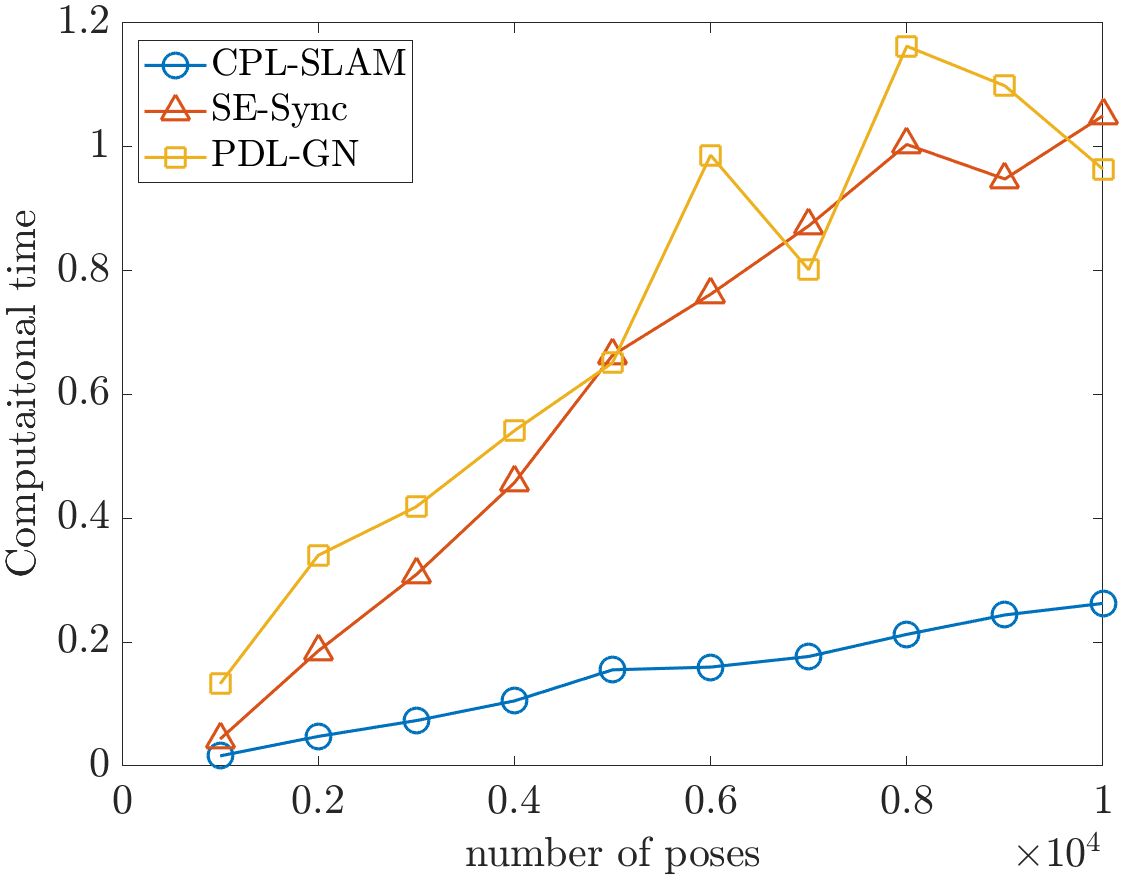}} &
		\subfloat[][]{\includegraphics[trim =0mm 0mm 0mm 0mm,width=0.312\textwidth]{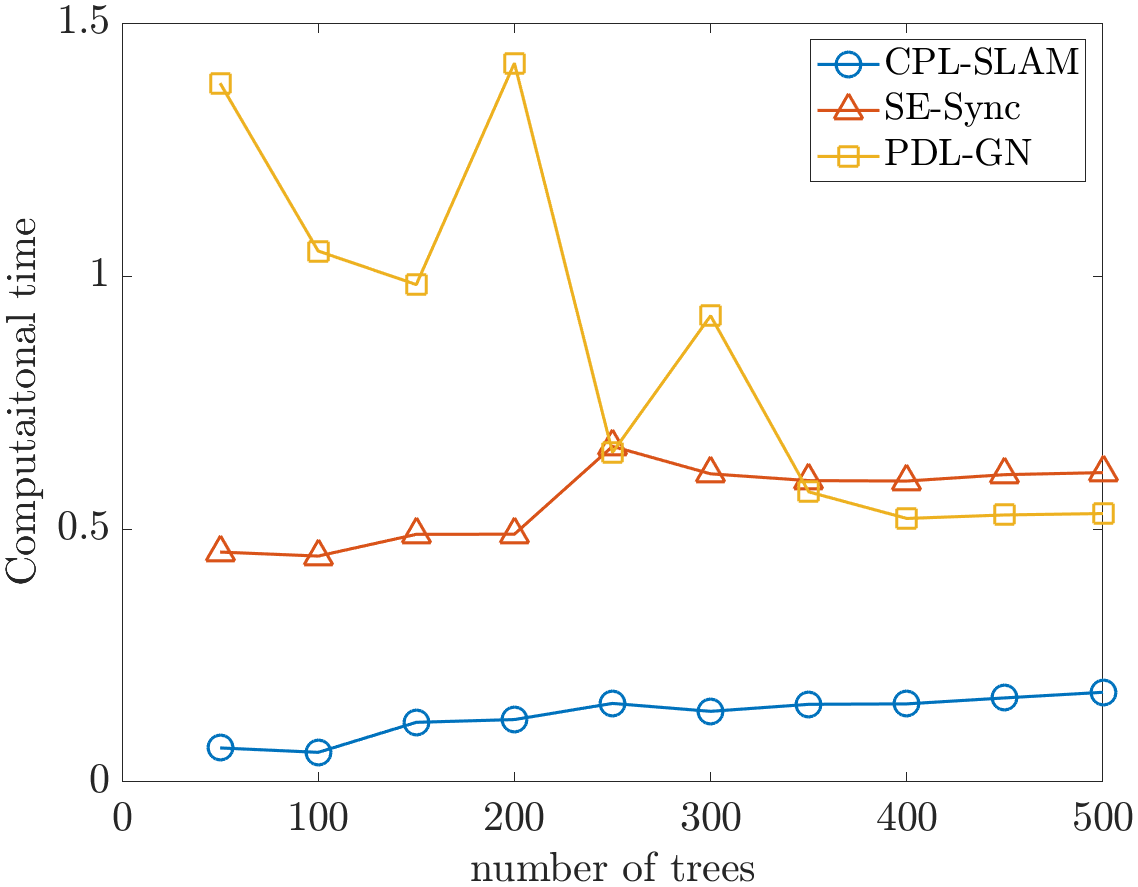}} & 
		\subfloat[][]{\includegraphics[trim =0mm 0mm 0mm 0mm,width=0.312\textwidth]{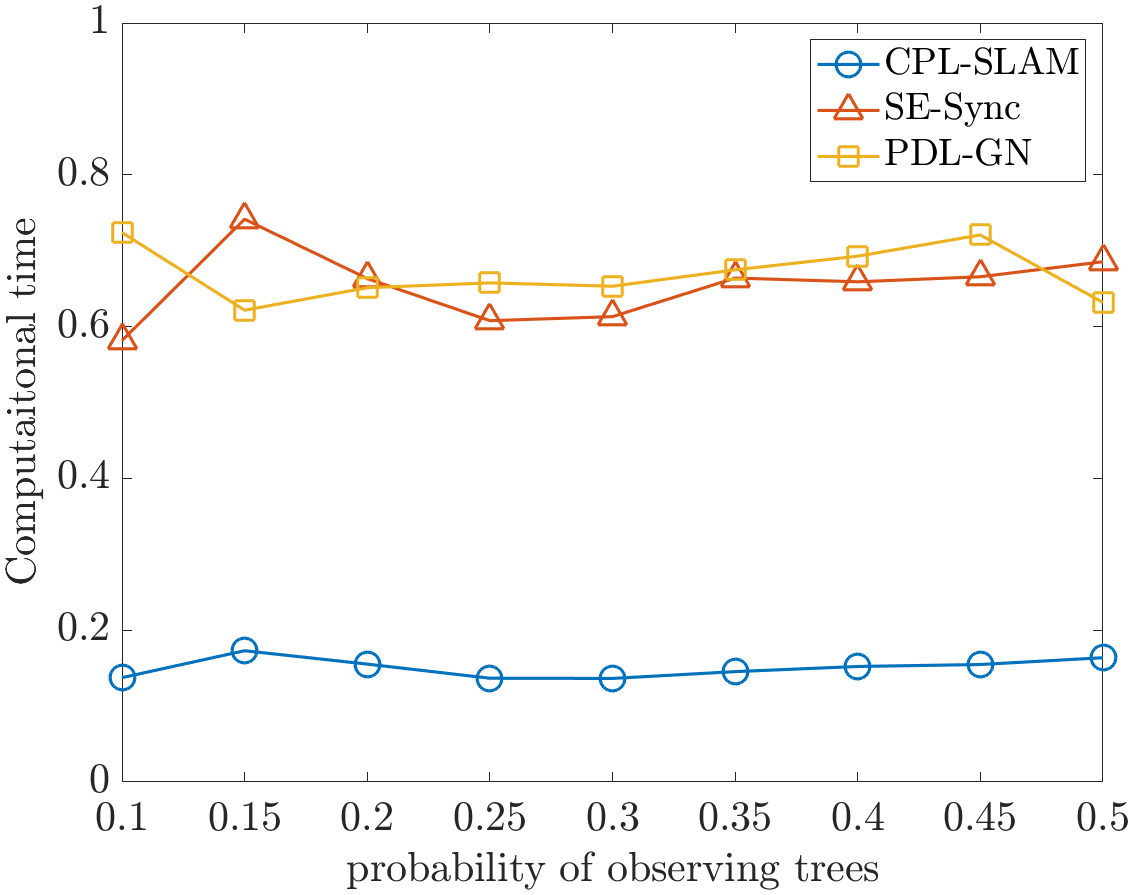}} \\
		\subfloat[][]{\includegraphics[trim =0mm 0mm 0mm 0mm,width=0.312\textwidth]{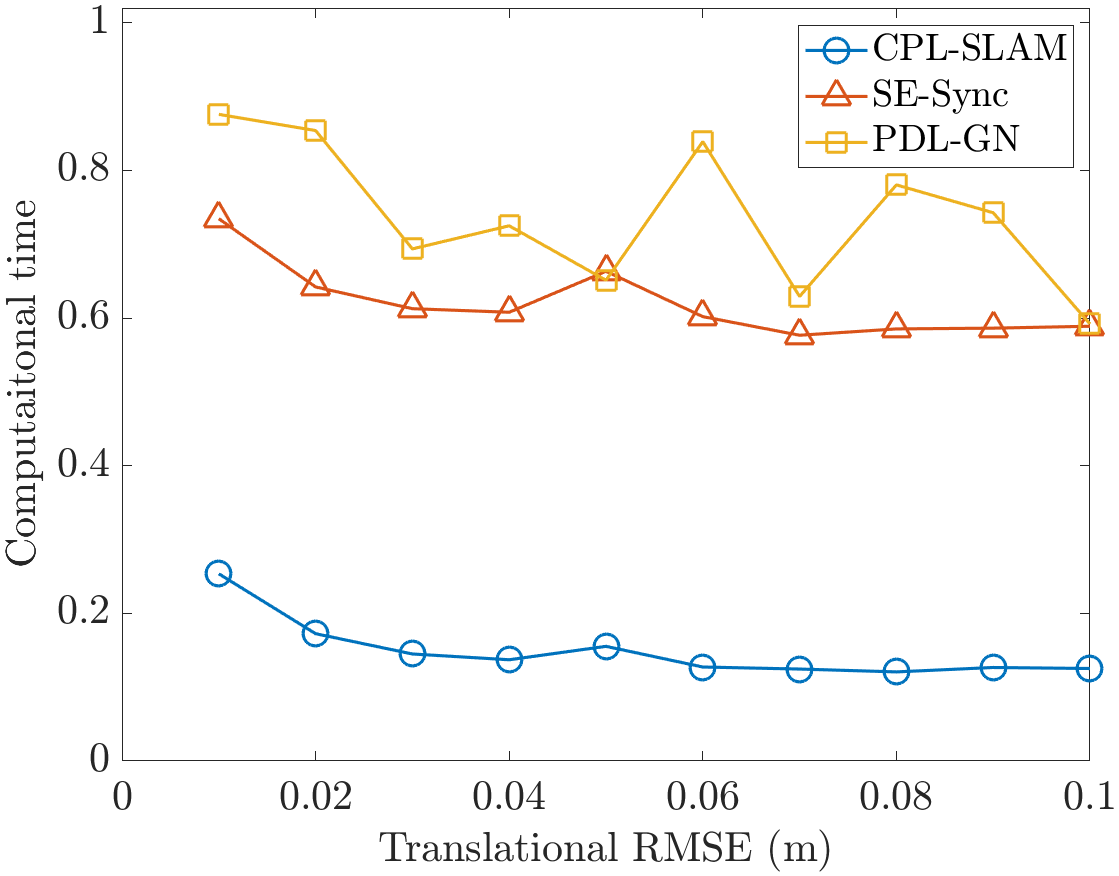}}&
		\subfloat[][]{\includegraphics[trim =0mm 0mm 0mm 0mm,width=0.312\textwidth]{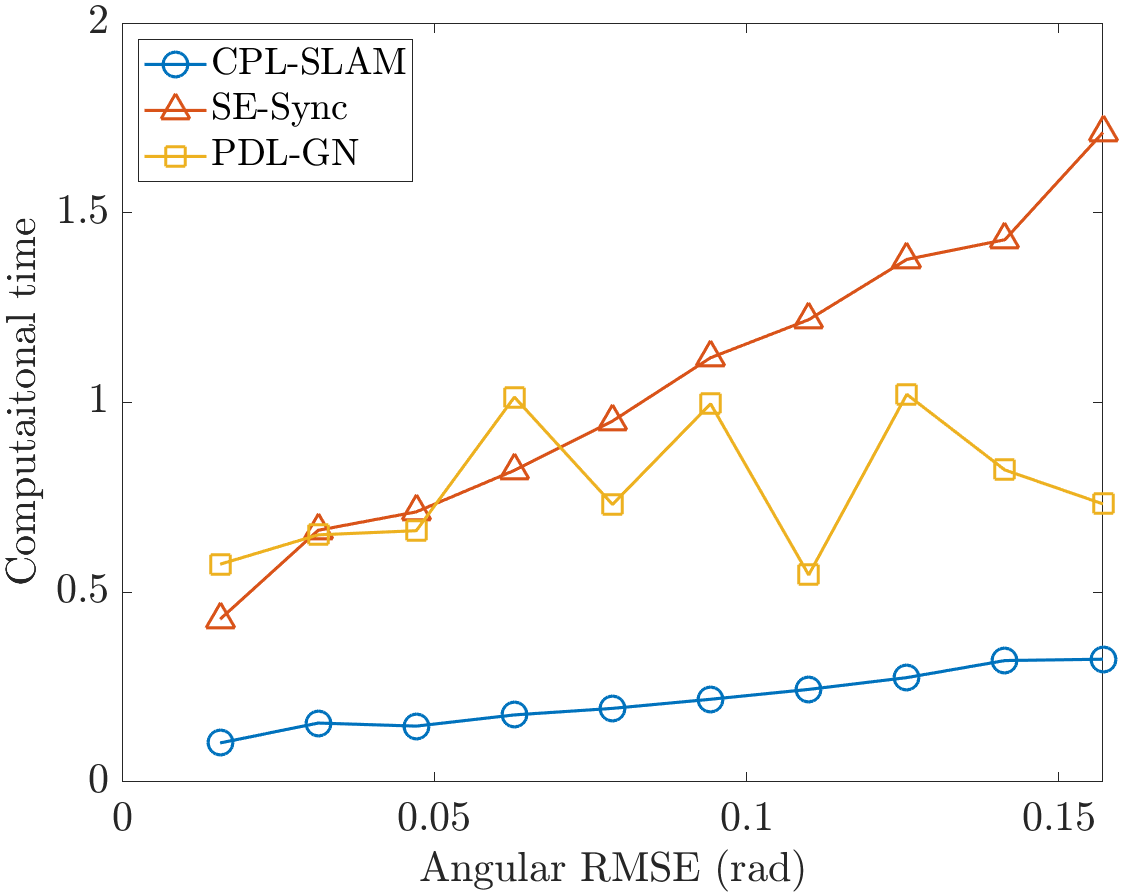}}&
		\subfloat[][]{\includegraphics[trim =0mm 0mm 0mm 0mm,width=0.312\textwidth]{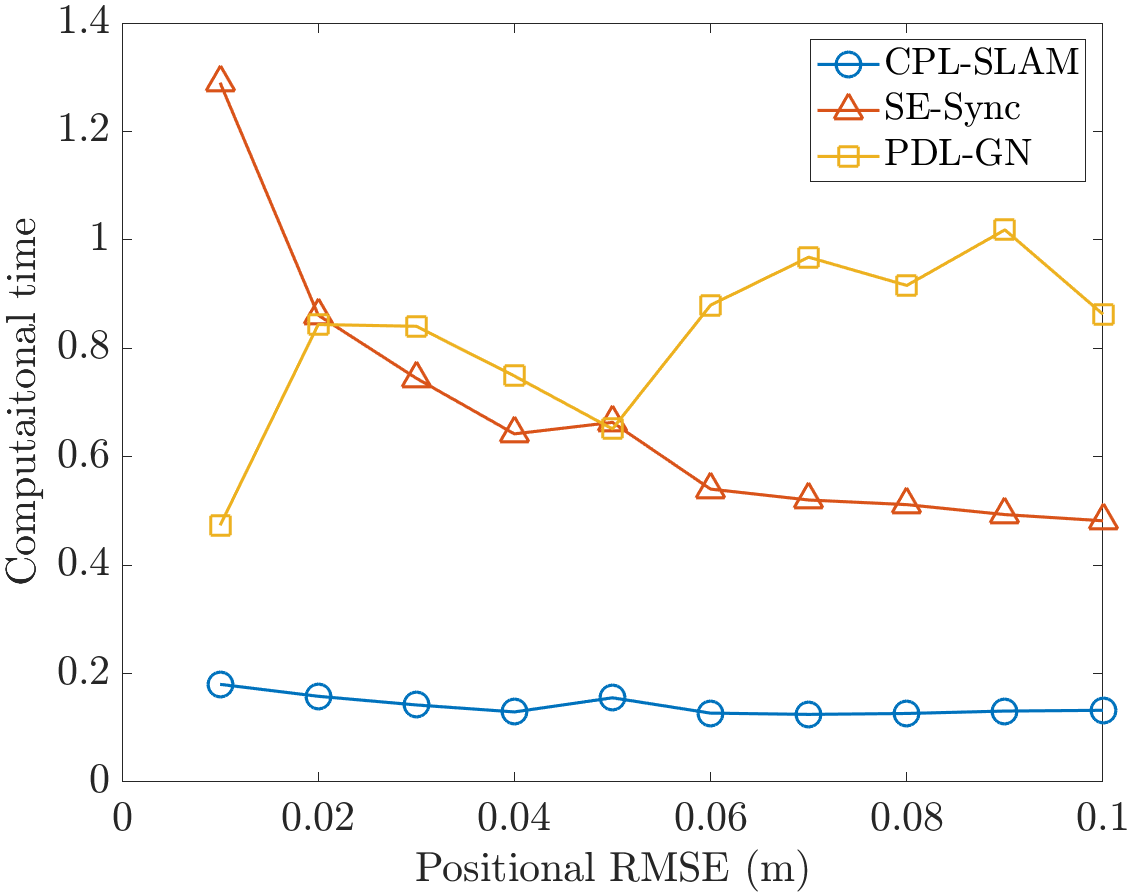}} 
	\end{tabular}
	\caption{The computational time of CPL-SLAM, SE-Sync and PDL-GN on the {\small\textsf{Tree}} datasets with varying each parameter individually while keeping the other parameters to be default values. The chordal initialization is used for all the tests. The results of each varying parameter are the number of poses $n$ in (a), the number of trees $n'$ in (b), the probability of observing trees $p_L$ in (c), translational RMSEs of $\sigma_t$ in (d), angular RMSEs of $\sigma_R$ in (e) and positional RMSEs of $\sigma_l$ in (f). The default values are $n=5000$, $n'=250$, $p_L=0.2$, $\sigma_t=0.05$ m, $\sigma_R=0.015\pi$ rad and $\sigma_l=0.05$ m. For all the {\small\textsf{Tree}} datasets tested, it can be seen that CPL-SLAM is around $4\sim 5$ times faster than SE-Sync and PDL-GN, whereas SE-Sync and PDL-GN are roughly as fast as each other.} 
	\label{fig::tree_time}
	\vspace{-0.25em} 
\end{figure*}

\section{The Results of Dataset Experiments}\label{section::result}
In this section, we implement CPL-SLAM on the simulated {\sf\small Tree} datasets,  simulated {\sf\small City} datasets  and a suite of large-scale 2D SLAM benchmark datasets with and without landmarks \cite{latif2014robust,rosen2016se,carlone2016planar}. We compare CPL-SLAM with the popular state-of-the-art SE-Sync \cite{rosen2016se} and Powell's Dog-Leg method (PDL-GN) \cite{dellaert2012factor,rosen2014rise}. Even though the original algorithms of SE-Sync \cite{rosen2016se} are not designed for problems with landmarks, we extend SE-Sync following a similar procedure as CPL-SLAM. For the linear solvers to compute a descent direction, CPL-SLAM and SE-Sync \cite{rosen2016se} use the indirect and iterative truncated conjugate gradient method, whereas PDL-GN \cite{dellaert2012factor,rosen2014rise} uses the sparse direct method. The C++ code of CPL-SLAM is available at \url{https://github.com/MurpheyLab/CPL-SLAM}.

All the experiments have been performed on a laptop with an Intel i7-8750H CPU and 32GB of RAM running Ubuntu 18.04 and using {\tt g++ 7.8} as C++ compiler. We have done the computation on a single core of CPU. For all the experiments, we choose the initial rank to be $r_{\mathrm{SE}}=3$ and $r_{\mathrm{CPL}}=2$ for SE-Sync and CPL-SLAM, respectively, since we find that $r_{\mathrm{SE}}=3$ and $r_{\mathrm{CPL}}=2$ are in general good enough for SE-Sync and CPL-SLAM to solve planar graph-based SLAM given the noise levels in robotics and computer vision applications.

\subsection{\textsf{\small Tree} Datasets}
In this subsection, we evaluate the performance of CPL-SLAM, SE-Sync and PDL-GN on the simulated {\sf\small Tree} datasets that are similar to {\sf\small tree10000} (\cref{fig::map}k). A \ssf{Tree} dataset is consisted of $25\times 25$ square grids in which each grid has side length of $1$ m, and a robot trajectory of $n$ poses along the rectilinear path of the square grid, and $n'$ trees (landmarks) that are randomly distributed in the centre of some square grids. Odometric pose-pose measurements are available between each pair of sequential poses along the robot trajectory, whereas pose-landmark measurements between poses and trees that are close to each other are available with a probability of $p_L$; the pose-pose measurements $\tilde{\rho}_{ij}=(\tilde{\tran}_{ij},\tilde{\rot}_{ij})$  and pose-landmark measurements $\tilde{\lmk}_{ij}$ are generated from the noise models of \cref{eq::observation}. In our experiments, we investigate the performance of these algorithms by varying each parameter individually and the default values for these parameters are chosen to be $n=5000$, $n'=250$, $p_L=0.2$, $\tilde{\tran}_{ij}$ with an expected translational root-mean-squared error (RMSE) of $\sigma_t = 0.05$ m, $\tilde{\rot}_{ij}$ with an expected angular RMSE of $\sigma_R=0.015\pi$ rad, and $\tilde{s}_{ij}$ with an expected positional RMSE of $\sigma_l=0.05$ m.

For all the {\small\textsf{Tree}} datasets tested, CPL-SLAM, SE-Sync and PDL-GN all converge to the global optima when using the chordal initialization. As is shown in \cref{fig::tree_time}, it can be seen that CPL-SLAM is around $4\sim 5$ times faster than SE-Sync and PDL-GN, whereas SE-Sync and PDL-GN are roughly as fast as each other.

The speed-up of CPL-SLAM over SE-Sync \cite{rosen2016se} in planar graph-based SLAM can be explained from several perspectives. 1) CPL-SLAM is more efficient for the objective and gradient evaluation, e.g., if the rank is $r_{\mathrm{SE}}=3$ and $r_{\mathrm{CPL}}=2$, CPL-SLAM only needs $\frac{1}{2}\sim \frac{2}{3}$ and $\frac{1}{4}\sim \frac{2}{3}$ operations of SE-Sync to evaluate the objective and gradient, respectively. 2) CPL-SLAM is more efficient in terms of the projection or retraction onto the manifold than SE-Sync -- the projection map of CPL-SLAM is just to normalize $n$ vectors, whereas that of SE-Sync has to compute $n$ singular value decompositions, which is much more time consuming. 3) CPL-SLAM is more efficient for chordal initialization and solution rounding. 4) As a result of the conciseness of the complex number representation, the preconditioner used in CPL-SLAM has a better approximation the Hessian matrix than SE-Sync, and thus, has a faster convergence of the truncated conjugate gradient method that the Riemannian trust region method implements to evaluate the descent direction. Therefore, CPL-SLAM should be theoretically more efficient than SE-Sync, which is further confirmed by the results of the experiments. 


Similar to \cite{kummerle2011g,khosoussi2015exploiting,nasiri2018recursive}, PDL-GN uses the Gauss-Newton method and might not perform well if there are large residues of the measurements and strong nonlinearities of the objective function \cite{rosen2014rise,dellaert2012factor}, whereas CPL-SLAM uses the exact Hessian to compute the Newton direction, and thus, is expected to converge faster and have better efficiency. In addition, as mentioned before, when evaluating the descent direction, PDL-GN factorizes sparse matrices to solve linear equations. In contrast, CPL-SLAM makes use of the truncated conjugate gradient method as the linear solver, which might also improve the overall efficiency of CPL-SLAM. On the other hand, since the choice of linear solvers is critical for the efficiency of optimizers, there is a possibility to improve PDL-GN's efficiency if the truncated conjugate gradient method is used.

The performance of CPL-SLAM, SE-Sync and PDL-GN is also evaluated if they are not well initialized. When the odometric initialization is used, it can be seen from \cref{fig::tree_obj} that  CPL-SLAM and SE-Sync converge to the global optima in spite of the poor initial guess, whereas PDL-GN gets stuck at the local optima and has much greater objective values.

As mentioned before, the convergence of CPL-SLAM and SE-Sync to global optima does not rely on initial guess since CPL-SLAM and SE-Sync essentially solve the semidefinite relaxation of graph-based SLAM and are guaranteed to attain the globally optimal solution as long as the magnitude of measurement noise is below a certain threshold. As a comparison, PDL-GN is a local search method whose performance is closely related with quality of initial guess, and thus the global optimum convergence of PDL-GN is usually not guaranteed even with low measurement noise.

\begin{figure*}[!htbp]
	\centering
	\begin{tabular}{ccc}
		\subfloat[][]{\includegraphics[trim =0mm 0mm 0mm 0mm,width=0.312\textwidth]{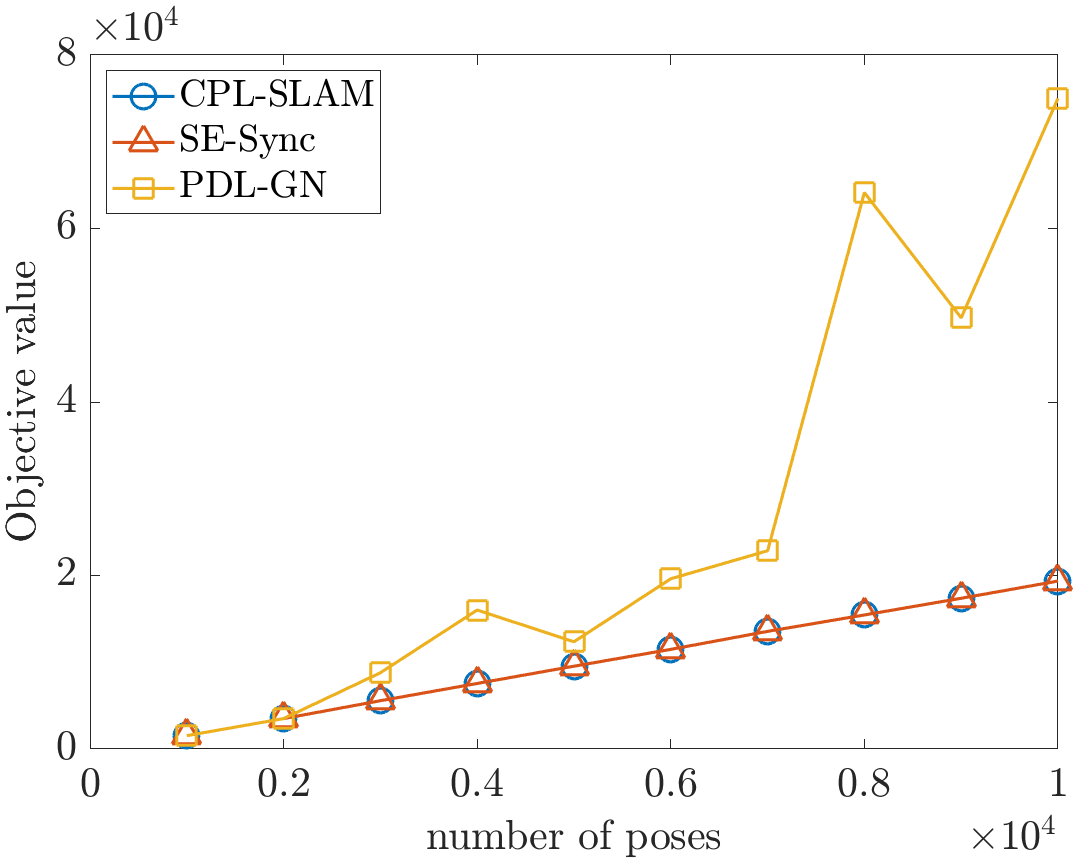}} &
		\subfloat[][]{\includegraphics[trim =0mm 0mm 0mm 0mm,width=0.312\textwidth]{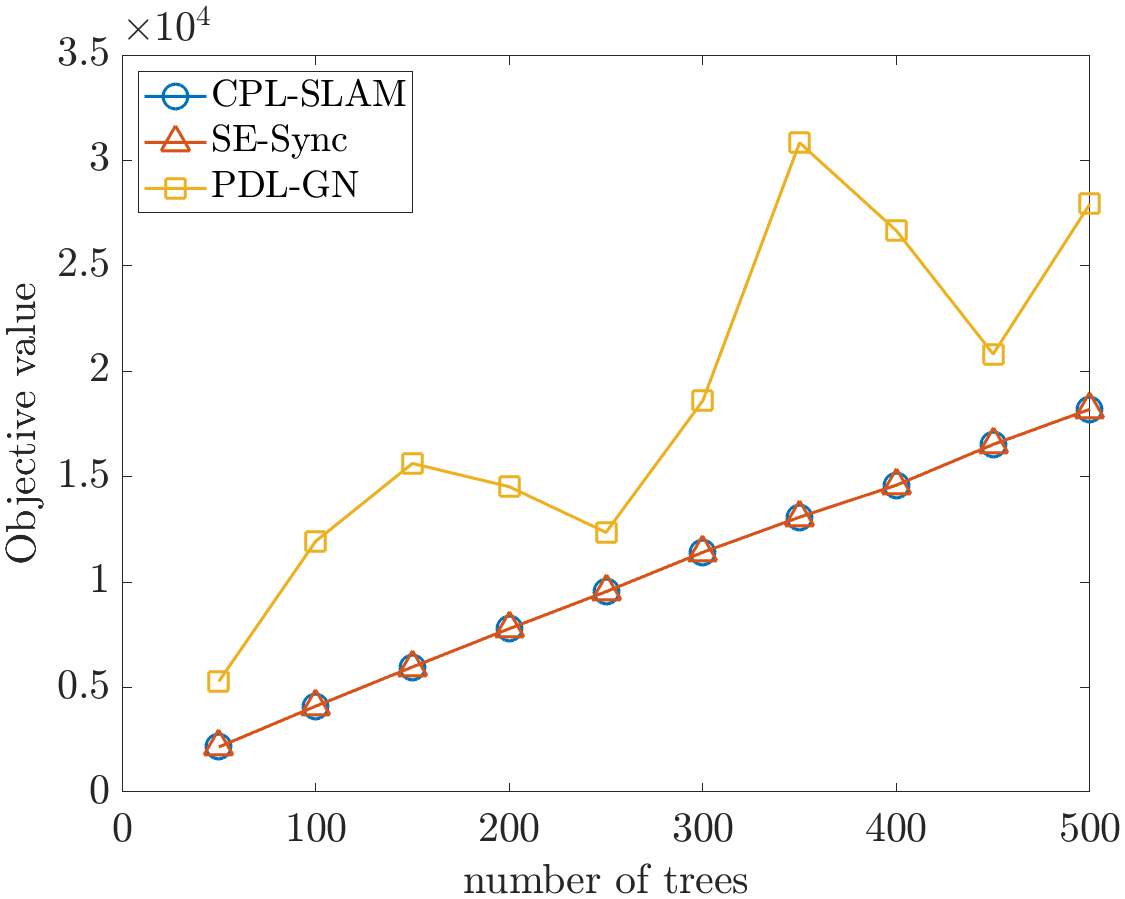}} & 
		\subfloat[][]{\includegraphics[trim =0mm 0mm 0mm 0mm,width=0.312\textwidth]{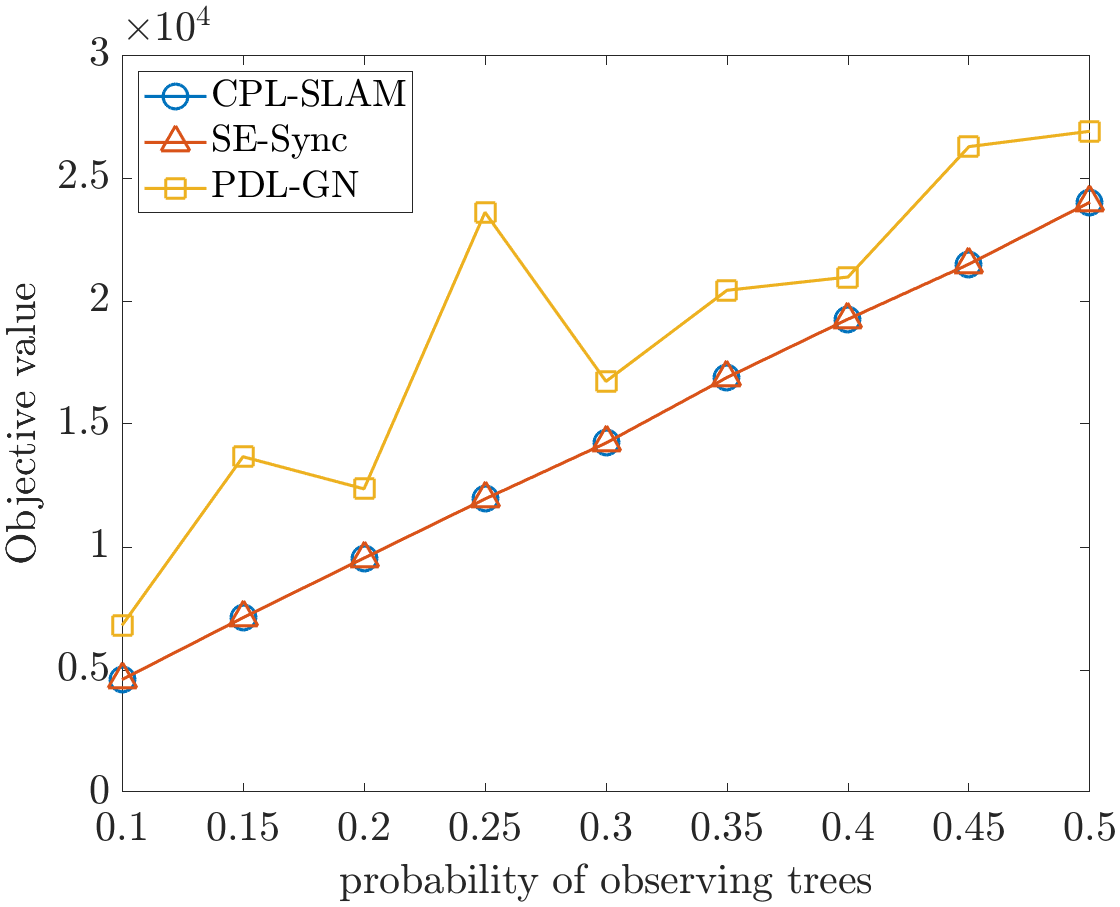}} \\[-0.2em]
		\subfloat[][]{\includegraphics[trim =0mm 0mm 0mm 0mm,width=0.312\textwidth]{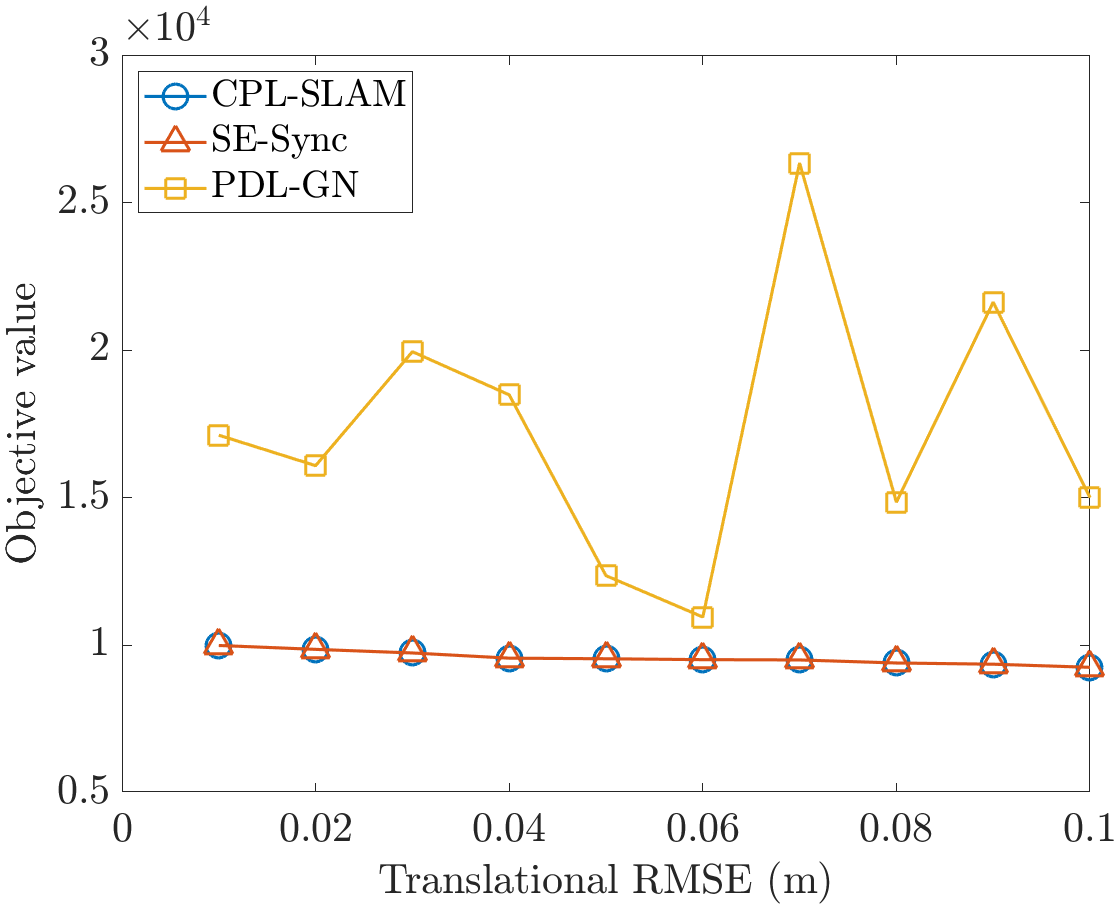}}&
		\subfloat[][]{\includegraphics[trim =0mm 0mm 0mm 0mm,width=0.312\textwidth]{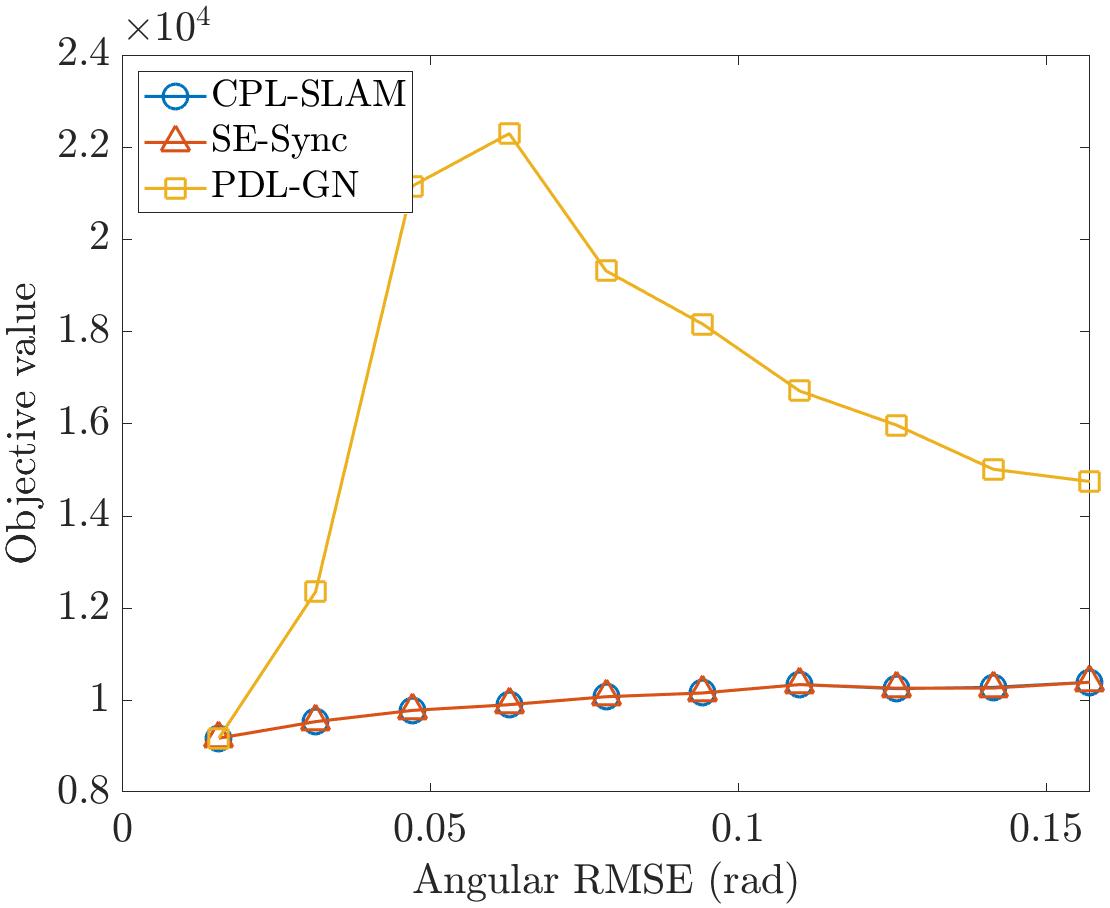}}&
		\subfloat[][]{\includegraphics[trim =0mm 0mm 0mm 0mm,width=0.312\textwidth]{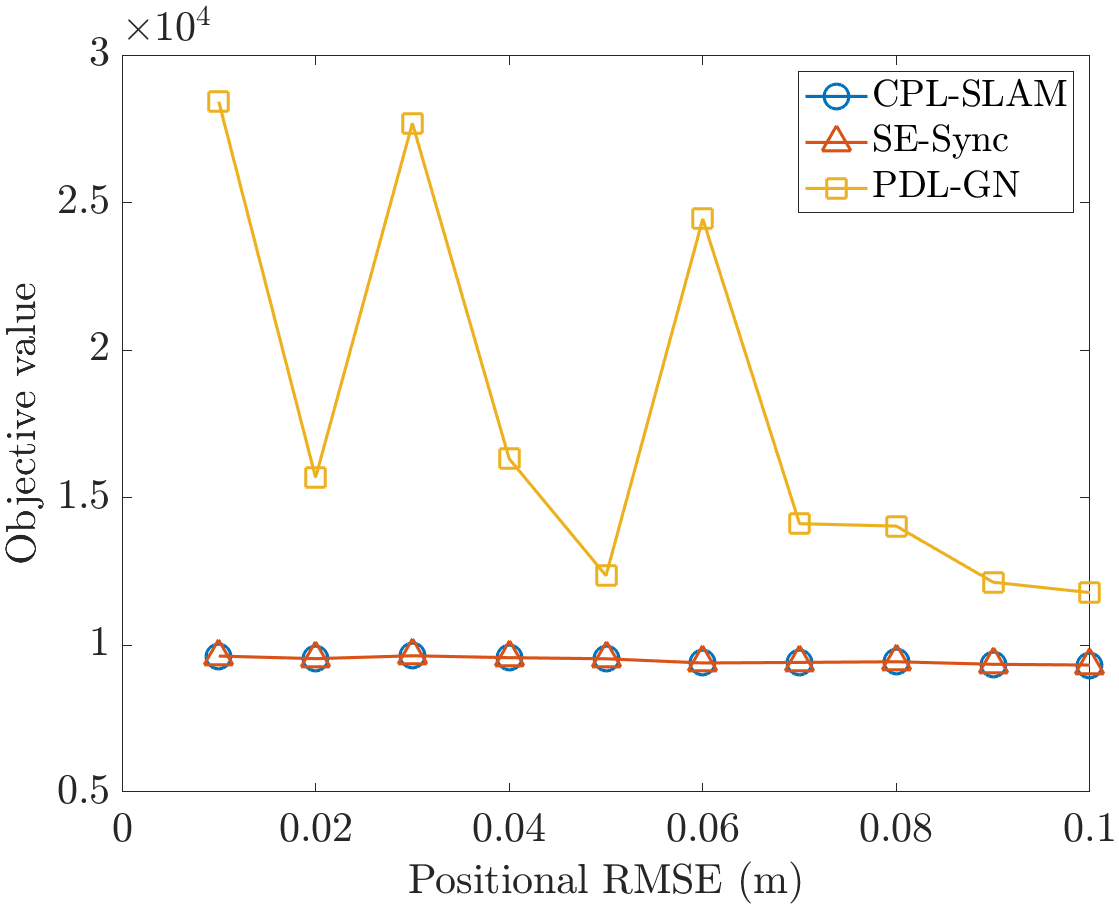}} 
	\end{tabular}
	\caption{The objective of CPL-SLAM, SE-Sync and PDL-GN on the {\small\textsf{Tree}} datasets using the odometric initialization. In the experiments, each parameter is varied individually while the other parameters are kept to be default values. The results of each varying parameter are the number of poses $n$ in (a), the number of trees $n'$ in (b), the probability of observing trees $p_L$ in (c), translational RMSEs of $\sigma_t$ in (d), angular RMSEs of $\sigma_R$ in (e) and positional RMSEs of $\sigma_l$ in (f). The default values are $n=5000$, $n'=250$, $p_L=0.2$, $\sigma_t=0.05$ m, $\sigma_R=0.015\pi$ rad and $\sigma_l=0.05$ m. For all the {\small\textsf{Tree}} datasets tested, it can be seen that CPL-SLAM and SE-Sync converge to global optima despite poor initialization, whereas PDL-GN gets stuck at local optima.} 
	\label{fig::tree_obj}
	\vspace{-0.25em} 
\end{figure*}

\subsection{\textsf{\small City} Datasets }
\vspace{-0.1em}
In this subsection, we evaluate the tightness of CPL-SLAM on a series of simulated {\sf\small City} datasets that are similar to {\sf\small city10000} (\cref{fig::map}b) but with high measurement noise. As a basis for comparison, we also evaluate the tightness of SE-Sync using the matrix representation \cite{rosen2016se}. In general, CPL-SLAM and SE-Sync are said to be tight if the globally optimal solution is exactly recovered from the semidefinite relaxation, or equivalently, there is no suboptimality gap between the rounded solution and the relaxed solution. 

\begin{figure*}[t]
	\centering
	\begin{tabular}{ccc}
		\hspace{-0.25em}\subfloat[][]{\includegraphics[trim =0mm 0mm 0mm 0mm,width=0.32\textwidth]{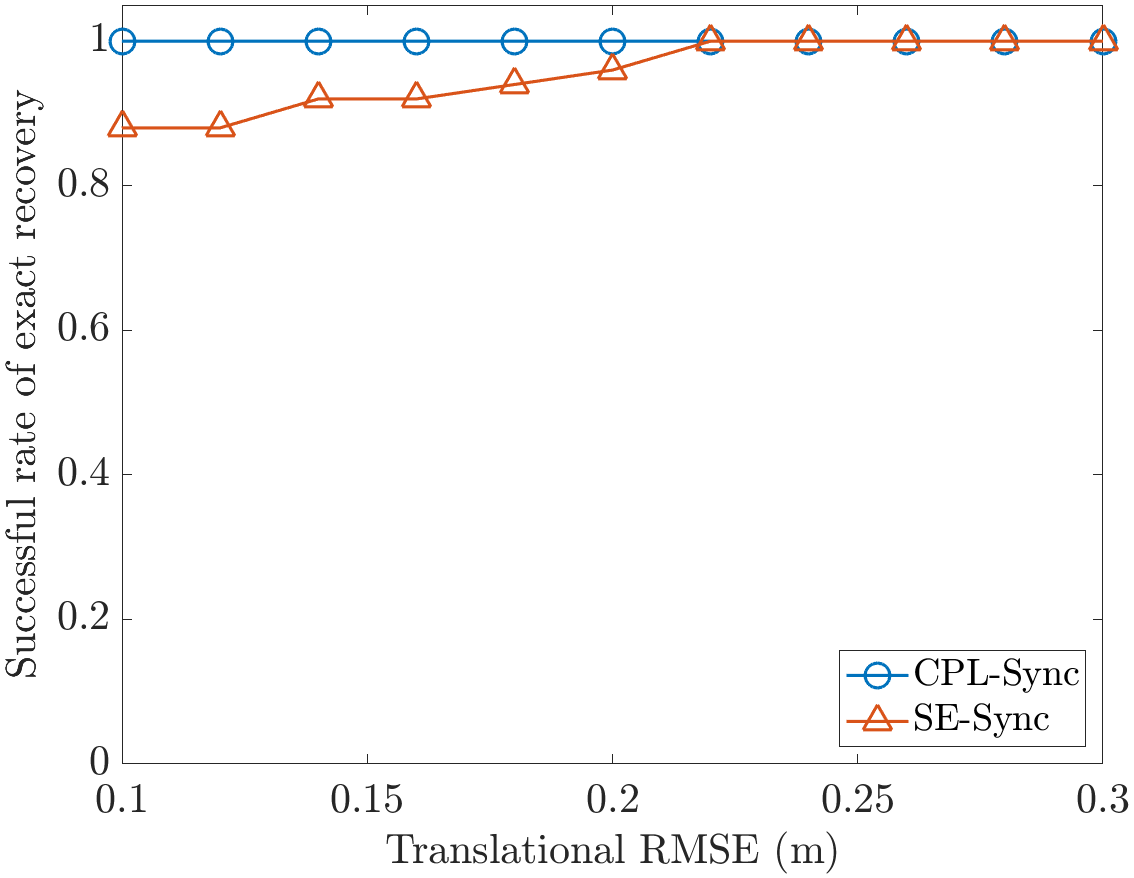}} &
		\hspace{-0.5em}\subfloat[][]{\includegraphics[trim =0mm 0mm 0mm 0mm,width=0.32\textwidth]{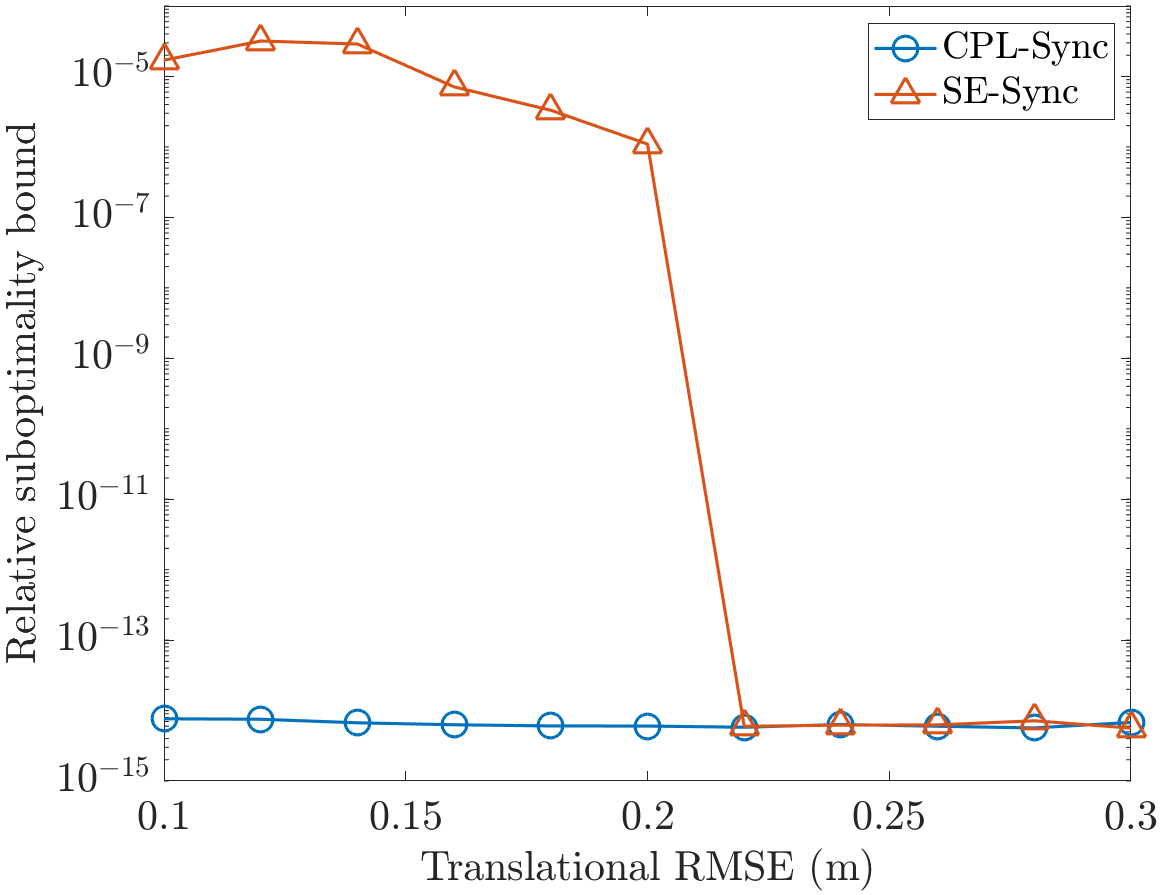}} &
		\hspace{-0.5em}\subfloat[][]{\includegraphics[trim =0mm 0mm 0mm 0mm,width=0.32\textwidth]{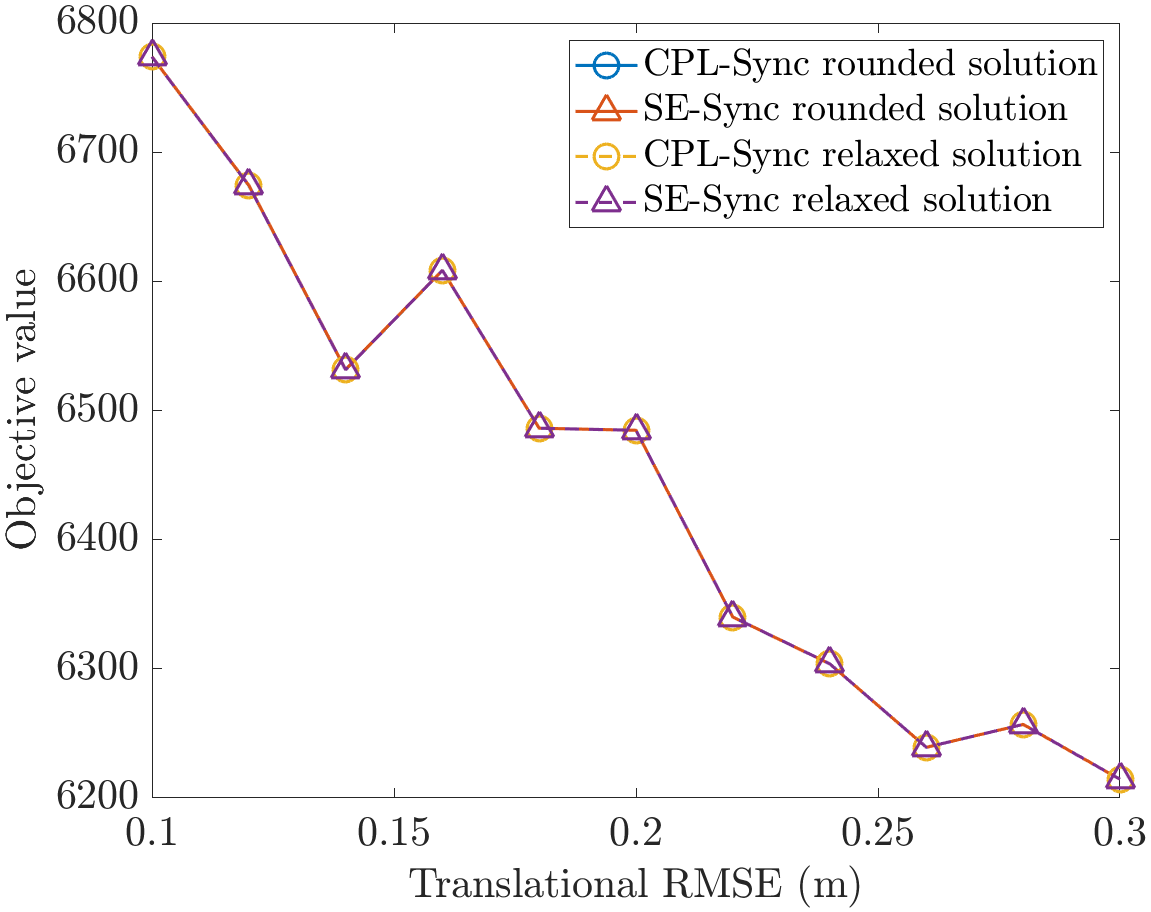}}  
	\end{tabular}
	\caption{The comparisons of CPL-SLAM and SE-Sync on the {\sf\small City} datasets with high translational measurement noise with $n=3000$, $p_C=0.1$, $\kappa_{ij}=40.53$ corresponding to angular RSME of $\sigma_R = 0.05\pi$ rad and varying $\tau_{ij}$ corresponding to different translational RSMEs of $\sigma_t=0.1\sim 0.3$ m. The results are (a) successful rates of exact recovery from the semidefinite relaxation, (b) relative suboptimality bounds between rounded and relaxed solutions, and (c) objective values of rounded and relaxed solutions. For all the datasets with different $\sigma_t$, CPL-SLAM has a tighter semidefinite relaxation and is more robust to translational measurement noise. }\label{fig::noise_t} 
	\vspace{0.85em}
	
	\centering
	\begin{tabular}{ccc}
		\hspace{-0.25em}\subfloat[][]{\includegraphics[trim =0mm 0mm 0mm 0mm,width=0.32\textwidth]{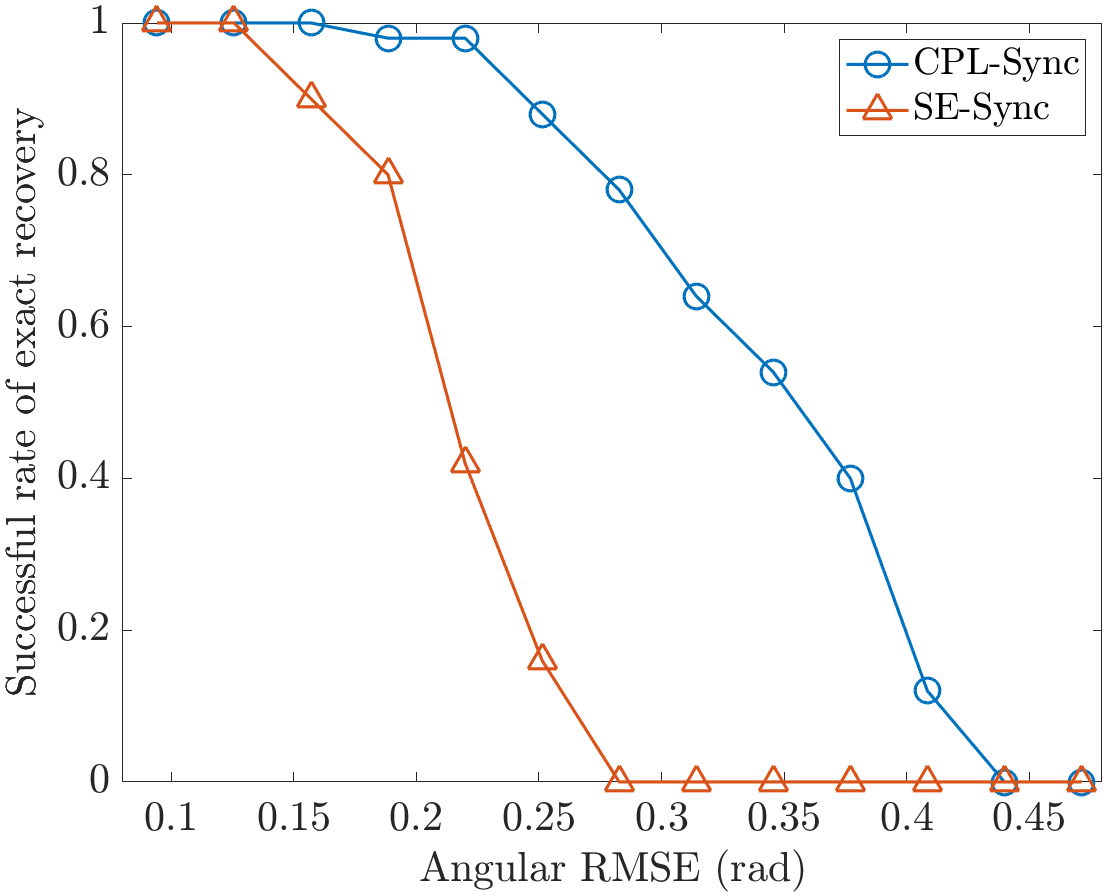}} &
		\hspace{-0.5em}\subfloat[][]{\includegraphics[trim =0mm 0mm 0mm 0mm,width=0.32\textwidth]{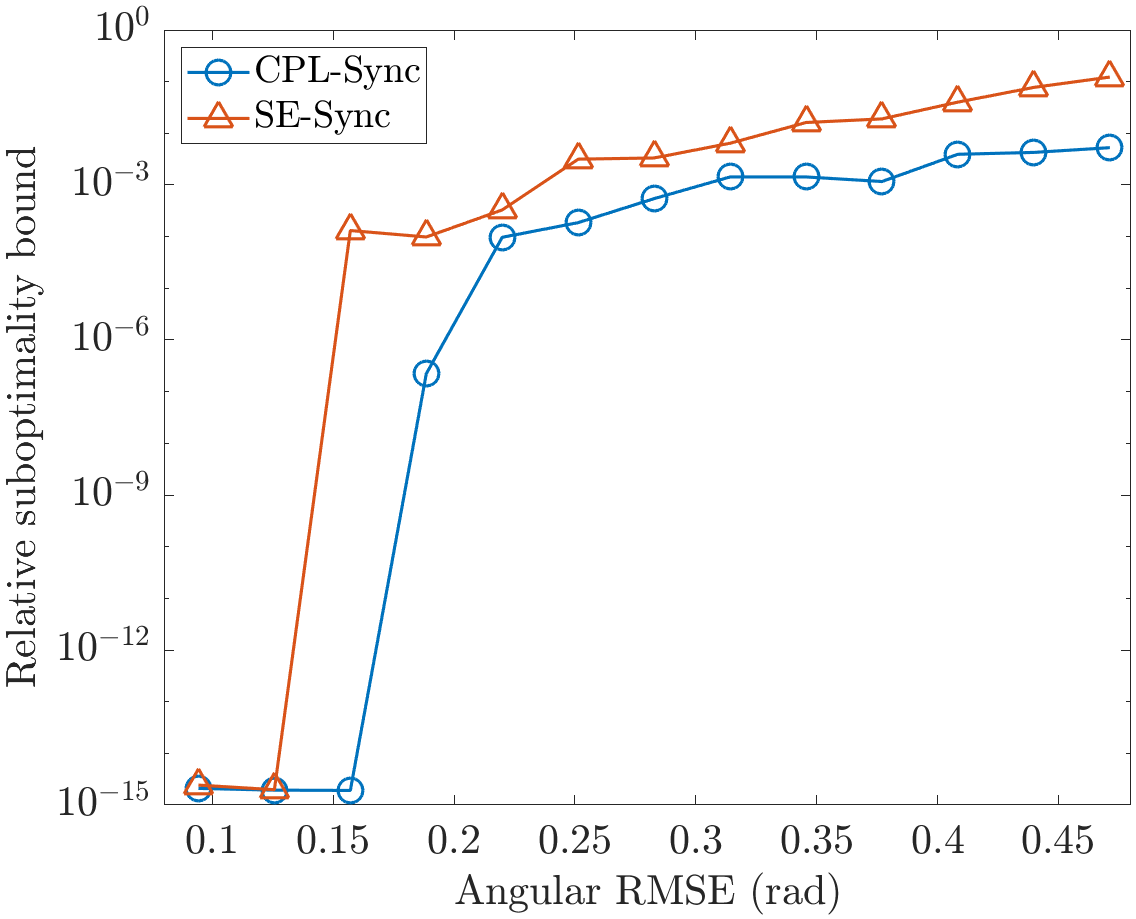}} &
		\hspace{-0.5em}\subfloat[][]{\includegraphics[trim =0mm 0mm 0mm 0mm,width=0.32\textwidth]{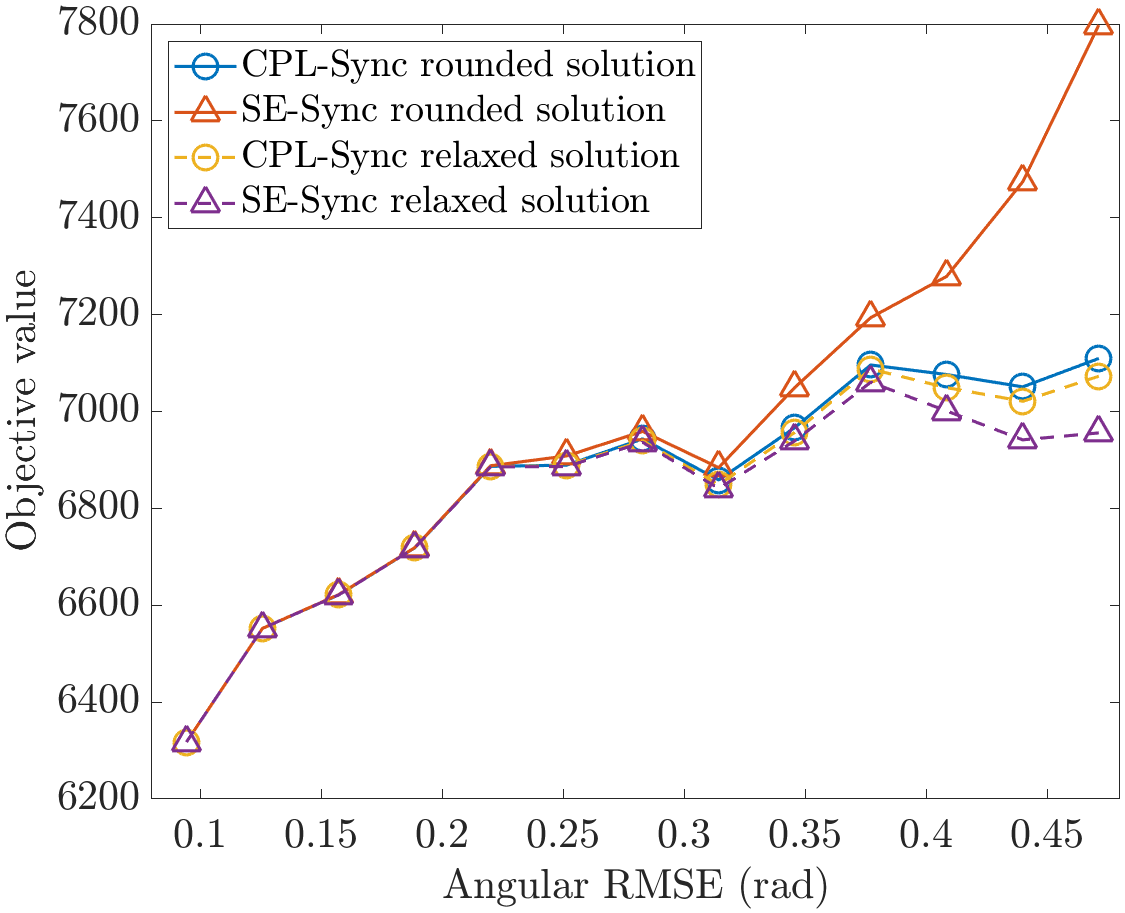}}  
	\end{tabular}
	\caption{The comparisons of CPL-SLAM and SE-Sync on the {\sf\small City} datasets with high rotational measurement noise with $n=3000$, $p_C=0.1$, $\tau_{ij}=88.89$ corresponding to translational RSME of $\sigma_t = 0.15$ m and varying $\kappa_{ij}$ corresponding to different angular RSMEs of $\sigma_R=0.03\pi\sim 0.15\pi$ rad. The results are (a) successful rates of exact recovery from the semidefinite relaxation, (b) relative suboptimality bounds between rounded and relaxed solutions, and (c) objective values of rounded and relaxed solutions. For all the datasets with different $\sigma_R$, CPL-SLAM has a tighter semidefinite relaxation and is more robust to rotational measurement noise. }\label{fig::noise_R} 
\end{figure*}

In our experiments, a {\sf\small City} dataset consists of $25 \times 25$ square grids in which each grid has side length of $1$ m, a robot trajectory of $n=3000$ poses along the rectilinear path of the grid, odometric measurements that are available between sequential poses along the robot trajectory, and loop-closure measurements that are available at random between non-sequential poses with a probability $p_{C}=0.1$. The odometric and loop-closure measurements are generated from noise models of \cref{eq::obt,eq::obr}, and the default translational weight factor is $\tau_{ij}=88.89$ that corresponds to an expected translational RMSE  of $\sigma_t=0.15$ m and the default rotational weight factor is $\kappa_{ij}=40.53$ that corresponds to an expected angular RMSE of $\sigma_R=0.05\pi$ rad. For the datasets, we vary translational and rotational measurement weight factors $\tau_{ij}$ and $\kappa_{ij}$ individually that correspond to translational and angular RMSEs of $\sigma_t=0.1 \sim 0.3$ m and $\sigma_R=0.03\pi \sim 0.15\pi$ rad, respectively,  while keeping the other weight factor as the default value. 

\begin{table*}[t]
	\renewcommand{\arraystretch}{1.3}
	\caption{Results of the 2D SLAM Benchmark datasets}\label{table::comp}
	\vspace{-0.75em}
	\begin{center}
		\begin{tabular}{|c||c|c|c||c||c|c||c|c|}
			\hline			
			\multirow{2}{*}{Dataset}&\multirow{2}{*}{$n+n'$}&\multirow{2}{*}{$m+m'$}&\multirow{2}{*}{$f^*$}& PDL-GN\cite{dellaert2012factor,rosen2014rise} & \multicolumn{2}{c||}{SE-Sync\cite{rosen2016se}} & \multicolumn{2}{c|}{CPL-SLAM [ours]}\\
			\cline{5-9}
			&& && Total time (s) &RTR time (s) & Total time (s) & RTR time (s) & Total time (s) \\
			\hline\hline
			\sf\scriptsize ais2klinik&$15115$&$16727$&$1.885\times 10^2$&$\tentimes{3.2}{0\hphantom{-}}$&$\tentimes{2.6}{0\hphantom{-}}$&$\tentimes{2.7}{0\hphantom{-}}$&$\tentimes{1.0}{0\hphantom{-}}$&$\tentimes{1.2}{0\hphantom{-}}$\\
			\hline
			\sf\scriptsize city10000&$10000$&$20687$&$6.386\times10^2$&$\tentimes{1.8}{0\hphantom{-}}$&$\tentimes{8.6}{-1}$&$\tentimes{1.2}{0\hphantom{-}}$&$\tentimes{5.2}{-1}$&$\tentimes{5.4}{-1}$\\
			\hline
			\sf\scriptsize CSAIL&$1045$&$1172$&$3.170\times 10^1$&$\tentimes{2.6}{-2}$&$\tentimes{5.0}{-3}$&$\tentimes{1.4}{-2}$&$\tentimes{1.0}{-3}$&$\tentimes{5.0}{-3}$\\
			\hline
			\sf\scriptsize intel&$1728$&$2512$&$5.236\times 10^1$&$\tentimes{1.3}{-1}$&$\tentimes{3.8}{-2}$&$\tentimes{6.1}{-2}$&$\tentimes{1.7}{-2}$&$\tentimes{2.6}{-2}$\\
			\hline
			\sf\scriptsize M3500&$3500$&$5453$&$1.939\times 10^2$&$\tentimes{3.3}{-1}$&$\tentimes{1.5}{-1}$&$\tentimes{2.2}{-1}$&$\tentimes{7.4}{-2}$&$\tentimes{9.8}{-2}$\\
			\hline
			\sf\scriptsize M3500-a&$3500$&$5453$&$1.598\times 10^3$&$\tentimes{4.1}{-1}$&$\tentimes{1.6}{-1}$&$\tentimes{2.3}{-1}$&$\tentimes{8.0}{-2}$&$\tentimes{1.0}{-1}$\\
			\hline
			\sf\scriptsize M3500-b&$3500$&$5453$&$3.676\times 10^3$&$\tentimes{1.6}{0\hphantom{-}}$&$\tentimes{5.3}{-1}$&$\tentimes{5.9}{-1}$&$\tentimes{2.6}{-1}$&$\tentimes{2.8}{-1}$\\
			\hline
			\sf\scriptsize M3500-c&$3500$&$5453$&$4.574\times 10^3$&$\tentimes{2.4}{0\hphantom{-}}$ &$\tentimes{7.5}{-1}$&$\tentimes{8.2}{-1}$&$\tentimes{3.7}{-1}$&$\tentimes{4.0}{-1}$\\
			\hline
			\sf\scriptsize FR-079&$989$&$1217$&$2.859\times 10^1$&$\tentimes{3.9}{-2}$&$\tentimes{5.9}{-3}$&$\tentimes{1.8}{-2}$&$\tentimes{1.7}{-3}$&$\tentimes{5.5}{-3}$\\
			\hline
			\sf\scriptsize MIT &$808$&$827$&$6.115\times 10^1$&$\tentimes{7.4}{-2}$&$\tentimes{1.4}{-2}$&$\tentimes{2.1}{-2}$&$\tentimes{4.7}{-3}$&$\tentimes{7.1}{-3}$\\
			\hline
			\sf\scriptsize tree10000&$10100$&$14442$&$6.035\times 10^2$&$\tentimes{6.9}{-1}$&$\tentimes{5.4}{-1}$&$\tentimes{5.9}{-1}$&$\tentimes{1.3}{-1}$&$\tentimes{2.3}{-1}$\\
			\hline
			\sf\scriptsize victoria-park&$7120$&$10608$&$4.660\times 10^2$&$\tentimes{2.1}{0\hphantom{-}}$&$\tentimes{5.9}{-1}$&$\tentimes{6.2}{-1}$&$\tentimes{1.4}{-1}$&$\tentimes{2.0}{-1}$\\
			\hline
		\end{tabular}
	\end{center}
\end{table*}
\begin{figure*}[!htbp]
	\centering
	\begin{tabular}{cc}
		\hspace{0em}\subfloat[][]{\includegraphics[trim =20mm 6mm 1mm 0mm,width=0.42\textwidth]{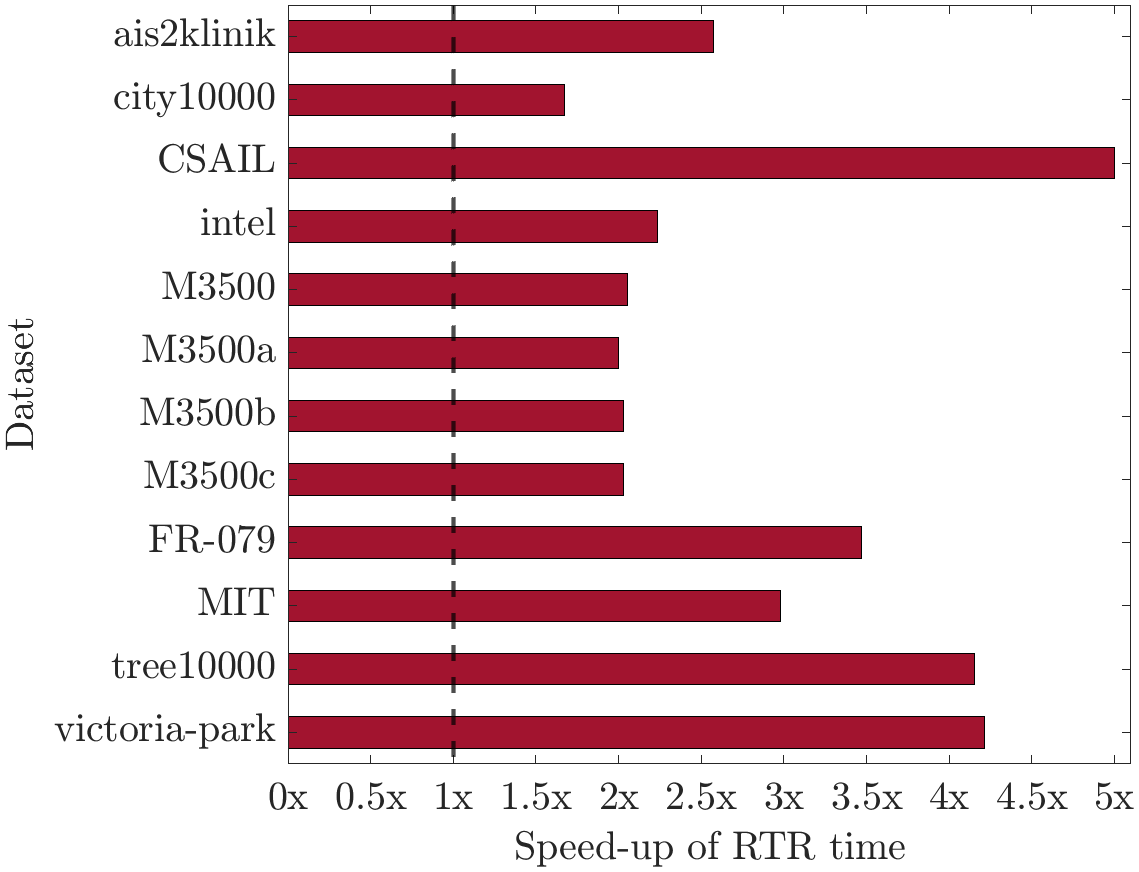}}\hspace{0.75em} &\hspace{1em}
		\subfloat[][]{\includegraphics[trim =20mm 6mm 1mm 0mm,width=0.435\textwidth]{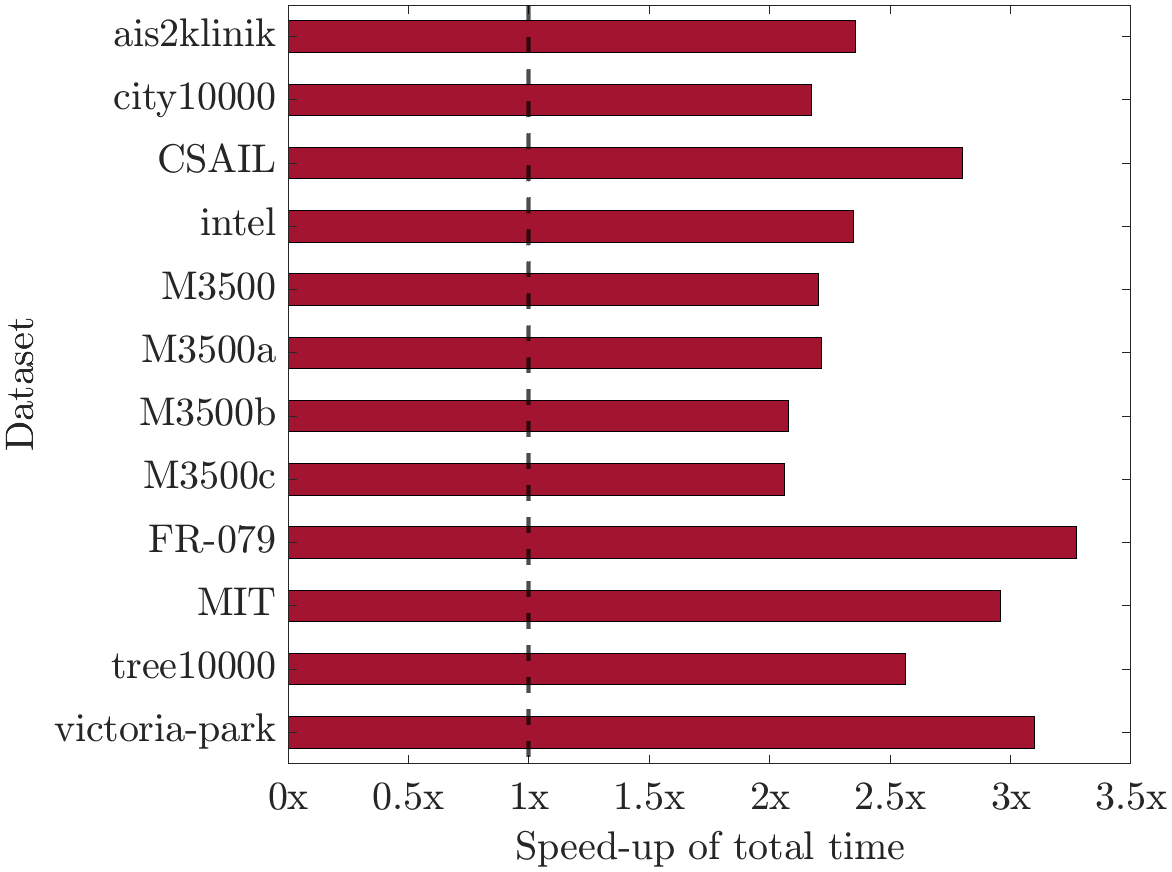}}
	\end{tabular}
	\caption{The speed-up of CPL-SLAM over SE-Sync on 2D SLAM benchmark datasets. The results are (a) the speed-up of RTR time of CPL-SLAM over SE-Sync and (b) the speed-up of total time of CPL-SLAM over SE-Sync. CPL-SLAM is on average $2.87$ and $2.51$ times faster than SE-Sync for RTR time and total time, respectively. }
	\label{fig::time} 
\end{figure*}

The results of CPL-SLAM and SE-Sync on the simulated {\sf\small City} datasets with high translational and rotational measurement noise are in \cref{fig::noise_R,fig::noise_t}, respectively. For each translational and angular RMSE, we calculate the successful rates of exact recovery from the semidefinite relaxation (\cref{fig::noise_t}a and \cref{fig::noise_R}a), the relative suboptimality bounds between rounded and relaxed solutions (\cref{fig::noise_t}b and \cref{fig::noise_R}b), and the objective values of rounded and relaxed solutions (\cref{fig::noise_t}c and \cref{fig::noise_R}c) statistically from $50$ randomly generated {\sf\small City} datasets, in which we assume the globally optimal solution is exactly recovered if the relative suboptimality bound is less than $1\times 10^{-6}$. From \cref{fig::noise_t}, it can be seen that CPL-SLAM holds the tightness on all the datasets with translational RMSEs of $\sigma_t=0.1\sim 0.3$ m, whereas SE-Sync fails on some of the datasets. From \cref{fig::noise_R}, it can be seen that when the angular RMSE is small, i.e., approximately less than $0.15$ rad, both CPL-SLAM and SE-Sync exactly recover the globally optimal solution from the semidefinite relaxation, and as angular RMSE increases and is greater than $0.15$ rad, CPL-SLAM and SE-Sync begin to fail. In spite of this, we find that CPL-SLAM has a much higher successful rate of exact recovery from the semidefinite relaxation (\cref{fig::noise_t}a and \cref{fig::noise_R}a) and orders of magnitude smaller relative suboptimality bounds (\cref{fig::noise_t}b and \cref{fig::noise_R}b). Furthermore, for the objective value, CPL-SLAM has greater lower bound from the relaxed solution but lower upper bound from the rounded solution in scenarios of high measurement noise (\cref{fig::noise_t}c and \cref{fig::noise_R}c). All of these results indicate that CPL-SLAM has a tighter semidefinite relaxation using the complex number representation than SE-Sync using the matrix representation, and thus, is more robust to translational and rotational measurement noise.

In \cref{fig::noise_t}, it is interesting to see that SE-Sync fails on datasets with small translational measurement noise but works on datasets with large translational measurement noise. Even though there is lack of formal analysis, we guess this is because the tightness of the semidefinite relaxation in SE-Sync, in addition to the magnitude of measurement noise, is also related with the ratio $\tau_{ij}/\kappa_{ij}$ of translational weight factors $\tau_{ij}$ and rotational weight factors $\kappa_{ij}$, i.e., when $\tau_{ij}/\kappa_{ij}$ increases, the semidefinite relaxation in SE-Sync tends to be relatively more sensitive to measurement nose.

It is obvious that the improved tightness and robustness of CPL-SLAM over SE-Sync in planar graph-based is associated with the more concise representation of complex numbers over matrices in the semidefinite relaxation, for which a theoretically complete analysis similar to \cite{tron2015inclusion} is left as future work. In spite of this, we present one possible reason that might help explain the improved tightness of CPL-SLAM. The semidefinite matrix resulting from the solution to planar graph-based SLAM using the matrix representation should take the form $X_{R}=\begin{bmatrix}
X_{R_{ij}}
\end{bmatrix}\!\in\! \R^{2n\times 2n}$ in which each $(i,j)$-th block $X_{R_{ij}}$ has the algebraic structure $X_{R_{ij}}\!=\!\begin{bmatrix}
a & -b\\
b & a
\end{bmatrix}\!\in \! \R^{2\times 2}$, and SE-Sync drops such an algebraic structure in the semidefinite relaxation. Even though it is possible for SE-Sync to keep this algebraic structure  by either reformulating the associated data matrix or adding numbers of extra linear constraints, substantial computational efforts are required for both options. In comparison, CPL-SLAM preserves the algebraic structure of $X_{R_{ij}}$ as complex numbers in the semidefinite relaxation without having to reformulate the data matrix or introduce any extra constraints. As said before, the explanations given above are still hypotheses and need to be proved. However, we can still conclude from the \cref{fig::noise_R,fig::noise_t} that the semidefinite relaxation in CPL-SLAM using the complex number representation is tighter than that in SE-Sync using the matrix representation, which further suggests that CPL-SLAM is more robust to measurement noise than SE-Sync.

\begin{figure*}[!t]
	\centering
	\vspace{-4.5mm}
	\begin{tabular}{cccc}
		\hspace{-1.5em}	\subfloat[][{\sf\fontsize{7.35}{8}\selectfont ais2klinik}]{\includegraphics[trim =0mm 0mm 0mm 0mm,width=0.24\textwidth]{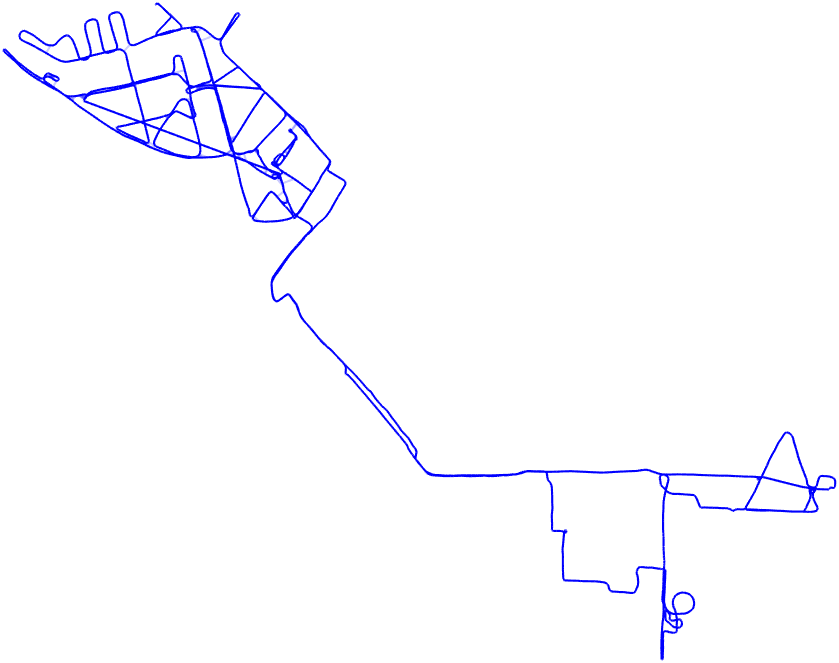}}&
		\hspace{-0.2em}	\subfloat[][\sf{\fontsize{7.35}{8}\selectfont city10000}]{\includegraphics[trim =0mm 0mm 0mm 0mm,width=0.19\textwidth]{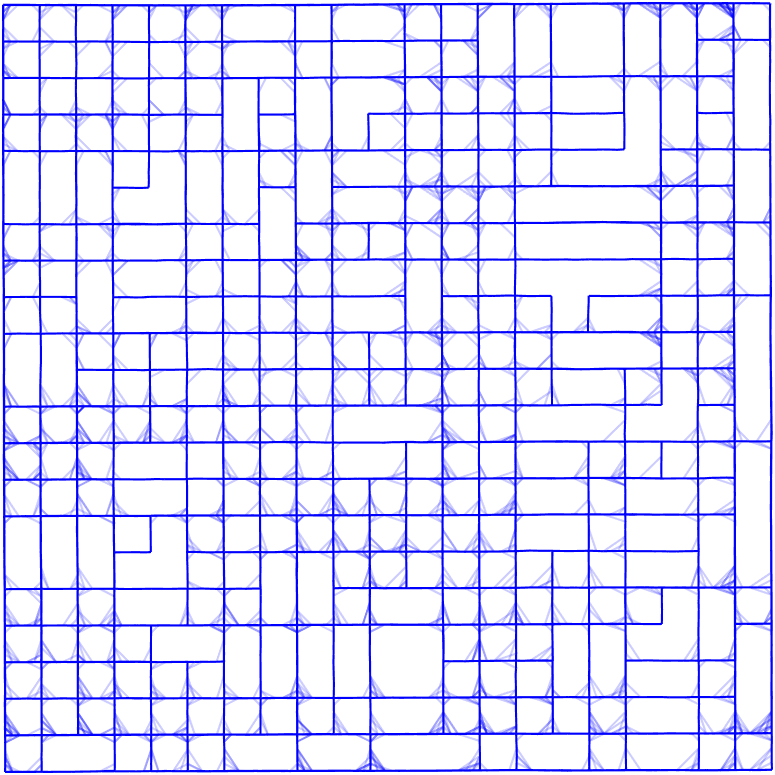}} &
		\hspace{-0.2em}	\subfloat[][\sf{\fontsize{7.35}{8}\selectfont CSAIL}]{\includegraphics[trim =0mm 0mm 0mm 0mm,width=0.23\textwidth]{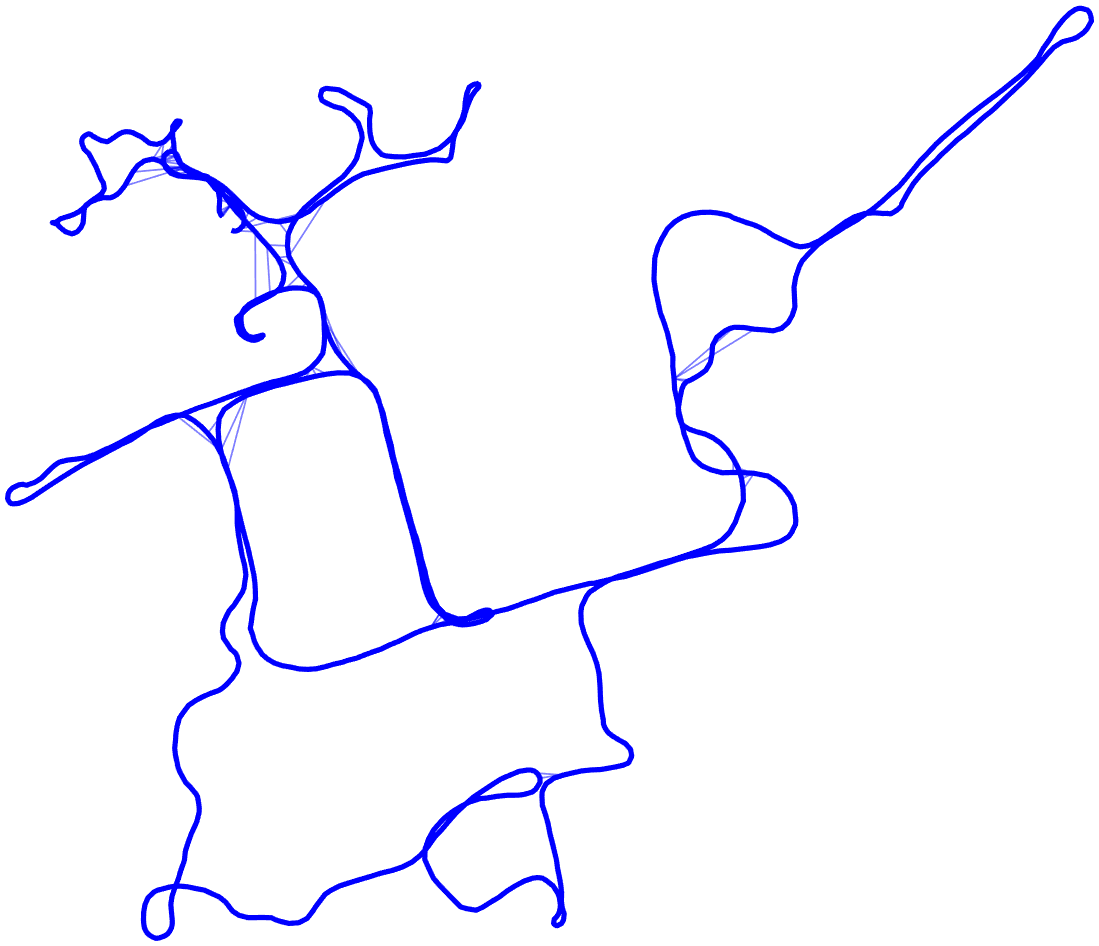}} &
		\hspace{-0.8em}	\subfloat[][\sf{\fontsize{7.35}{8}\selectfont intel}]{\includegraphics[trim =0mm 0mm 0mm 0mm,width=0.21\textwidth]{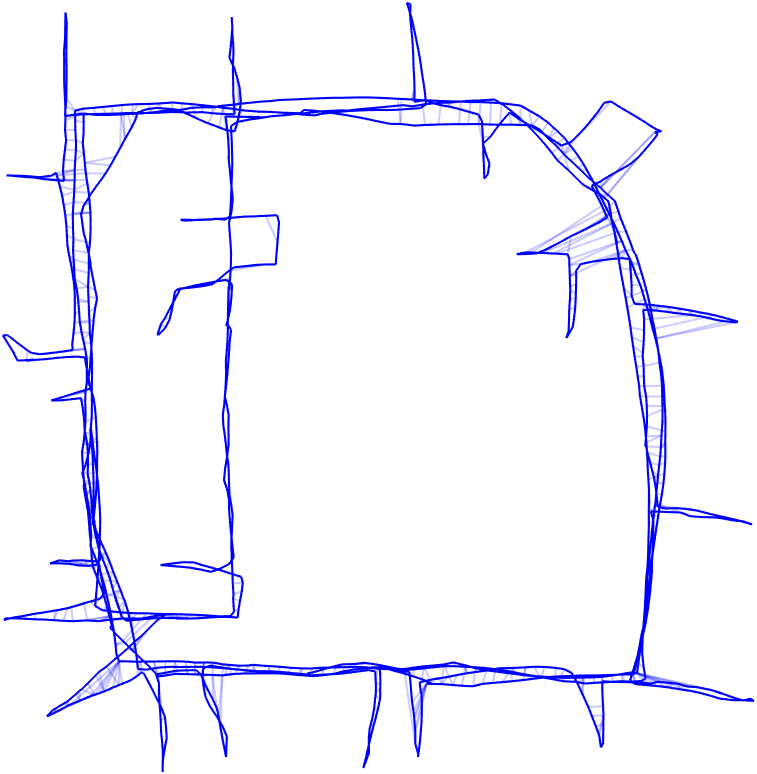}}\\[0.75em]
		\hspace{-0.25em}\subfloat[][\sf\scriptsize M3500]{\includegraphics[trim =0mm 0mm 0mm 0mm,width=0.23\textwidth]{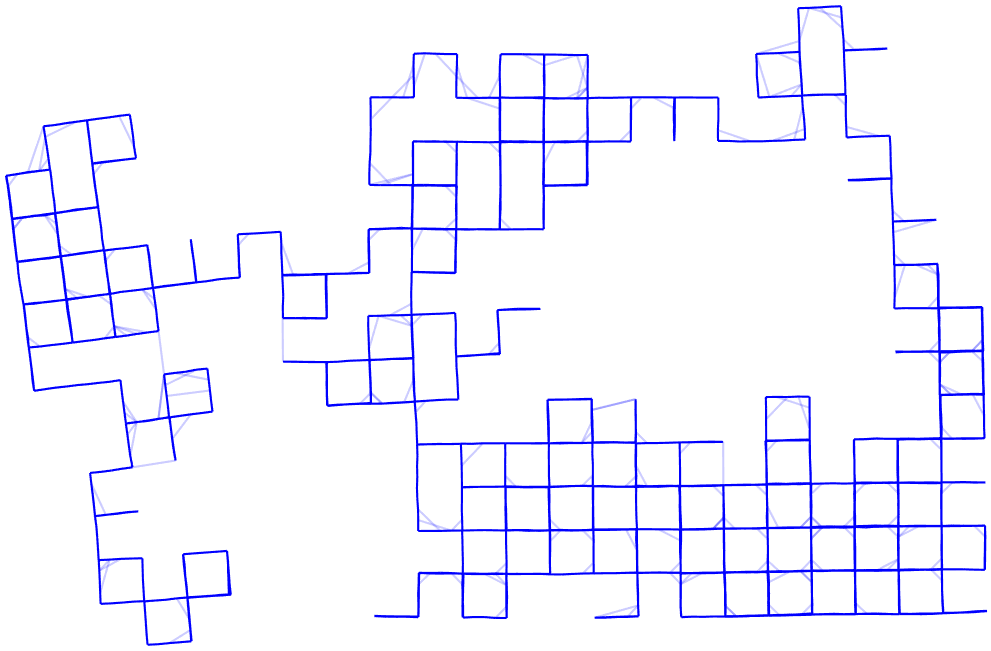}} &
		\hspace{-0.25em}\subfloat[][\sf\scriptsize M3500-a]{\includegraphics[trim =0mm 0mm 0mm 0mm,width=0.23\textwidth]{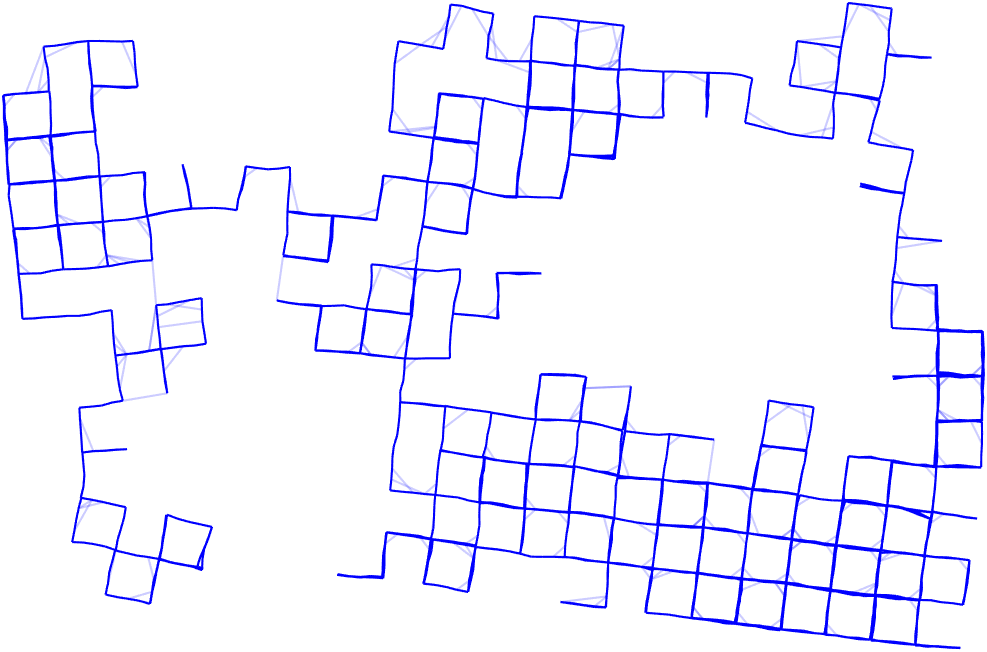}} &
		\hspace{-0.25em}\subfloat[][\sf\scriptsize M3500-b]{\includegraphics[trim =0mm 0mm 0mm 0mm,width=0.23\textwidth]{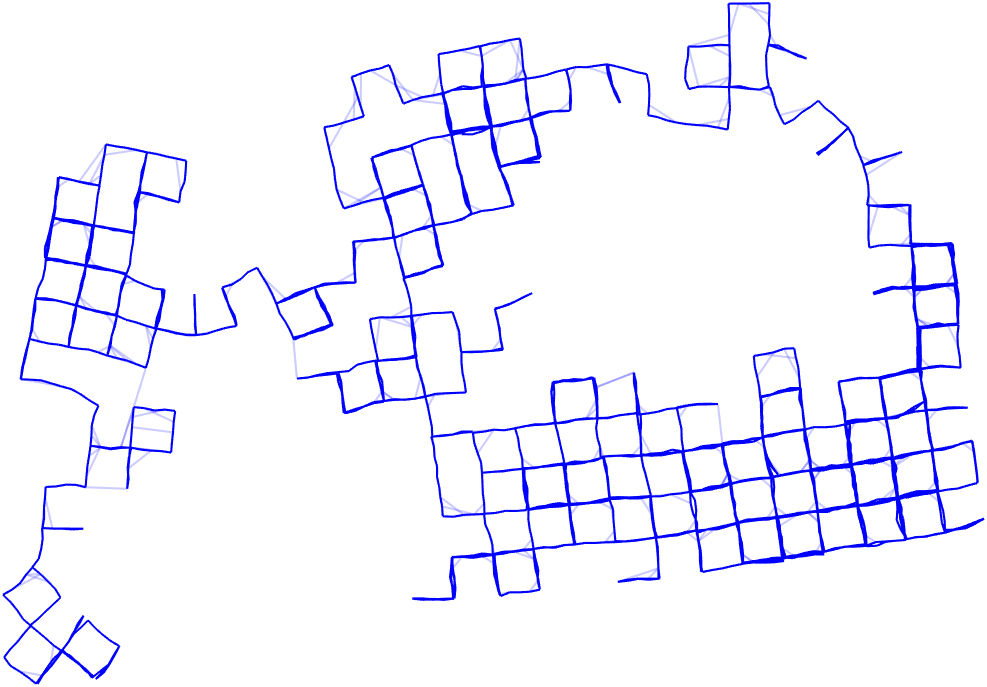}} &
		\hspace{-0.25em}\subfloat[][\sf\scriptsize M3500-c]{\includegraphics[trim =0mm 0mm 0mm 0mm,width=0.23\textwidth]{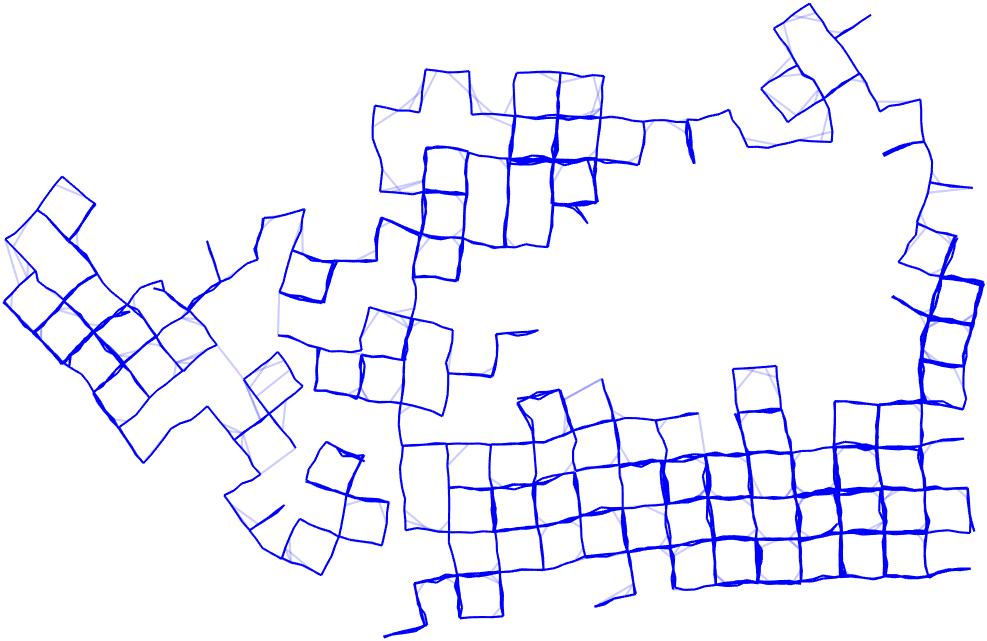}} \\
		\hspace{-1.5em}	\subfloat[][\sf{\fontsize{7.35}{8}\selectfont FR-079}]{\includegraphics[trim =0mm 0mm 0mm 0mm,width=0.195\textwidth]{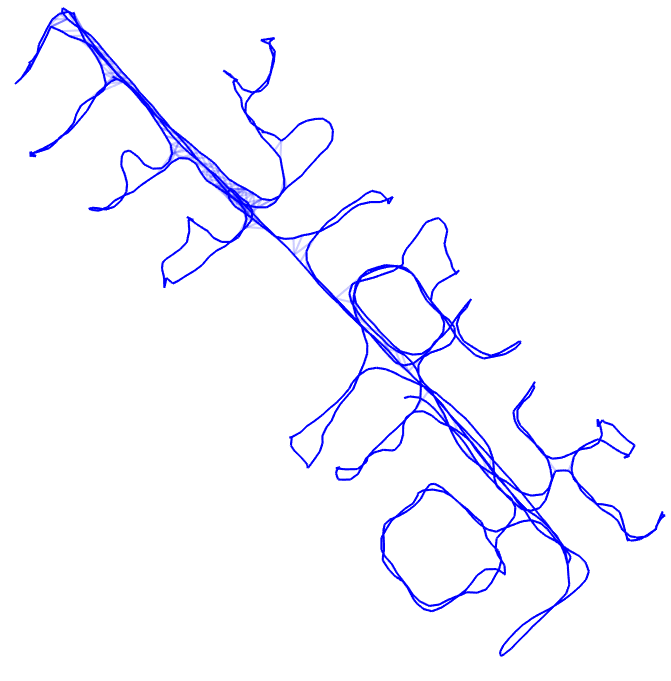}}&
		\hspace{-0.2em}	\subfloat[][\sf{\fontsize{7.35}{8}\selectfont MIT}]{\includegraphics[trim =0mm 0mm 0mm 0mm,width=0.20\textwidth]{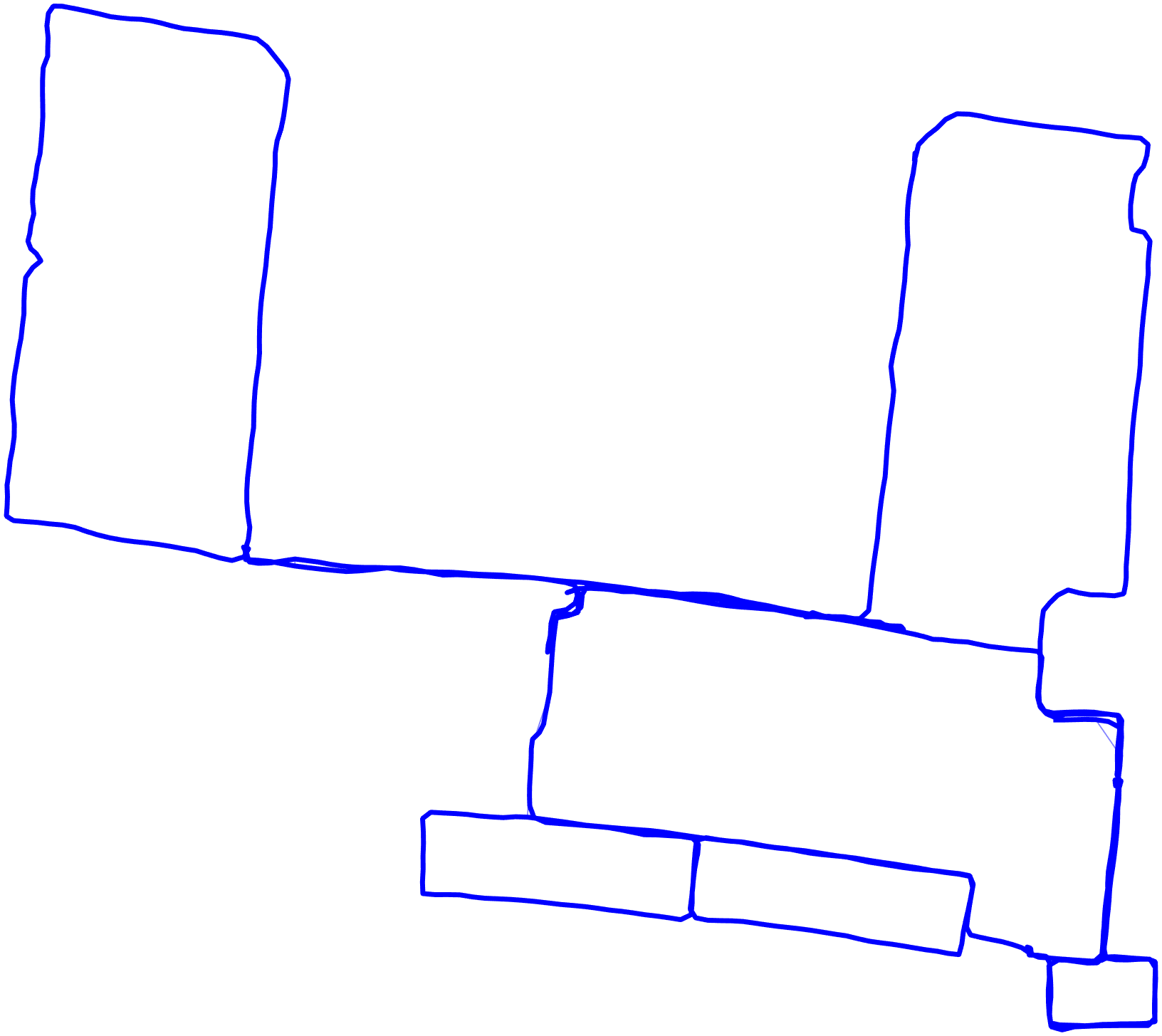}}&
		\hspace{-0.2em}	\subfloat[][\sf{\fontsize{7.35}{8}\selectfont tree10000}]{\includegraphics[trim =0mm 0mm 0mm 0mm,width=0.19\textwidth]{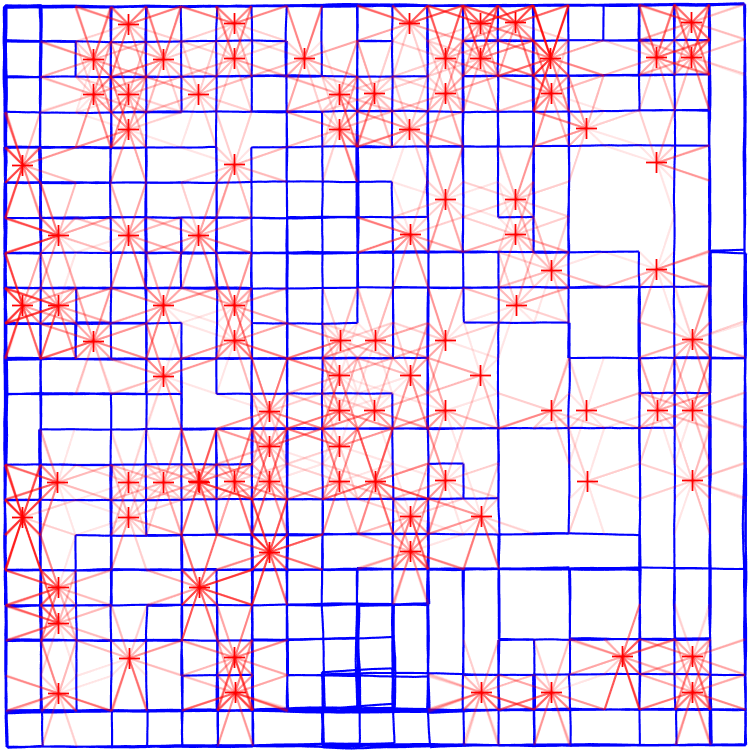}} &
		\hspace{-0.8em} \subfloat[][\sf{\fontsize{7.35}{8}\selectfont victoria-park}]{\includegraphics[trim =0mm 0mm 0mm 0mm,width=0.22\textwidth]{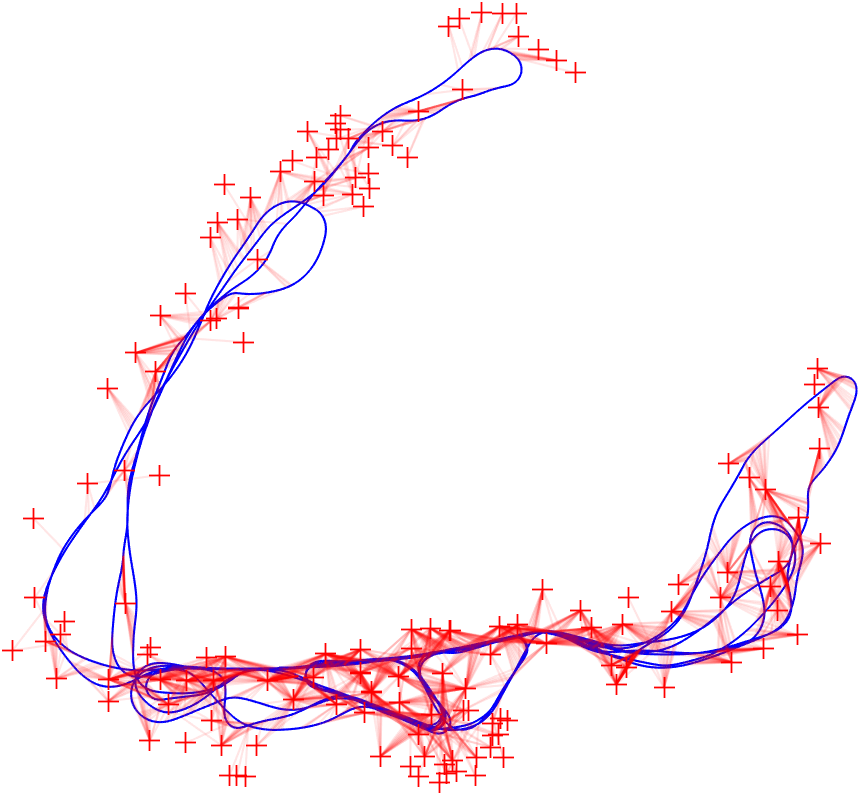}}
	\end{tabular}
	\caption{The globally optimal results of CPL-SLAM on 2D SLAM benchmark datasets. It should be noted that CPL-SLAM still obtains global optima on {\sf\small M3500-a}, {\sf\small M3500-b} and {\sf\small M3500-c} in (f)-(g), which respectively has large extra noise with standard deviations of $0.1$ rad, $0.2$ rad and $0.3$ rad added to the rotational measurements of {\sf\small M3500} in (e). For {\sf\small tree10000} in (k) and {\sf\small victoria-park} in (l) with landmarks, we denote the positions of landmarks with red ``$+$''.}
	\label{fig::map} 
\end{figure*}

\subsection{SLAM Benchmark Datasets}
In this subsection, we implement CPL-SLAM, SE-Sync and PDL-GN on a variety of 2D SLAM benchmark datasets with and without landmarks. In these datasets, {\small\textsf{city10000}}, {\small\textsf{M3500}}, {\small\textsf{M3500-a}}, {\small\textsf{M3500-b}}, {\small\textsf{M3500-c}} and {\small\textsf{tree10000}} are simulated benchmark datasets while the others, i.e., {\small\textsf{ais2klinik}}, {\small\textsf{CSAIL}}, {\small\textsf{intel}}, {\small\textsf{FR-079}}, {\small\textsf{MIT}} and {\small\textsf{victoria-park}}, are real-world datasets. In addition,  {\small\textsf{tree10000}} and  {\small\textsf{victoria-park}} have positions of observed landmarks involved. The chordal initialization \cite{carlone2015initialization} is used for all the benchmark datasets tested.

For all the 2D SLAM benchmark datasets, CPL-SLAM, SE-Sync and PDL-GN converge to the globally optimal solution. The results are shown in \cref{table::comp}, in which $n$ is the number of unknown poses and $n'$ is the number of observed landmarks, $m$ is the number of pose-pose measurements and $m'$ is the number of pose-landmark measurements, $f^*$ is the globally optimal objective value, and the total time accounts for all the time taken to solve graph-based SLAM and the RTR time only accounts for the time taken by the RTR method to solve Riemannian staircase optimization. A specific comparison of SE-Sync and CPL-SLAM is further shown in \cref{fig::time}. From \cref{table::comp,fig::time}, it can be seen that CLP-Sync is significantly faster than both SE-Sync and PDL-GN on all the SLAM benchmark datasets, in which CPL-SLAM outperforms PDL-GN by a factor of 5.53 on average for the overall computation, and outperforms SE-Sync by a factor of $2.87$ and $2.51$ on average for the computation of the RTR method and the overall computation, respectively. In particular, CPL-SLAM obtains a further improved performance of the RTR method over SE-Sync on the datasets with landmarks, and we think it is due to the conciseness of the complex number representation whose resulting preconditioner accelerates the truncated conjugate gradient method that is used in the RTR method to evaluate the descent direction.

The globally optimal results of CPL-SLAM on these 2D SLAM benchmark datasets are as shown in \cref{fig::map}. It should be noted that \textsf{\small M3500-a}, \textsf{\small M3500-b} and \textsf{\small M3500-c} in \cref{fig::map}f-\cref{fig::map}h respectively have extra Gaussian noise with standard deviation $0.1$ rad, $0.2$ rad and $0.3$ rad added to the rotational measurements of \textsf{\small M3500} \cite{carlone2016planar} in \cref{fig::map}e, which indicates that CPL-SLAM can tolerate noisy measurements that are orders of magnitude greater than real-world SLAM applications. For {\sf\small tree10000} in \cref{fig::map}k and {\sf\small victoria-park} in \cref{fig::map}l with landmarks, we denote the positions of landmarks with red ``$+$''.

\section{Conclusion}\label{section::conclusion}
In this paper, we have presented CPL-SLAM that is a certifiably correct algorithm for planar graph-based SLAM using the complex number representation. By leveraging the complex semidefinite programming and Riemannian staircase optimization on complex oblique manifolds, CPL-SLAM is applicable to planar graph-based SLAM with and without landmarks. In addition, even though CPL-SLAM essentially solves the complex semidefinite relaxation,  we prove that CPL-SLAM exactly retrieves the globally optimal solution to planar graph-based SLAM as long as the noise magnitude is below a certain threshold. 

CPL-SLAM is compared with the state-of-the-art methods SE-Sync \cite{rosen2016se} and  Powell's Dog-Leg \cite{dellaert2012factor,rosen2014rise} on the simulated {\small\textsf{Tree}} datasets,  the simulated {\small\textsf{City}} datasets  and numerous large 2D simulated and real-world SLAM benchmark datasets in terms of scalability and robustness. The results of the data experiments indicate that CPL-SLAM is capable of solving planar graph-based SLAM certifiably, and more importantly, is more efficient in numerical computation and more robust to measurement noise. As a result, we expect that CPL-SLAM outperforms existing state-of-the-art methods to planar graph-based SLAM.
 
There is still great potential for improvements of CPL-SLAM in several aspects. A fully distributed extension of CPL-SLAM is definitely beneficial to multi-robot simultaneous localization and mapping. In spite of being able to tolerate large measurement noise, CPL-SLAM still needs to enhance its robustness to measurement outliers. At last, it is currently assumed that the positions of landmarks are fully known in CPL-SLAM, and we hope that in the future  CPL-SLAM can handle range-only and bearing-only measurements of landmarks, which is another important extension.

\section*{Acknowledgment}
This material is based upon work supported by the National Science Foundation under award DCSD-1662233. Any opinions, findings, and conclusions or recommendations expressed in this material are those of the authors and do not necessarily reflect the views of the National Science Foundation.

\section*{Appendix A.\; The Derivation of \eqref{eq::QP}}\label{app::A}
In this section, we derive \eqref{eq::QP} following a similar procedure of \cite[Appendix B]{rosen2016se} even though ours uses the complex number representation and has landmarks involved.

It is straightforward to rewrite \eqref{eq::LSP} as 
\begin{equation}
\min_{\substack{\lmk\in \C^n,\,\\\tran\in \C^n,\,\rot\in \C_1^n}} \left\|B\begin{bmatrix}
\lmk \\
\tran \\
\rot
\end{bmatrix}\right\|_2^2
\end{equation}
in which 
\begin{equation}
\nonumber
B\triangleq\begin{bmatrix}
B_1 & B_2\\
\0  & B_3
\end{bmatrix}\in \C^{(2m+m')\times (2n+n')}.
\end{equation}
Here $B_{1}\in \R^{(m+m')\times (n+n')}$, $B_{2}\in \R^{(m+m')\times n}$ and $B_3\in \C^{m\times n}$ are given as
\begin{subequations}
\begin{equation}\label{eq::b1}
[B_{1}]_{ek}=\begin{cases}
\sqrt{\nu_{ik}},\hphantom{\tilde{\tran}_{kj}-} & e=(i,\,k)\in\aEEp,\\
-\sqrt{\nu_{kj}},\hphantom{\tilde{\tran}_{ik}} & e=(k,\,j)\in\aEEp,\\
\sqrt{\tau_{ik}}, &  e=(i,\,k)\in\aEE,\\
-\sqrt{\tau_{kj}},\hphantom{\tilde{\tran}_{kj}} & e=(k,\,j)\in\aEE,\\
0, & \text{otherwise},
\end{cases}
\end{equation}
\begin{equation}\label{eq::b2}
[B_{2}]_{ek}=\begin{cases}
-\sqrt{\nu_{kj}}\tilde{\lmk}_{kj}, & e=(k,\,j)\in\aEEp,\\
-\sqrt{\tau_{kj}}\tilde{\tran}_{kj}, & e=(k,\,j)\in\aEE,\\
0, & \text{otherwise},
\end{cases}
\end{equation}
and
\begin{equation}\label{eq::b3}
[B_3]_{ek}=\begin{cases}
-\sqrt{\kappa_{kj}}\tilde{\rot}_{kj}, & e=(k,\,j)\in\aEE,\\
\sqrt{\kappa_{ik}}, &  e=(i,\,k)\in\aEE,\\
0, & \text{otherwise},
\end{cases}
\end{equation}
\end{subequations}
respectively. Since \eqref{eq::P} is also equivalent to \eqref{eq::qpp}, it can be concluded that
\begin{subequations}
\begin{align}
&B_1^HB_1=\Lambda,\label{eq::B1}\\
&B_1^HB_2=\widetilde{\Theta},\label{eq::B2}\\
&B_2^HB_2=\widetilde{\Sigma}^\rot,\label{eq::B3}\\
&B_3^HB_3=L(\widetilde{G}^\rot)\label{eq::B4}
\end{align}
\end{subequations}
in which $\Lambda$, $\widetilde{\Theta}$, $\widetilde{\Sigma}$ and $L(\widetilde{G}^\rot)$ are as defined in \eqref{eq::qpp}. If we let $\tM^\sigma \triangleq \widetilde{\Sigma}^\rot-\widetilde{\Theta}^H\Lambda^\dagger\widetilde{\Theta}$, then from \cref{eq::B1,eq::B2,eq::B3,eq::B4}, we obtain
\begin{equation}\label{eq::Qt}
\begin{aligned}
\tM^\sigma &=B_2^HB_2-B_2^HB_1(B_1^HB_1)^\dagger B_1^HB_2\\
&= B_2^H\left(\I - B_1(B_1^HB_1)^\dagger B_1^H\right)B_2,
\end{aligned}
\end{equation}
in which $B_1$, $B_2$ and $B_3$ are defined as \cref{eq::b1,eq::b2,eq::b3}.
It should be noted that we might rewrite $B_1$ and $B_2$ as
\begin{equation}\label{eq::B12}
B_1 = \Omega^{\frac{1}{2}}A^\transpose ,\quad\quad B_2=\Omega^\frac{1}{2}\widetilde{T},
\end{equation}
in which $A\triangleq A(\aGG)$ and $\widetilde{T}$ are given by Eqs. \eqref{eq::A} and \eqref{eq::T}, respectively.
Substituting \cref{eq::B12} into \cref{eq::Qt}, we obtain
\begin{equation}
\begin{aligned}
\tM^\sigma &= B_2^H\left(\I - B_1(B_1^HB_1)^\dagger B_1^H\right)B_2\\
&=\widetilde{T}^H\Omega^\frac{1}{2}\Pi\Omega^\frac{1}{2}\widetilde{T},
\end{aligned}
\end{equation}
in which 
$$\Pi=\I - \Omega^{\frac{1}{2}}A^\transpose \left(A\Omega A^\transpose \right)^\dagger A\Omega^\frac{1}{2}\in\R^{(m+m')\times (m+m')}.$$ 
As a result, it can be concluded that
$$\tM = L(\widetilde{G}^\rot)+\tM^\sigma=L(\widetilde{G}^\rot)+\widetilde{T}^H\Omega^{\frac{1}{2}}\Pi\Omega^{\frac{1}{2}}\widetilde{T}.$$ 
Furthermore, it is known that $X^\transpose (XX^\transpose )^\dagger = X^\dagger$ for any matrix $X$, then we further obtain
\begin{equation}
\begin{aligned}
\Pi&=\I - \Omega^{\frac{1}{2}}A^\transpose \left(A\Omega A^\transpose \right)^\dagger A\Omega^\frac{1}{2}\\
&=\I -\left(A\Omega^\frac{1}{2}\right)^\dagger A\Omega^\frac{1}{2},
\end{aligned}
\end{equation}
which according to \cite[Chapter 5.13]{meyer2000matrix} is the matrix of orthogonal projection operator $\pi:\C^{m+m'}\rightarrow \ker(A(\aGG)\Omega^{\frac{1}{2}})$ onto the kernel space of $A\Omega^\frac{1}{2}$. In addition, similar to \cite[Appendix B.2]{rosen2016se},  it is possible to further decompose $\Pi$ in terms of sparse matrices and their inverse for efficient computation even though $\Pi$ is in general a dense matrix. 

\section*{Appendix B.\; Proofs of the Lemmas and Propositions in Section V}\label{app::B}
In this section, we present proofs of the lemmas and propositions in \cref{section::sdp}. These proofs draw heavily on \cite{absil2009optimization} and follows a similar procedure to that of \cite[Appendix C]{rosen2016se} and \cite[Section 4.3]{bandeira2017tightness}.
\subsection*{B.1. Proof of \cref{lemma::qp}}\label{app::B1}
It is known that the unconstrained Euclidean gradient of $F(\rot)\triangleq \rot^H\tM \rot$ is $\nabla F(\rot)=2\tM\rot$, and thus, if we let $S(\rot)\triangleq \tM-\Re\{\ddiag{(\tM\rot\rot^H)}\} $, the Riemannian gradient is
\begin{equation}\label{eq::gradf}
\begin{aligned}
\mathrm{grad}\,F(\rot)&=\mathrm{proj}_x(\nabla F(\rot))\\
&=2(\tM-\Re\{\ddiag{(\tM\rot\rot^H)}\})\rot\\
&=2 S(\rot)\rot,
\end{aligned}
\end{equation}
in which the linear projection operator $\mathrm{proj}_z:\C^n\rightarrow T_z\C_1^n$ is defined to be
$$\mathrm{proj}_zu=u-\Re\{\ddiag(uz^H)\}z.$$
In addition, it should be noted that we have assumed that $\C_1^n$ is a Riemannian submanifold of Euclidean space,  then the Riemannian Hessian is
\begin{equation}\label{eq::hess}
\!\!\mathrm{Hess}\, F(\rot)[\dot{\rot}]=\mathrm{proj}_z \mathrm{D}\,\mathrm{grad}\, F(\rot)[\dot{\rot}]\\=\mathrm{proj}_z 2S(\rot)\dot{\rot},
\end{equation}
in which $\mathrm{D}\,\mathrm{grad}\, F(\rot)[\dot{\rot}]$ is the direction derivative of $\mathrm{grad}\, F(\rot)$ along direction $\dot{\rot}$.
From \cref{eq::hess}, we obtain
\begin{equation}
\nonumber
\innprod{\mathrm{Hess}\, F(\rot)[\dot{\rot}]}{\dot{\rot}}=2\innprod{S(\rot)\dot{\rot}}{\dot{\rot}}.
\end{equation}
Moreover, according to \cite[Chapter 5]{absil2009optimization}, if $\exp_z: T_z \C_1^n\rightarrow \C_1^n$ is the exponential map at $\rot\in\C_1^n$, we obtain
\begin{equation}
\nonumber
\left.\frac{\d}{\d t} F\circ \mathrm{exp}_z(t\dot{\rot})\right|_{t=0}=\innprod{\mathrm{grad}\,F(\rot)}{\dot{\rot}}
\end{equation} 
and
\begin{equation}
\nonumber
\left.\frac{\d^2}{\d t^2} F\circ \mathrm{exp}_z(t\dot{\rot})\right|_{t=0}=\innprod{\mathrm{Hess}\,F(\rot)[\dot{\rot}]}{\dot{\rot}}.
\end{equation}
Therefore, if $\hat{\rot}\in\C_1^n$ is a local optimum for \cref{eq::QP} and $\hat{S}=S(\hat{\rot})$, it is required that $\hat{S}\hat{\rot} =0$ and $\innprod{\dot{\rot}}{\hat{S}\dot{\rot}}\geq 0$ for all $\dot{\rot}\in T_x \C_1^n$, which completes the proof.

\subsection*{B.2. Proof of \cref{lemma::sdp}}\label{app::B2}
It should be noted that (1) to (5) in \cref{lemma::sdp} are KKT conditions of \eqref{eq::SDP}, which proves the necessity. Since the identity matrix $\I\in\C^{n\times n}$ is strictly feasible to \cref{lemma::sdp}, the Slater's condition is satisfied, which proves the sufficiency. In addition, it should be noted that the Slater's condition also holds for the dual of \eqref{eq::SDP}. If $\rank(\hat{S})=n-1$, according to \cite[Theorem 6]{alizadeh1997complementarity},  $\hat{S}$ is dual nondegenerate. Moreover, by complementary slackness, $\hat{S}$ is also optimal for the dual of \eqref{eq::SDP}, which, as a result of \cite[Theorem 10]{alizadeh1997complementarity}, implies that $\hat{X}$ is unique. If $\rank(\hat{S})=n-1$,   it can be concluded that $\hat{X}$ has rank one from $\hat{S}\hat{X}=\0$.

\subsection*{B.3. Proof of \cref{lemma::qpsdp}}\label{app::B3}
Since $\hat{\rot}\in\C_1^n$ is a first-order critical point and $\hat{S}\succeq 0$, we conclude that $\hat{\rot}$ is a second-order critical point from \cref{lemma::qp}. Also it can be checked that $\hat{X}=\hat{\rot}^H\hat{\rot}\in\H^n$ satisfies (1) to (5) in \cref{lemma::sdp}, thus, $\hat{\rot}$ solves \eqref{eq::QP}, and $\hat{X}$ solves \eqref{eq::SDP} and is the unique global optimum for \eqref{eq::SDP} if $\rank(\hat{S})=n-1$. 

\subsection*{B.4. Proof of \cref{prop::sdp}}\label{app::B4}
In order to prove \cref{prop::sdp}, we need \cref{prop::Q,prop::E} as follows.
\begin{prop}\label{prop::Q}
If $\underline{M}\in \H^n$ is data matrix of the form \cref{eq::Q} that is constructed with the true (latent) relative measurements, and $\underline{\rot}\in \C_1^n$ is the true (latent) value of rotational states $\rot$, then $\underline{M}\,\underline{\rot}=\0$ and $\lambda_2(\underline{M})>0$. 
\end{prop}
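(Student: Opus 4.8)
The plan is to read both claims off the variational meaning of the data matrix together with the positive semidefiniteness $\underline{M}\succeq0$. By the marginalization in \cref{eq::qpp}--\cref{eq::sol_lin}, for every $\rot\in\C^n$ the quantity $\rot^H\underline{M}\rot$ equals the minimum over $\beta=(\lmk^\transpose,\tran^\transpose)^\transpose$ of the latent least-squares objective obtained from \eqref{eq::LSP} by replacing each measurement $\tilde{(\cdot)}_{ij}$ with its true value $\underline{(\cdot)}_{ij}$. This objective is a nonnegative sum of squares whose rotational terms $\kappa_{ij}|\rot_i\underline{\rot}_{ij}-\rot_j|^2$ do not involve $\beta$. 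Since $\underline{M}\succeq0$, we have $\rot\in\ker\underline{M}\iff\rot^H\underline{M}\rot=0\iff$ some $\beta$ makes every term vanish; in particular any $\rot\in\ker\underline{M}$ must satisfy the rotational gauge equations $\rot_j=\underline{\rot}_{ij}\rot_i$ for all $(i,j)\in\aEE$. This one observation drives both parts.

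To obtain $\underline{M}\,\underline{\rot}=\0$ I would substitute the true configuration. The latent consistency relations $\underline{\rot}_{ij}=\overline{\underline{\rot}}_i\underline{\rot}_j$, $\underline{\tran}_{ij}=\overline{\underline{\rot}}_i(\underline{\tran}_j-\underline{\tran}_i)$ and $\underline{\lmk}_{ij}=\overline{\underline{\rot}}_i(\underline{\lmk}_j-\underline{\tran}_i)$, combined with $|\underline{\rot}_i|=1$, make every residual of the latent objective vanish at $\rot=\underline{\rot}$ and $\beta=(\underline{\lmk}^\transpose,\underline{\tran}^\transpose)^\transpose$. Hence $\underline{\rot}^H\underline{M}\,\underline{\rot}=0$, and positive semidefiniteness upgrades this to $\underline{M}\,\underline{\rot}=\0$. (One may instead verify directly that $L(\underline{G}^\rot)\underline{\rot}=\0$ and that $\Omega^{1/2}\underline{T}\underline{\rot}\in\range(\Omega^{1/2}A(\aGG)^\transpose)=\ker(A(\aGG)\Omega^{1/2})^{\perp}$, so that $\Pi$ kills it; both routes lead to the same conclusion.)

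For $\lambda_2(\underline{M})>0$ it is enough to prove $\ker\underline{M}=\spn\{\underline{\rot}\}$, because $\underline{M}\succeq0$ already forces $\lambda_1=0$. Given any $\rot\in\ker\underline{M}$, the gauge equations above hold. Setting $w_i\triangleq\overline{\underline{\rot}}_i\rot_i$ and using $\underline{\rot}_{ij}=\overline{\underline{\rot}}_i\underline{\rot}_j$ with $|\underline{\rot}_j|=1$, they reduce to $w_i=w_j$ for every $(i,j)\in\aEE$. Connectivity of the pose subgraph $(\VV,\EE)$ -- which holds in graph-based SLAM through the odometric chain -- then forces $w$ to be a single constant $c$, so that $\rot=c\,\underline{\rot}$. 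With the first part this yields $\ker\underline{M}=\spn\{\underline{\rot}\}$, a one-dimensional kernel, and therefore $\lambda_2(\underline{M})>0$.

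The main obstacle is the one-dimensionality in the second part, and the point to get right is that it is governed solely by the rotational block. Landmarks enter $\underline{M}$ only through the extra positive-semidefinite summand $\underline{T}^H\Omega^{1/2}\Pi\Omega^{1/2}\underline{T}$ of \cref{eq::Q}, which can only shrink a kernel, so it never enlarges the nullspace beyond $\spn\{\underline{\rot}\}$; equivalently, conjugating $L(\underline{G}^\rot)$ by $\diag(\underline{\rot})$ produces the scalar weighted Laplacian of the pose graph, whose kernel is one-dimensional precisely when that graph is connected. I would therefore be careful to invoke connectivity of the pose subgraph rather than of $G$ alone: if two poses were linked only through a shared landmark, their rotations would be unconstrained and $\lambda_2$ could vanish, so isolating this hypothesis is essential to the argument.
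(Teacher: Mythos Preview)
Your argument is correct. For $\underline{M}\,\underline{\rot}=\0$ you and the paper proceed identically: plug the true configuration into the latent least-squares problem, obtain objective value zero, and use $\underline{M}\succeq 0$ to upgrade this to kernel membership.

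For $\lambda_2(\underline{M})>0$ the routes diverge. You characterize the kernel directly: any $\rot\in\ker\underline{M}$ forces every residual in the variational form to vanish, in particular the rotational ones, and then the substitution $w_i=\overline{\underline{\rot}}_i\rot_i$ together with connectivity of the pose graph pins the kernel to $\spn\{\underline{\rot}\}$. The paper instead splits $\underline{M}=L(\underline{G}^\rot)+\underline{M}^\sigma$, recognizes $L(\underline{G}^\rot)=\Xi\,L(W^\rot)\,\Xi^H$ with $\Xi=\diag(\underline{\rot})$ (the conjugation you allude to in your last paragraph), shows $\underline{M}^\sigma\succeq 0$ as a Schur complement of a Gram matrix, and then invokes the Weyl-type bound $\lambda_2(\underline{M})\ge\lambda_2(L(\underline{G}^\rot))+\lambda_1(\underline{M}^\sigma)>0$. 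Your route is slightly more elementary---no eigenvalue inequality or Schur-complement bookkeeping---while the paper's decomposition yields the explicit quantitative lower bound $\lambda_2(\underline{M})\ge\lambda_2(L(\underline{G}^\rot))$. Your caveat about needing connectivity of the \emph{pose} subgraph $(\VV,\EE)$, rather than merely of $G$, is well placed: the paper states ``$G$ is connected'' but the Laplacian it actually uses is that of the pose graph, so your more careful hypothesis is exactly what either argument requires.
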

\begin{proof}
For consistency, we assume that \eqref{eq::P} and \eqref{eq::QP} are formulated with the true (latent) relative measurements. Let $\underline{\lmk}\in \C^{n'}$ and $\underline{\tran}\in \C^n$ be the true (latent) value of landmark positions and translational states $\tran$, respectively, then $\underline{\xi}=\begin{bmatrix}
\underline{\lmk}^\transpose  & \underline{\tran}^\transpose  & \underline{\rot}^\transpose 
\end{bmatrix}^\transpose \in \C^{n'}\times \C^n\times \C_1^n$ solves \eqref{eq::P}, and the optimal objective value is $0$. Since \eqref{eq::QP} is equivalent to \eqref{eq::P}, it can be concluded that $\underline{\rot}\in \C_1^n$ solves \eqref{eq::QP}, and the optimal objective value of \eqref{eq::QP} is $0$ as well. Furthermore, since $\underline{M}\succeq 0$, we obtain $\underline{M}\,\underline{\rot}=\0$. Let $\Xi\triangleq\diag\{\underline{\rot}{}_1,\,\cdots,\,\underline{\rot}_n\}\in \C^{n\times n}$ and $L(W^\rot)\in \R^{n\times n}$ be the Laplacian such that
\begin{equation}
\nonumber
[L(W^\rot)]_{ij}\triangleq
\begin{cases}
\sum\limits_{(i,k)\in \mathcal{E}} \kappa_{ik},\hphantom{\tilde{\tran}_{ik}} & i=j,\\
-\kappa_{ij}, & (i,j)\in \mathcal{E},\\
0 & \mathrm{otherwise},
\end{cases}
\end{equation}
we obtain $L(\underline{G}^\rot)= \Xi L(W^\rot)\Xi^H$. It should be noted that $G$ is assumed to be connected, and as a result, $\lambda_2(L(\underline{G}^\rot))>0$ and $L(\underline{G}^\rot) \underline{\rot}=\0 $. Furthermore, it is by the definition of $\underline{M}$ or $\tM$ in \cref{eq::QP0} that
\begin{equation}\label{eq::Ml}
\underline{M} = L(\underline{G}^\rot) + \underline{M}^\sigma,
\end{equation} 
in which $\underline{M}^\sigma =\underline{\Sigma}^\rot-\underline{\Theta}^H\Lambda^\dagger\underline{\Theta}$. From \cref{eq::B1,eq::B2,eq::B3} and \eqref{eq::Qt}, we obtain that $\underline{M}^\sigma$ is the Schur complement of 
$$\begin{bmatrix}
B_1^HB_1 & B_1^HB_2\\
B_2^HB_1 & B_2^HB_2
\end{bmatrix}=
\begin{bmatrix}
B_1^H\\
B_2^H
\end{bmatrix}\begin{bmatrix}
B_1&
B_2
\end{bmatrix}\succeq 0,$$ 
which suggests that $\underline{M}^\sigma\succeq 0$ and $\lambda_1(\underline{M}^\sigma)\geq 0$. As a result of \cref{eq::Ml}, $\lambda_2(L(\underline{G}^\rot))>0$ and $\lambda_1(\underline{M}^\sigma)\geq 0$, we obtain 
$$\lambda_2(\underline{M})\geq \lambda_2(L(\underline{G}^\rot))+\lambda_1(\underline{M}^\sigma)>0,$$
which completes the proof. 
\end{proof}

\begin{prop}\label{prop::E}
	If $\underline{\rot}\in\C_1^n$ is the true (latent) value of $\rot\in \C_1^n$, and $\hat{\rot}$ solves \eqref{eq::QP}, and $d(\underline{\rot},\,\hat{\rot})\triangleq\min\limits_{\theta\in \R}\|\hat{\rot}-e^{\i\theta}\underline{\rot}\|$, then we obtain
\begin{equation}\label{eq::dnoise}
d(\underline{\rot},\,\hat{\rot})\leq 2\sqrt{\frac{n\|\tM-\underline{M}\|_2}{\lambda_2(\underline{M})}}
\end{equation}
\end{prop}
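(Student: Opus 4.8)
The plan is to combine the variational characterization of $\hat{\rot}$ as a minimizer of \eqref{eq::QP} with the spectral information about $\underline{M}$ supplied by \cref{prop::Q}. First I would use that $\hat{\rot}$ is globally optimal for \eqref{eq::QP} while $\underline{\rot}\in\C_1^n$ is merely feasible, so $\hat{\rot}^H\tM\hat{\rot}\leq\underline{\rot}^H\tM\underline{\rot}$. Writing $E\triangleq\tM-\underline{M}$ and invoking $\underline{M}\,\underline{\rot}=\0$ from \cref{prop::Q}, the right-hand side reduces to $\underline{\rot}^H E\underline{\rot}$, and the inequality rearranges to $\hat{\rot}^H\underline{M}\hat{\rot}\leq\underline{\rot}^H E\underline{\rot}-\hat{\rot}^H E\hat{\rot}$. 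Because every vector of $\C_1^n$ has squared norm $n$, each of the two quadratic forms in $E$ is bounded in magnitude by $n\|E\|_2$, which yields the upper estimate $\hat{\rot}^H\underline{M}\hat{\rot}\leq 2n\|\tM-\underline{M}\|_2$.

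Next I would produce a matching lower bound for $\hat{\rot}^H\underline{M}\hat{\rot}$ in terms of $d(\underline{\rot},\hat{\rot})$. Fix the phase $\theta^*$ attaining the minimum in the definition of $d$ and set $w\triangleq\hat{\rot}-e^{\i\theta^*}\underline{\rot}$, so that $\|w\|=d(\underline{\rot},\hat{\rot})$. Since $\underline{\rot}\in\ker\underline{M}$ and $\underline{M}$ is Hermitian, all terms carrying the factor $e^{\i\theta^*}\underline{\rot}$ vanish and $\hat{\rot}^H\underline{M}\hat{\rot}=w^H\underline{M}w$. Decomposing $w=w_\parallel+w_\perp$ into its component $w_\parallel$ along $\underline{\rot}$ and its orthogonal part $w_\perp$, and using that $\lambda_2(\underline{M})>0$ (again \cref{prop::Q}) forces $\ker\underline{M}=\spn(\underline{\rot})$ so that $\underline{M}$ restricted to $\underline{\rot}^\perp$ has every eigenvalue at least $\lambda_2(\underline{M})$, I obtain $w^H\underline{M}w=w_\perp^H\underline{M}w_\perp\geq\lambda_2(\underline{M})\|w_\perp\|^2$.

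The crux is to show that the optimal phase forces $\|w_\perp\|^2\geq\tfrac{1}{2}\|w\|^2$, i.e.\ that the phase ambiguity costs at most a factor of two. Here I would use the explicit minimizer: minimizing $\|\hat{\rot}-e^{\i\theta}\underline{\rot}\|^2=2n-2\Re(e^{-\i\theta}\underline{\rot}^H\hat{\rot})$ selects $\theta^*$ with $e^{-\i\theta^*}\underline{\rot}^H\hat{\rot}=|\underline{\rot}^H\hat{\rot}|$, whence $\|w\|^2=2(n-|\underline{\rot}^H\hat{\rot}|)$ and $\|w_\parallel\|^2=(n-|\underline{\rot}^H\hat{\rot}|)^2/n$. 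Setting $a\triangleq n-|\underline{\rot}^H\hat{\rot}|\in[0,n]$, the claim $\|w_\perp\|^2=2a-a^2/n\geq a$ reduces to $a\leq n$, which is exactly the Cauchy--Schwarz bound $|\underline{\rot}^H\hat{\rot}|\leq\|\underline{\rot}\|\,\|\hat{\rot}\|=n$. I expect this phase-alignment bookkeeping to be the only delicate point; everything else is routine spectral bounding.

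Finally I would chain the two estimates, $\tfrac{1}{2}\lambda_2(\underline{M})\,d(\underline{\rot},\hat{\rot})^2\leq\hat{\rot}^H\underline{M}\hat{\rot}\leq 2n\|\tM-\underline{M}\|_2$, and solve for $d(\underline{\rot},\hat{\rot})$ to recover exactly \eqref{eq::dnoise}. The division by $\lambda_2(\underline{M})$ is legitimate precisely because \cref{prop::Q} guarantees $\lambda_2(\underline{M})>0$, so the bound is well defined.
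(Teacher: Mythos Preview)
Your proof is correct and follows essentially the same route as the paper: both obtain the upper bound $\hat{\rot}^H\underline{M}\hat{\rot}\leq 2n\|\tM-\underline{M}\|_2$ from optimality plus $\underline{M}\,\underline{\rot}=\0$, then lower-bound $\hat{\rot}^H\underline{M}\hat{\rot}$ by $\lambda_2(\underline{M})$ times the squared norm of the component of $\hat{\rot}$ orthogonal to $\underline{\rot}$ (your $w_\perp$ is exactly the paper's $\hat{\rot}-\tfrac{1}{n}\underline{\rot}^H\hat{\rot}\,\underline{\rot}$), and finally compare this to $d(\underline{\rot},\hat{\rot})^2=2(n-|\underline{\rot}^H\hat{\rot}|)$ via Cauchy--Schwarz. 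One cosmetic slip: the inequality you label as Cauchy--Schwarz is actually $a\geq 0$ (i.e.\ $|\underline{\rot}^H\hat{\rot}|\leq n$), whereas $a\leq n$ is just $|\underline{\rot}^H\hat{\rot}|\geq 0$; both are needed and both hold, so the argument stands.
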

\begin{proof}
If we define $\Delta M\triangleq\tM-\underline{M}\in \H^n$ to be the perturbation matrix, then
\begin{equation}\label{eq::e1}
\underline{\rot}^H\tM\underline{\rot} = \underline{\rot}^H\underline{M}\,\underline{\rot} + \underline{\rot}^H\Delta M\underline{\rot} = \underline{\rot}^H\Delta M\underline{\rot}
\leq n\|\Delta M\|_2,
\end{equation}
in which, according to \cref{prop::Q}, $\underline{\rot}^H\underline{M}\,\underline{\rot}=0$. In addition, it should be noted that
\begin{equation}\label{eq::e3}
\underline{\rot}^H\tM\underline{\rot}\geq \hat{\rot}^H\tM\hat{\rot}
\end{equation}
and
\begin{equation}\label{eq::e2}
\hat{\rot}^H\tM\hat{\rot}=\hat{\rot}^H\underline{M}\hat{\rot}+\hat{\rot}^H\Delta {M}\hat{\rot}\geq \hat{\rot}^H\underline{M}\hat{\rot}- n\|\Delta M\|_2.
\end{equation}
From \cref{eq::e1,eq::e2,eq::e3}, we obtain 
\begin{equation}\label{eq::e4}
2n\|\Delta M\|_2\geq \hat{\rot}^H\underline{M}\hat{\rot}
\end{equation}
From \cref{prop::Q}, we obtain
\begin{equation}\label{eq::use}
\begin{aligned}
\hat{\rot}^H\underline{M}\hat{\rot} &= (\hat{\rot}-\frac{1}{n}\underline{\rot}^H\hat{\rot}\underline{\rot})^H\underline{M}(\hat{\rot}-\frac{1}{n}\underline{\rot}^H\hat{\rot}\underline{\rot})\\
&\geq \lambda_2(\underline{M})\|\hat{\rot}-\frac{1}{n}\underline{\rot}^H\hat{\rot}\underline{\rot}\|^2,
\end{aligned}
\end{equation}
in which the equality ``$=$'' uses $\underline{M}\,\underline{\rot}=\0$ and the inequality ``$\geq$'' uses  $\lambda_2(\underline{M})>\lambda_1(\underline{M})=0$ and $\underline{z}^H(\hat{\rot}-\frac{1}{n}\underline{\rot}^H\hat{\rot}\underline{\rot})=0.$
Furthermore, an algebraic manipulation indicates that
\begin{equation}\label{eq::norm}
\|\hat{\rot}-\frac{1}{n}\underline{\rot}^H\hat{\rot}\underline{\rot}\|^2=n-\frac{1}{n}|\underline{z}^H\hat{z}|^2.
\end{equation} 
From \cref{eq::norm,eq::use}, we obtain
\begin{equation}\label{eq::e5}
\begin{aligned}
\hat{\rot}^H\underline{M}\hat{\rot} &\geq \lambda_2(\underline{M})\|\hat{\rot}-\frac{1}{n}\underline{\rot}^H\hat{\rot}\underline{\rot}\|^2\\
& = \frac{1}{n}\lambda_2(\underline{M})(n^2-|\underline{\rot}^H\hat{\rot}|^2)\\
& \geq \lambda_2(\underline{M})(n-|\underline{\rot}^H\hat{\rot}|),
\end{aligned}
\end{equation}
in which the last inequality ``$\geq$'' uses the Cauchy-Schwarz inequality
$$|\underline{z}^H \hat{z}|\leq ||\underline{z}||\cdot\|\hat{z}\|=n.$$
Substituting \cref{eq::e5} into \cref{eq::e4} and simplifying the resulting equation, we obtain 
\begin{equation}\label{eq::e6}
n-|\underline{\rot}^H\hat{\rot}| \leq \frac{2n\|\Delta{M}\|_2}{\lambda_2(\underline{M})}.
\end{equation}
In addition, from \cite[Eq. (4.1)]{bandeira2017tightness}, it is known that $d(\underline{\rot},\,\hat{\rot})=\sqrt{2n-2|\underline{\rot}^H\hat{\rot}|}$, and then from \cref{eq::e6}, we further obtain \cref{eq::dnoise}, which completes the proof. 
\end{proof}

To prove \cref{prop::sdp}, we first decompose $\hat{S}=\tM-\Re({\ddiag(\tM\hat{\rot}^H\hat{\rot})})$ as follows:
\begin{equation}
\nonumber
\begin{aligned}
\hat{S}=&\tM-\Re({\ddiag(\tM\hat{\rot}^H\hat{\rot})})\\
=&\underline{M}+\Delta M-\\
&\hspace{3.8em}\Re\left\{\ddiag\left((\underline{M}+\Delta M)(\underline{\rot}+\Delta \rot)(\underline{\rot}+\Delta \rot)^H\right)\right\}\\
=&\underline{M}+\Delta M-\Re\big\{\ddiag(\underline{M}\Delta \rot \rot^H+\underline{M}\Delta \rot \Delta \rot^H+\\
&\hspace{2em}\underbrace{\hspace{9em}\Delta M(\underline{\rot}+\Delta \rot)(\underline{\rot}+\Delta \rot)^H)\big\}}_{\Delta S},
\end{aligned}
\end{equation}
in which $\underline{\rot}\in\C_1^n$ is the true (latent) value of $\rot\in \C_1^n$ such that $\underline{M}\,\underline{\rot}=\0$, and $\hat{\rot}$ solves \eqref{eq::QP}, and $\Delta \rot\triangleq \hat{\rot}-\underline{\rot}$. In addition, we assume $\|\hat{\rot}-\underline{\rot}\|=d(\hat{\rot},\,\underline{\rot})\triangleq \min\limits_{\theta\in \R}\|\hat{\rot}-e^{\i\theta}\underline{\rot}\|$. It is obvious that $\|\Delta S\|_2\rightarrow 0$ as long as $\|\Delta M\|_2\rightarrow 0$ and $\|\Delta \rot\|\rightarrow 0$, and by \cref{prop::E}, we obtain $\|\Delta \rot\|\rightarrow 0$ as long as $\|\Delta M\|_2\rightarrow 0$. As a result, from continuity, there exists some $\gamma > 0$ such that $\|\Delta S\|_2< \lambda_2(\underline{M})$ as long as $\|\Delta M\|_2 < \gamma$. Then we obtain
\begin{equation}
\nonumber
\lambda_i(\hat{S})\geq \lambda_i(\underline{M})-\|\Delta S\|_2 > \lambda_i(\underline{M})-\lambda_2(\underline{M})\geq 0
\end{equation}
for all $i\geq 2$, which implies that $\hat{S}$ at least has $n-1$ positive eigenvalues. In addition, by \cref{lemma::qp}, we obtain $\hat{S}\hat{\rot}=\0$, from which it can be concluded that $\hat{S}\succeq 0$ and $\rank(\hat{S})=n-1$. Furthermore, \cref{lemma::qpsdp} guarantees that $\hat{X}=\hat{\rot}\hat{\rot}^H\in\H^n$ is the unique optimum of \eqref{eq::SDP} if $\hat{S}\succeq 0$ and $\rank(\hat{S})=n-1$.

\section*{Appendix C.\; The Preconditioner For The Truncated Conjugate Gradient Method in CPL-SLAM}\label{app::C}

In this section, we propose a preconditioner for the truncated conjugate gradient method in CPL-SLAM that targets for planar graph-based SLAM with landmarks.

Similar to \cite{rosen2016se,briales2017cartan,rosen2017computational}, instead of factorizing the Riemannian Hessian matrix $\mathrm{Hess}\,F(\rot)\in \C^{n\times n}$ in \cref{eq::hess} to explicitly evaluate the descent direction, the Riemannian trust region (RTR) method in CPL-SLAM leverages the truncated conjugated gradient (TCG) method to approximate the descent direction with necessary accuracy. Though the TCG method is guaranteed to converge to the true solution within finite iterations, the rates of the convergence is closely related with the preconditioner $\mathrm{Precon}(\mathrm{Hess}\,F(\rot))\in \C^{n\times n}$ that is used to approximate $\mathrm{Hess}\,F(\rot)$ and iteratively solve
\begin{equation}
\nonumber
\mathrm{Precon}(\mathrm{Hess}\,F(\rot))\cdot a = b
\end{equation}
to evaluate the descent direction, in which $a,\,b\in \C^n$. 

In graph-based SLAM, several preconditioners have been proposed for the TCG method \cite{briales2017cartan,rosen2017computational}. For CPL-SLAM, an immediate choice of the preconditioner $\mathrm{Precon}(\mathrm{Hess}\,F(\rot))$ is $L(\widetilde{G}^\rot)+\widetilde{\Sigma}^\rot$, which is the submatrix of $\widetilde{\Gamma}$ in \cref{eq::problem} that corresponds to the rotational states,\footnote{A similar preconditioner is also used in SE-Sync \cite{rosen2017computational}.} and  such a preconditioner  works well for planar graph-based SLAM without landmarks. However, the preconditioner of $L(\widetilde{G}^\rot)+\widetilde{\Sigma}^\rot$ suffers slow convergence for planar graph-based SLAM with landmarks since the submatrix $L(\widetilde{G}^\rot)+\widetilde{\Sigma}^\rot$ loses information of pose-landmark measurements and results in a bad approximation of  $\mathrm{Hess}\,F(\rot)$.

In contrast to $L(\widetilde{G}^\rot)+\widetilde{\Sigma}^\rot$ that only captures the information of pose-pose measurements, the matrix
\begin{equation}
\tM=L(\widetilde{G}^\rot)+\widetilde{T}^H\Omega^{\frac{1}{2}}\Pi\Omega^{\frac{1}{2}}\widetilde{T}\in \C^{n\times n}
\end{equation}
in \eqref{eq::QP} implicitly but properly keeps the information of both pose-pose measurements and pose-landmark measurements. However, there is no exact expression of $\tM$ and we need to evaluate the equation above to factorize $\tM$. Furthermore, since $\tM$ can be a dense matrix, the resulting factorization of $\tM$ might be inefficient to solve 
\begin{equation}\label{eq::ma}
\tM\cdot a = b,
\end{equation}
which affects the performance of the TCG method. As a result, we need some other methods rather than evaluate and factorize $\tM$ explicitly to solve \cref{eq::ma}. 

It should be noted that the solution to 
\begin{equation}\label{eq::opt0}
\min_{x\in \C^n}\frac{1}{2} \innprod{x}{\tM x} - \innprod{x}{b},
\end{equation}
is also a solution to \cref{eq::ma}. From \cref{eq::problem,eq::qpp,eq::sol_lin,eq::QP0}, it is straightforward to show that \cref{eq::opt0} is equivalent to
\begin{equation}\label{eq::opt1}
\min_{x\in \C^n,\,x'\in\C^{n+n'}}\frac{1}{2} \innprod{\begin{bmatrix}
x'\\
x
	\end{bmatrix}}{\widetilde{\Gamma} \begin{bmatrix}
	x'\\
	x
	\end{bmatrix}} - \innprod{\begin{bmatrix}
	x'\\
	x
	\end{bmatrix}}{\begin{bmatrix}
	\0\\
	b
	\end{bmatrix}},
\end{equation}
and the solution $\begin{bmatrix}
a'\\
a
\end{bmatrix}\in \C^{2n+n'}$  to \cref{eq::opt1} can be computed in closed form as
\begin{equation}\label{eq::sol0}
\widetilde{\Gamma}\begin{bmatrix}
a'\\
a
\end{bmatrix}=\begin{bmatrix}
\0\\
b
\end{bmatrix},
\end{equation}
or equivalently,
\begin{equation}\label{eq::sol1}
\begin{bmatrix}
a'\\
a
\end{bmatrix}=\widetilde{\Gamma}^\dagger \begin{bmatrix}
\0\\
b
\end{bmatrix}.
\end{equation}
It is by definition that $a\in \C^n$ in \cref{eq::sol0,eq::sol1} is also a solution to \cref{eq::opt0,eq::ma}.
Therefore, we might factorize $\widetilde{\Gamma}$ and solve \cref{eq::sol0,eq::sol1} instead so as to solve \cref{eq::ma}. In \cref{eq::QP0,eq::ma,eq::opt0}, we construct the dense data matrix  $\tM$ using \cref{eq::sol_lin} to reduce the dimension of the optimization problem with no information loss. In \cref{eq::sol0,eq::sol1}, on the other hand, we essentially reverse the operation of \cref{eq::sol_lin} to recover the sparse matrix $\widetilde{\Gamma}$  by augmenting the dimension. Since $\widetilde{\Gamma}$ is a sparse matrix whose exact expression requires no extra computation,  it is more efficient to exploit the sparsity of $\widetilde{\Gamma}$ to solve \cref{eq::sol0,eq::sol1} than factorize the dense data matrix $\tM$ to solve \cref{eq::ma,eq::opt0}.

\section*{Appendix D.\; The Complex Oblique Manifold}\label{app::D}
In this section, we give a brief review of the Riemannian geometric concepts and operators of the complex oblique manifold that are used in this paper. A detailed introduction to Riemannian geometry and optimization can be found in \cite{absil2009optimization}.

In Riemannian geometry, the complex oblique manifold
\begin{equation}
\nonumber
\mathrm{OB}(r,n)\triangleq\{Y\in \C^{n\times r}|\ddiag(YY^H)=\I \}
\end{equation}
is a smooth and compact complex matrix manifold, whose tangent space at $Y\in \mathrm{OB}(r,n)$ is
$$T_Y \mathrm{OB}(r,n)\triangleq \{U\in \C^{n\times r}| \Re\{\ddiag(UY^H)\}=\Zero\}.$$
For any $U\in \C^{n\times r}$, we define the projection operator $\mathrm{proj}_Y:\C^{n\times r}\rightarrow T_Y \mathrm{OB}(r,n)$ to be
\begin{equation}
\nonumber
\mathrm{proj}_YU \triangleq U - \Re\{\ddiag(UY^H)\}Y.
\end{equation}7
For a smooth function $F: \mathrm{OB}(r,n)\rightarrow \R$, it is by definition that the Riemannian gradient for $F(Y)$  is
\begin{equation}
\nonumber
\mathrm{grad}\, F(Y)\triangleq \nabla F(Y)-\Re\{\ddiag(\nabla F(Y)Y^H)\}Y.
\end{equation}
Since we have assumed that the complex oblique manifold is a Riemannian submanifold of Euclidean space, then for any tangent vector $\dot{Y}\in T_Y \mathrm{OB}(r,n)$, the Riemannian Hessian for $F(Y)$ can be computed as
\begin{equation}
\nonumber
\mathrm{Hess}\, F(Y)[\dot{Y}]\triangleq \mathrm{proj}_Y\, \D\mathrm{grad}\, F(Y)[\dot{Y}], 
\end{equation}
in which $\mathrm{D}\,\mathrm{grad}\, F(\rot)[\dot{\rot}]$ is the direction derivative of $\mathrm{grad}\, F(\rot)$ along direction $\dot{\rot}$.
\bibliographystyle{IEEEtran}
\bibliography{mybib}

\begin{thebibliography}{10}
\providecommand{\url}[1]{#1}
\csname url@samestyle\endcsname
\providecommand{\newblock}{\relax}
\providecommand{\bibinfo}[2]{#2}
\providecommand{\BIBentrySTDinterwordspacing}{\spaceskip=0pt\relax}
\providecommand{\BIBentryALTinterwordstretchfactor}{4}
\providecommand{\BIBentryALTinterwordspacing}{\spaceskip=\fontdimen2\font plus
\BIBentryALTinterwordstretchfactor\fontdimen3\font minus
  \fontdimen4\font\relax}
\providecommand{\BIBforeignlanguage}[2]{{%
\expandafter\ifx\csname l@#1\endcsname\relax
\typeout{** WARNING: IEEEtran.bst: No hyphenation pattern has been}%
\typeout{** loaded for the language `#1'. Using the pattern for}%
\typeout{** the default language instead.}%
\else
\language=\csname l@#1\endcsname
\fi
#2}}
\providecommand{\BIBdecl}{\relax}
\BIBdecl

\bibitem{Cadena16tro-SLAMsurvey}
C.~Cadena, L.~Carlone, H.~Carrillo, Y.~Latif, D.~Scaramuzza, J.~Neira, I.~Reid,
  and J.~J. Leonard, ``Past, present, and future of simultaneous localization
  and mapping: Toward the robust-perception age,'' \emph{IEEE Transactions on
  Robotics}, 2016.

\bibitem{thrun2005probabilistic}
S.~Thrun, W.~Burgard, and D.~Fox, \emph{Probabilistic Robotics}.\hskip 1em plus
  0.5em minus 0.4em\relax MIT press, 2005.

\bibitem{taketomi2017visual}
T.~Taketomi, H.~Uchiyama, and S.~Ikeda, ``Visual {SLAM} algorithms: {A} survey
  from 2010 to 2016,'' \emph{IPSJ Transactions on Computer Vision and
  Applications}, vol.~9, no.~1, p.~16, 2017.

\bibitem{geiger2012we}
A.~Geiger, P.~Lenz, and R.~Urtasun, ``Are we ready for autonomous driving? {The
  KITTI} vision benchmark suite,'' in \emph{2012 IEEE Conference on Computer
  Vision and Pattern Recognition}.\hskip 1em plus 0.5em minus 0.4em\relax IEEE,
  2012, pp. 3354--3361.

\bibitem{kleiner2006rfid}
A.~Kleiner, J.~Prediger, and B.~Nebel, ``Rfid technology-based exploration and
  slam for search and rescue,'' in \emph{2006 IEEE/RSJ International Conference
  on Intelligent Robots and Systems}.\hskip 1em plus 0.5em minus 0.4em\relax
  IEEE, 2006, pp. 4054--4059.

\bibitem{kinsey2006survey}
J.~C. Kinsey, R.~M. Eustice, and L.~L. Whitcomb, ``A survey of underwater
  vehicle navigation: Recent advances and new challenges,'' in \emph{IFAC
  Conference of Manoeuvering and Control of Marine Craft}, vol.~88, 2006, pp.
  1--12.

\bibitem{dong2017agriculture}
J.~{Dong}, J.~G. {Burnham}, B.~{Boots}, G.~{Rains}, and F.~{Dellaert}, ``4d
  crop monitoring: Spatio-temporal reconstruction for agriculture,'' in
  \emph{2017 IEEE International Conference on Robotics and Automation (ICRA)},
  2017.

\bibitem{vysotska2016exploiting}
O.~Vysotska and C.~Stachniss, ``Exploiting building information from publicly
  available maps in graph-based {SLAM},'' in \emph{2016 IEEE/RSJ International
  Conference on Intelligent Robots and Systems (IROS)}.\hskip 1em plus 0.5em
  minus 0.4em\relax IEEE, 2016, pp. 4511--4516.

\bibitem{polvi2016slidar}
J.~Polvi, T.~Taketomi, G.~Yamamoto, A.~Dey, C.~Sandor, and H.~Kato, ``{SlidAR}:
  {A} 3d positioning method for {SLAM}-based handheld augmented reality,''
  \emph{Computers \& Graphics}, vol.~55, pp. 33--43, 2016.

\bibitem{grisetti2010tutorial}
G.~Grisetti, R.~Kummerle, C.~Stachniss, and W.~Burgard, ``A tutorial on
  graph-based {SLAM},'' \emph{IEEE Intelligent Transportation Systems
  Magazine}, 2010.

\bibitem{bertsekas1999nonlinear}
D.~P. Bertsekas, \emph{Nonlinear Programming}.\hskip 1em plus 0.5em minus
  0.4em\relax Athena Scientific, 1999.

\bibitem{lu1997globally}
F.~Lu and E.~Milios, ``Globally consistent range scan alignment for environment
  mapping,'' \emph{Autonomous robots}, vol.~4, no.~4, pp. 333--349, 1997.

\bibitem{duckett2002fast}
T.~Duckett, S.~Marsland, and J.~Shapiro, ``Fast, on-line learning of globally
  consistent maps,'' \emph{Autonomous Robots}, vol.~12, no.~3, pp. 287--300,
  2002.

\bibitem{frese2005multilevel}
U.~Frese, P.~Larsson, and T.~Duckett, ``A multilevel relaxation algorithm for
  simultaneous localization and mapping,'' \emph{IEEE Transactions on
  Robotics}, vol.~21, no.~2, pp. 196--207, 2005.

\bibitem{olson2006fast}
E.~Olson, J.~Leonard, and S.~Teller, ``Fast iterative alignment of pose graphs
  with poor initial estimates,'' in \emph{Proceedings 2006 IEEE International
  Conference on Robotics and Automation, 2006. ICRA 2006.}\hskip 1em plus 0.5em
  minus 0.4em\relax IEEE, 2006, pp. 2262--2269.

\bibitem{grisetti2007tree}
G.~Grisetti, C.~Stachniss, S.~Grzonka, and W.~Burgard, ``A tree
  parameterization for efficiently computing maximum likelihood maps using
  gradient descent.'' in \emph{Robotics: Science and Systems}, vol.~3, 2007,
  p.~9.

\bibitem{fan2019proximal}
T.~Fan and T.~Murphey, ``Generalized proximal methods for pose graph
  optimization,'' in \emph{International Symposium on Robotics Research
  (ISRR)}, 2019.

\bibitem{dellaert2006square}
F.~Dellaert and M.~Kaess, ``Square root sam: Simultaneous localization and
  mapping via square root information smoothing,'' \emph{The International
  Journal of Robotics Research}, vol.~25, no.~12, pp. 1181--1203, 2006.

\bibitem{kaess2008isam}
M.~Kaess, A.~Ranganathan, and F.~Dellaert, ``isam: Incremental smoothing and
  mapping,'' \emph{IEEE Transactions on Robotics}, vol.~24, no.~6, pp.
  1365--1378, 2008.

\bibitem{kaess2012isam2}
M.~Kaess, H.~Johannsson, R.~Roberts, V.~Ila, J.~J. Leonard, and F.~Dellaert,
  ``{iSAM2}: {Incremental} smoothing and mapping using the bayes tree,''
  \emph{The International Journal of Robotics Research}.

\bibitem{rosen2014rise}
D.~M. Rosen, M.~Kaess, and J.~J. Leonard, ``Rise: An incremental trust-region
  method for robust online sparse least-squares estimation,'' \emph{IEEE
  Transactions on Robotics}, 2014.

\bibitem{huang2010far}
S.~Huang, Y.~Lai, U.~Frese, and G.~Dissanayake, ``How far is slam from a linear
  least squares problem?'' in \emph{2010 IEEE/RSJ International Conference on
  Intelligent Robots and Systems}.\hskip 1em plus 0.5em minus 0.4em\relax IEEE,
  2010, pp. 3011--3016.

\bibitem{wang2013structure}
H.~Wang, G.~Hu, S.~Huang, and G.~Dissanayake, ``On the structure of
  nonlinearities in pose graph slam,'' \emph{Robotics: Science and Systems
  VIII}, p. 425, 2013.

\bibitem{kummerle2011g}
R.~K{\"u}mmerle, G.~Grisetti, H.~Strasdat, K.~Konolige, and W.~Burgard,
  ``$\mathrm{g}^2\mathrm{o}$: {A} general framework for graph optimization,''
  in \emph{2011 IEEE International Conference on Robotics and Automation}.

\bibitem{carlone2014angular}
L.~Carlone and A.~Censi, ``From angular manifolds to the integer lattice:
  Guaranteed orientation estimation with application to pose graph
  optimization,'' \emph{IEEE Transactions on Robotics}, vol.~30, no.~2, pp.
  475--492, 2014.

\bibitem{carlone2014fast}
L.~Carlone, R.~Aragues, J.~A. Castellanos, and B.~Bona, ``A fast and accurate
  approximation for planar pose graph optimization,'' \emph{The International
  Journal of Robotics Research}, vol.~33, no.~7, pp. 965--987, 2014.

\bibitem{khosoussi2015exploiting}
K.~Khosoussi, S.~Huang, and G.~Dissanyake, ``Exploiting the separable structure
  of slam,'' in \emph{Robotics: Science and systems}, 2015.

\bibitem{liu2012convex}
M.~Liu, S.~Huang, G.~Dissanayake, and H.~Wang, ``A convex optimization based
  approach for pose slam problems,'' in \emph{2012 IEEE/RSJ International
  Conference on Intelligent Robots and Systems}, 2012.

\bibitem{carlone2015duality}
L.~Carlone and F.~Dellaert, ``Duality-based verification techniques for {2D
  SLAM},'' in \emph{2015 IEEE International Conference on Robotics and
  Automation (ICRA)}, 2015.

\bibitem{carlone2016planar}
L.~Carlone, G.~C. Calafiore, C.~Tommolillo, and F.~Dellaert, ``Planar pose
  graph optimization: Duality, optimal solutions, and verification,''
  \emph{IEEE Transactions on Robotics}, 2016.

\bibitem{carlone2015lagrangian}
L.~Carlone, D.~M. Rosen, G.~Calafiore, J.~J. Leonard, and F.~Dellaert,
  ``Lagrangian duality in {3D SLAM}: Verification techniques and optimal
  solutions,'' in \emph{IEEE/RSJ International Conference on Intelligent Robots
  and Systems (IROS)}, 2015.

\bibitem{briales2016fast}
J.~Briales and J.~Gonzalez-Jimenez, ``Fast global optimality verification in 3d
  slam,'' in \emph{IEEE/RSJ International Conference on Intelligent Robots and
  Systems (IROS)}, 2016.

\bibitem{rosen2016se}
D.~M. Rosen, L.~Carlone, A.~S. Bandeira, and J.~J. Leonard, ``{SE-Sync}: {A
  certifiably correct algorithm for synchronization over the special Euclidean
  group},'' \emph{The International Journal of Robotics Research}, 2019.

\bibitem{briales2017cartan}
J.~Briales and J.~Gonzalez-Jimenez, ``Cartan-sync: Fast and global
  {SE}(d)-synchronization,'' \emph{IEEE Robotics and Automation Letters}, 2017.

\bibitem{mangelson2018guaranteed}
J.~G. Mangelson, J.~Liu, R.~M. Eustice, and R.~Vasudevan, ``Guaranteed globally
  optimal planar pose graph and landmark slam via sparse-bounded
  sums-of-squares programming,'' \emph{IEEE International Conference on
  Robotics and Automation (ICRA)}, 2019.

\bibitem{singer2011angular}
A.~Singer, ``Angular synchronization by eigenvectors and semidefinite
  programming,'' \emph{Applied and computational harmonic analysis}, vol.~30,
  no.~1, pp. 20--36, 2011.

\bibitem{singer2011three}
A.~Singer and Y.~Shkolnisky, ``Three-dimensional structure determination from
  common lines in cryo-em by eigenvectors and semidefinite programming,''
  \emph{SIAM journal on imaging sciences}, vol.~4, no.~2, pp. 543--572, 2011.

\bibitem{bandeira2017tightness}
A.~S. Bandeira, N.~Boumal, and A.~Singer, ``Tightness of the maximum likelihood
  semidefinite relaxation for angular synchronization,'' \emph{Mathematical
  Programming}, vol. 163, no. 1-2, pp. 145--167, 2017.

\bibitem{boumal2016nonconvex}
N.~Boumal, ``Nonconvex phase synchronization,'' \emph{SIAM Journal on
  Optimization}, vol.~26, no.~4, pp. 2355--2377, 2016.

\bibitem{eriksson2018rotation}
A.~Eriksson, C.~Olsson, F.~Kahl, and T.-J. Chin, ``Rotation averaging and
  strong duality,'' in \emph{Proceedings of the IEEE Conference on Computer
  Vision and Pattern Recognition}, 2018, pp. 127--135.

\bibitem{betke1997mobile}
M.~Betke and L.~Gurvits, ``Mobile robot localization using landmarks,''
  \emph{IEEE transactions on robotics and automation}, vol.~13, no.~2, pp.
  251--263, 1997.

\bibitem{bandeira2016note}
A.~S. Bandeira, ``A note on probably certifiably correct algorithms,''
  \emph{Comptes Rendus Mathematique}, vol. 354, no.~3, pp. 329--333, 2016.

\bibitem{boumal2016non}
N.~Boumal, V.~Voroninski, and A.~Bandeira, ``The non-convex {Burer-Monteiro}
  approach works on smooth semidefinite programs,'' in \emph{Advances in Neural
  Information Processing Systems}, 2016.

\bibitem{chirikjian2011stochastic}
G.~S. Chirikjian, \emph{Stochastic Models, Information Theory, and Lie Groups,
  Volume 2: Analytic Methods and Modern Applications}.\hskip 1em plus 0.5em
  minus 0.4em\relax Springer Science \& Business Media, 2011, vol.~2.

\bibitem{selig2004geometric}
J.~M. Selig, \emph{Geometric fundamentals of robotics}.\hskip 1em plus 0.5em
  minus 0.4em\relax Springer Science \& Business Media, 2004.

\bibitem{fan2019cpl_iros}
T.~Fan, H.~Wang, M.~Rubenstein, and T.~Murphey, ``Efficient and guaranteed
  planar pose graph optimization using the complex number representation,'' in
  \emph{IEEE/RSJ International Conference on Intelligent Robots and Systems
  (IROS)}, 2019.

\bibitem{dellaert2012factor}
F.~Dellaert, ``Factor graphs and {GTSAM}: A hands-on introduction,'' Georgia
  Institute of Technology, Tech. Rep., 2012.

\bibitem{khatri1977mises}
C.~Khatri and K.~Mardia, ``The von {Mises-Fisher} matrix distribution in
  orientation statistics,'' \emph{Journal of the Royal Statistical Society.
  Series B (Methodological)}, pp. 95--106, 1977.

\bibitem{carlone2013convergence}
L.~Carlone, ``A convergence analysis for pose graph optimization via
  {Gauss-Newton} methods,'' in \emph{2013 IEEE International Conference on
  Robotics and Automation}, 2013.

\bibitem{gallier2010schur}
J.~Gallier, ``The {Schur} complement and symmetric positive semidefinite (and
  definite) matrices,'' 2010.

\bibitem{absil2009optimization}
P.-A. Absil, R.~Mahony, and R.~Sepulchre, \emph{Optimization algorithms on
  matrix manifolds}.\hskip 1em plus 0.5em minus 0.4em\relax Princeton
  University Press, 2009.

\bibitem{absil2006joint}
P.-A. Absil and K.~A. Gallivan, ``Joint diagonalization on the oblique manifold
  for independent component analysis,'' in \emph{IEEE International Conference
  on Acoustics Speech and Signal Processing Proceedings}, 2006.

\bibitem{absil2007trust}
P.-A. Absil, C.~G. Baker, and K.~A. Gallivan, ``Trust-region methods on
  riemannian manifolds,'' \emph{Foundations of Computational Mathematics},
  vol.~7, no.~3, pp. 303--330, 2007.

\bibitem{boumal2015riemannian}
N.~Boumal, ``A {R}iemannian low-rank method for optimization over semidefinite
  matrices with block-diagonal constraints,'' \emph{arXiv preprint
  arXiv:1506.00575}, 2015.

\bibitem{latif2014robust}
Y.~Latif, C.~Cadena, and J.~Neira, ``Robust graph slam back-ends: A comparative
  analysis.''

\bibitem{nasiri2018recursive}
S.~M. Nasiri, R.~Hosseini, and H.~Moradi, ``A recursive least square method for
  3d pose graph optimization problem,'' \emph{arXiv preprint arXiv:1806.00281},
  2018.

\bibitem{tron2015inclusion}
R.~Tron, D.~M. Rosen, and L.~Carlone, ``On the inclusion of determinant
  constraints in {L}agrangian duality for 3d slam,'' \emph{Robot. Sci. Syst.
  Work.“The Probl. Mob. Sensors}, 2015.

\bibitem{carlone2015initialization}
L.~Carlone, R.~Tron, K.~Daniilidis, and F.~Dellaert, ``Initialization
  techniques for {3D SLAM}: a survey on rotation estimation and its use in pose
  graph optimization,'' in \emph{IEEE International Conference on Robotics and
  Automation (ICRA)}, 2015.

\bibitem{meyer2000matrix}
C.~D. Meyer, \emph{Matrix analysis and applied linear algebra}.\hskip 1em plus
  0.5em minus 0.4em\relax {SIAM}, 2000, vol.~71.

\bibitem{alizadeh1997complementarity}
F.~Alizadeh, J.-P.~A. Haeberly, and M.~L. Overton, ``Complementarity and
  nondegeneracy in semidefinite programming,'' \emph{Mathematical programming},
  1997.

\bibitem{rosen2017computational}
D.~Rosen and L.~Carlone, ``Computational enhancements for certifiably correct
  slam,'' in \emph{IEEE/RSJ International Conference on Intelligent Robots and
  Systems (IROS)}, 2017.

\end{thebibliography}

\begin{IEEEbiography}[{\includegraphics[width=1in,height=1.25in,clip,keepaspectratio]{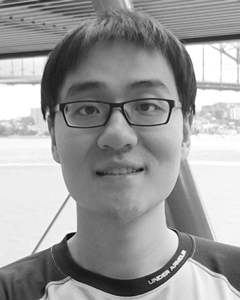}}]{Taosha Fan}
received his B.E. degree in automotive engineering from Tongji University, Shanghai, China, in 2013, and M.S. degrees in mechanical engineering and mathematics from Johns Hopkins University, Baltimore, MD, USA, in 2015. He is currently a Ph.D. candidate in mechanical engineering at Northwestern University, Evanston, IL, USA. His research interests lie at the intersection of robotic control, simulation and estimation.
\end{IEEEbiography}

\begin{IEEEbiography}[{\includegraphics[width=1in,height=1.25in,clip,keepaspectratio]{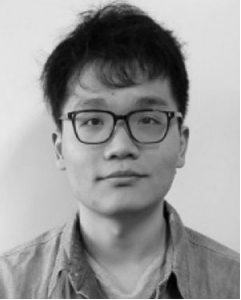}}]{Hanlin Wang}
received the B.E. degree in mechanical engineering from Shanghai Jiao Tong University, Shanghai, China, in 2015. He is currently working toward the Ph.D. degree in computer science with the Department of Computer Science, Northwestern University, Evanston, IL, USA. His research interests include swarm systems, planning and scheduling, and distributed computation systems.
\end{IEEEbiography}

\begin{IEEEbiography}[{\includegraphics[width=1in,height=1.25in,clip,keepaspectratio]{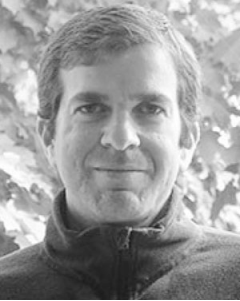}}]{Michael Rubenstein}
received the Ph.D. degree in computer science from the University of Southern California, Los Angeles, CA, USA, in 2009. He is currently an Assistant Professor working on swarm robotics and control with the Department of Computer Science as well as the Department of Mechanical Engineering, Northwestern University, Evanston, IL, USA. His research interests include robot swarms and multirobot systems.
\end{IEEEbiography}

\begin{IEEEbiography}[{\includegraphics[width=1in,height=1.25in,clip,keepaspectratio]{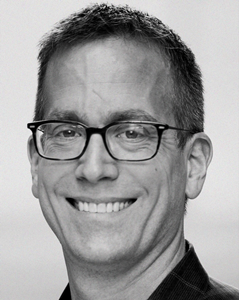}}]{Todd Murphey}
received his B.S. degree in mathematics from the University of Arizona and the Ph.D. degree in Control and Dynamical Systems from the California Institute of Technology.
He is a Professor of Mechanical Engineering at Northwestern University. His laboratory is part of the Neuroscience and Robotics Laboratory, and his research interests include robotics, control, computational methods for biomechanical systems, and computational neuroscience.
Honors include the National Science Foundation CAREER award in 2006, membership in the 2014-2015 DARPA/IDA Defense Science Study Group, and Northwestern’s Professorship of Teaching
Excellence. He was a Senior Editor of the IEEE Transactions on Robotics.
\end{IEEEbiography}
\end{document}